\documentclass{article}

\PassOptionsToPackage{numbers}{natbib}

\usepackage[main, final]{neurips_2025}

\usepackage[utf8]{inputenc} 
\usepackage[T1]{fontenc}    
\usepackage{hyperref}       
\usepackage{url}            
\usepackage{booktabs}       
\usepackage{amsfonts}       
\usepackage{nicefrac}       
\usepackage{microtype}      
\usepackage{xcolor}        

\usepackage{amsmath}
\usepackage{amssymb}
\usepackage{mathtools}
\usepackage{amsthm}
\usepackage{nicefrac}
\usepackage{multirow}
\usepackage{tikz-cd} 
\usepackage{subcaption}
\usepackage{multicol,caption}
\usepackage{xspace}  

\usepackage{wrapfig}

\usepackage[flushleft]{threeparttable} 
\usepackage{colortbl}
\definecolor{bgcolor}{rgb}{0.8,1,1}
\definecolor{bgcolor2}{rgb}{0.8,1,0.8}
\definecolor{niceblue}{rgb}{0.0,0.19,0.56}

\usepackage{hyperref}
\hypersetup{colorlinks,linkcolor={blue},citecolor={niceblue},urlcolor={blue}}

\usepackage{pifont}
\definecolor{PineGreen}{RGB}{0,110,51}
\definecolor{BrickRed}{RGB}{143,20,2}

\usepackage{thmtools}

\definecolor{shadecolor}{gray}{0.9}
\declaretheoremstyle[
  headfont=\normalfont\bfseries,
  notefont=\mdseries, notebraces={(}{)},
  bodyfont=\normalfont,
  postheadspace=0.5em,
  spaceabove=0pt,
  spacebelow=0pt,
  mdframed={
    skipabove=\baselineskip,
    skipbelow=\baselineskip,
    hidealllines=true,
    backgroundcolor=shadecolor,
    innertopmargin=\topskip,      
    innerbottommargin=\topskip,   
    innerleftmargin=4pt,
    innerrightmargin=4pt
  }
]{shaded}

\declaretheorem[style=shaded,within=section]{definition}
\declaretheorem[style=shaded,sibling=definition]{theorem}

\declaretheorem[style=shaded,sibling=definition]{assumption}
\declaretheorem[style=shaded,sibling=definition]{corollary}

\declaretheorem[style=shaded,sibling=definition]{lemma}

\usepackage[colorinlistoftodos,bordercolor=orange,backgroundcolor=orange!20,linecolor=orange,textsize=scriptsize]{todonotes}

\newcommand{\algname}[1]{{\color{PineGreen}\sf  #1}\xspace}

\newcommand{\vx}{{\mathbf{x}}}
\newcommand{\vy}{{\mathbf{y}}}

\newcommand{\vA}{{\mathbf{A}}}
\newcommand{\vB}{{\mathbf{B}}}

\newcommand{\vI}{{\mathbf{I}}}

\newcommand{\cD}{{\mathcal{D}}}

\newcommand{\cL}{{\mathcal{L}}}

\newcommand{\cO}{{\mathcal{O}}}

\newcommand{\EE}{{\mathbb{E}}}

\usepackage[bb=boondox]{mathalfa}

\usepackage{xparse}
\DeclareFontFamily{U}{ntxmia}{}
\DeclareFontShape{U}{ntxmia}{m}{it}{<-> ntxmia }{}
\DeclareFontShape{U}{ntxmia}{b}{it}{<-> ntxbmia }{}
\DeclareSymbolFont{lettersA}{U}{ntxmia}{m}{it}
\SetSymbolFont{lettersA}{bold}{U}{ntxmia}{b}{it}

\ExplSyntaxOn
\NewDocumentCommand{\varmathbb}{m}
 {
  \tl_map_inline:nn { #1 }
   {
    \use:c { varbb##1 }
   }
 }
\tl_map_inline:nn { ABCDEFGHIJKLMNOPQRSTUVWXYZ }
 {
  \exp_args:Nc \DeclareMathSymbol{varbb#1}{\mathord}{lettersA}{\int_eval:n { `#1+67 }}
 }
\exp_args:Nc \DeclareMathSymbol{varbbk}{\mathord}{lettersA}{169}
\ExplSyntaxOff

\DeclareMathOperator*{\argmin}{argmin}

\newcommand{\reals}{\mathbb{R}}

\newcommand{\opF}{{\varmathbb{F}}}

\newcommand{\inprod}[2]{\left\langle #1,#2 \right\rangle}
\newcommand{\sqnorm}[1]{\left\| #1 \right\|^2}
\newcommand{\norm}[1]{\left\|#1\right\|}

\newcommand{\fullname}{Per-Player Local SGD}
\newcommand{\abbvname}{\algname{PEARL-SGD}}

\usepackage{amsmath,amssymb,amsthm,array}
\usepackage{algorithm}
\usepackage{caption}

\usepackage{graphicx}
\usepackage{booktabs} 
\usepackage{float}

\usepackage[capitalise]{cleveref}

\usepackage{algorithm}
\usepackage{algpseudocode}

\usepackage{enumitem}

\usepackage{tikz}

\theoremstyle{plain}

\theoremstyle{remark}

\usepackage{mdframed} 
\usepackage{thmtools}

\usepackage[flushleft]{threeparttable} 

\usepackage{caption}
\usepackage{multirow}
\usepackage{colortbl}
\definecolor{bgcolor}{rgb}{0.8,1,1}
\definecolor{bgcolor2}{rgb}{0.8,1,0.8}
\definecolor{niceblue}{rgb}{0.0,0.19,0.56}

\hypersetup{colorlinks,linkcolor={blue},citecolor={niceblue},urlcolor={blue}}

\usepackage{sidecap}

\definecolor{shadecolor}{gray}{0.9}
\declaretheoremstyle[
headfont=\normalfont\bfseries,
notefont=\mdseries, notebraces={(}{)},
bodyfont=\normalfont,
postheadspace=0.5em,
spaceabove=1pt,
mdframed={
  skipabove=8pt,
  skipbelow=8pt,
  hidealllines=true,
  backgroundcolor={shadecolor},
  innerleftmargin=4pt,
  innerrightmargin=4pt}
]{shaded}

\usepackage{xspace}

\makeatletter
\algrenewtext{For}[1]{%
  \ifnum\pdfstrcmp{#1}{$i=1,\dots,n$}=0
    \textbf{for each players }#1\ \textbf{in parallel do}
  \else
    \textbf{for }#1\ \textbf{do}
  \fi
}
\makeatother

\title{Multiplayer Federated Learning: \\Reaching Equilibrium with Less Communication}

\author{
  TaeHo Yoon \quad \quad Sayantan Choudhury \quad \quad Nicolas Loizou\\
  Department of Applied Mathematics \& Statistics\\
  Mathematical Institute for Data Science \\
  Johns Hopkins University\\
  \texttt{\{tyoon7,schoudh8,nloizou\}@jhu.edu} 
}

\begin{document}

\maketitle

\begin{abstract}
Traditional Federated Learning (FL) approaches assume collaborative clients with aligned objectives working toward a shared global model. However, in many real-world scenarios, clients act as rational players with individual objectives and strategic behavior, a concept that existing FL frameworks are not equipped to adequately address. To bridge this gap, we introduce \textit{Multiplayer Federated Learning (MpFL)}, a novel framework that models the clients in the FL environment as players in a game-theoretic context, aiming to reach an equilibrium. In this scenario, each player tries to optimize their own utility function, which may not align with the collective goal.
Within MpFL, we propose \emph{Per-Player Local Stochastic Gradient Descent}
(\abbvname), an algorithm in which each player/client performs local updates independently and periodically communicates with other players.
We theoretically analyze \abbvname and prove that it reaches a neighborhood of equilibrium with less communication in the stochastic setting compared to its non-local counterpart.
Finally, we verify our theory through numerical experiments.
\end{abstract}

\section{Introduction}
Federated Learning (FL) has emerged as a powerful collaborative learning paradigm where multiple clients jointly train a machine learning model without sharing their local data. In the classical FL setting, a central server coordinates multiple clients (e.g., mobile or edge devices) to collaboratively learn a shared global model without exchanging their own training data~\citep{kairouz2021advances, konevcny2016federated,mcmahan2017communication, li2020federated}. In this scenario, each client performs local computations on its private data and periodically communicates model updates to the server, which aggregates them to update the global model. This collaborative approach has been successfully applied in various domains, including natural language processing~\citep{liu2021federated,hard2018federated}, computer vision~\citep{LiuHuangLuoHuangLiuChenFengChenYuYang2020_fedvision, li2021survey}, and healthcare~\citep{antunes2022federated, xu2021federated}.

Despite their success, traditional FL frameworks rely on the key assumption that all participants are fully cooperative and share aligned objectives, collectively working to optimize the performance of a shared global model (e.g., minimizing the average of individual loss functions). This assumption overlooks situations where participants have individual objectives or competitive interests that may not align with the collective goal. 
A variety of such scenarios have been extensively considered in the game theory literature, including Cournot competition in economics \citep{AhmedAgiza1998_dynamics}, optical networks \citep{PanPavel2007_global}, electricity markets \citep{SaadHanPoorBasar2012_gametheoretic}, energy consumption control in smart grid \citep{YeHu2017_game}, or mobile robot control \citep{KalyvaPsillakis2024_distributed}. 
These applications have yet to be associated with FL, presenting an underexplored opportunity to bridge game theory and FL for more robust and realistic frameworks.

To address these limitations of classical FL approaches, we propose a novel framework, \textit{Multiplayer Federated Learning (MpFL)}, which models the FL process as a game among rational players with individual utility functions. In MpFL, each participant is considered a player who aims to optimize their own objective while interacting strategically with other players in the network via a central server. This game-theoretic perspective acknowledges that participants may act in their self-interest, have conflicting goals, or be unwilling to fully cooperate. By incorporating these dynamics, MpFL provides a more realistic and flexible foundation for FL in competitive and heterogeneous environments. 

In the literature, there are multiple strategies that aim to incorporate personalization into classical FL, including multi-task learning~\citep{SmithChiangSanjabiTalwalkar2017_federated, mills2021multi}, transfer learning~\citep{KhodakBalcanTalwalkar2019_adaptive}, and mixing of the local and global models~\citep{hanzely2020federated, HanzelyHanzelyHorvathRichtarik2020_lower}, to name a few. However, to the best of our knowledge, none of them can formulate the behavior of the clients/players in a non-cooperative environment. This gap is precisely what Multiplayer Federated Learning (MpFL) aims to address. 

\subsection{Main contributions}

\begin{itemize}[leftmargin=*]
\setlength{\itemsep}{2pt}
    \item \textbf{Introducing Multiplayer Federated Learning.} We develop a novel framework of \emph{Multiplayer Federated Learning (MpFL)}, which models the FL process as a game among rational players with individual utility functions. In MpFL, each client within the FL environment is viewed as a player of the game, and their local models are viewed as their actions.
    Each player constantly adjusts their model (action) to optimize their own objective function, and the MpFL framework aims for each player to reach to a Nash equilibrium by collaboratively training their model under the orchestration of a central server (e.g., service provider), while keeping the training data decentralized. That is, MpFL extends the scope of FL to scenarios where clients are allowed to have more general, diversified, possibly competing objectives.
    
    \item \textbf{Design and analysis of Per-Player Local SGD.} To handle the Multiplayer Federated Learning framework, we introduce \emph{Per-Player Local SGD} (\abbvname), a new algorithm inspired by the stochastic gradient descent ascent method in minimax optimization, that is able to handle the competitive nature of the players/clients. In \abbvname, each player performs local SGD steps independently on their own actions/strategies (keeping the strategies of the other players fixed), and the udpated actions/models are periodically communicated with the other players of the network via a central server.
    
    \item \textbf{Convergence guarantees for \abbvname on heterogeneous data.} We provide tight convergence guarantees for \abbvname, in both deterministic and stochastic regimes with heterogeneous data (see Table~\ref{table:compare_PEARL_SGD} for a summary of our results). 
    \begin{itemize}[leftmargin=*]
    \setlength{\itemsep}{2pt}
        \item \textbf{Deterministic setting:}  For the full-batch (deterministic) variant of \abbvname, we prove that under suitable assumptions, \abbvname converges linearly to an equilibrium for any communication period $\tau > 1$, provided that the constant step-size $\gamma$ is sufficiently small (see~\cref{theorem:deterministic-local-GDA}).
        \item \textbf{Stochastic setting:} In its more general version, \abbvname assumes that each player uses an unbiased estimator of its gradient in the update rule. For this setting, we provide two Theorems based on two different step-size choices:
        \begin{itemize}[leftmargin=*]
            \item \textit{Constant step-size:} We show that under the same assumptions as in the deterministic case, \abbvname converges linearly to a neighborhood of equilibrium (see \cref{theorem:stochastic-local-GDA}). 
            In \cref{corollary:stochastic-plog-T-bound}, we show that with appropriate step-size depending on the total number of local SGD iterations $T$, \abbvname achieves $\Tilde{\cO}(1/T)$ convergence rate with improved communication complexity when $T$ is sufficiently large.
            \item \textit{Decreasing step-size rule:} We prove that \abbvname converges to an exact equilibrium (without neighborhood of convergence) with sublinear convergence (see \cref{theorem:stochastic-plog-diminishing-stepsize}). In this scenario, the asymptotic rate and communication complexity are essentially the same as in \cref{corollary:stochastic-plog-T-bound}, but this result does not require the step-sizes to depend on $T$.
        \end{itemize}
    \end{itemize}

    \item \textbf{Numerical Evaluation:} We provide numerical experiments verifying our theoretical results and show the benefits in terms of communications of \abbvname over its non-local counterpart in the MpFL settings. 
\end{itemize}

\begin{table}[h]\label{table:PEARL_SGD} 
    \centering
    \caption{
    Summary of theoretical results for \abbvname.
    \Cref{theorem:deterministic-local-GDA} considers the full-batch (deterministic) scenario.
    \Cref{theorem:stochastic-local-GDA} and \cref{theorem:stochastic-plog-diminishing-stepsize} both considers the general stochastic case.
    These results differ in the step-size choice; the former uses a constant step-size, while the latter uses decreasing step-sizes.
    In the \textit{Convergence} column, ``Linear'' and ``Sublinear'' indicates the convergence rate, ``Exact'' refers to convergence to an equilibrium, and ``Neighborhood'' refers to convergence to a neighborhood of an equilibrium.
    }
    \label{table:compare_PEARL_SGD}
    \renewcommand{\arraystretch}{1.2}
        \begin{tabular}{c  c  c  c}
            \toprule
            \multicolumn{1}{c }{\textit{Theorem}}
            & 
            \multicolumn{1}{c }{
                \shortstack{\textit{Setting}}
            }
            & 
            \multicolumn{1}{c }{
                \shortstack{\textit{Step-size}}
            }
            &
            \multicolumn{1}{c}{
                \shortstack{\textit{Convergence}}
            }
            \\
            \midrule
            \hline
            \multicolumn{1}{c }
            {\Cref{theorem:deterministic-local-GDA} }
            & Deterministic & Constant & \begin{tabular}{c} Linear$+$Exact \end{tabular}
            \\
            \hline
            \multicolumn{1}{c }
            {\Cref{theorem:stochastic-local-GDA}} 
            & Stochastic & \begin{tabular}{c} Constant \end{tabular} & \begin{tabular}{c} Linear$+$Neighborhood \end{tabular}
            \\
            \hline 
            \multicolumn{1}{c }
            {\Cref{theorem:stochastic-plog-diminishing-stepsize}} 
            & Stochastic & \begin{tabular}{c} Decreasing \end{tabular} & \begin{tabular}{c} Sublinear$+$Exact \end{tabular}
            \\
            \bottomrule
       \end{tabular}
\end{table}

\section{Multiplayer Federated Learning: Definition and Related Work}
\label{section:MpFL-concept}

In this section, we introduce the Multiplayer Federated Learning (MpFL) framework and explain its main differences compared to the classical FL~\citep{kairouz2021advances}, federated minimax optimization \citep{deng2021local, sharma2022federated, zhang2023communication} and personalized FL \citep{FallahMokhtariOzdaglar2020_personalized, T.DinhTranNguyen2020_personalized}. 

\subsection{Definition of MpFL}

\emph{Multiplayer Federated Learning (MpFL)} is a machine learning setting that combines the benefits of a game-theoretic formulation with classical federated learning. In this setting, the problem is an $n$-player game in which multiple players/clients (e.g.\ mobile devices or whole organizations) communicate with each other via a central server (e.g.\ service provider) to reach equilibrium. 
That is, reach a set of strategies where no player can unilaterally deviate from their strategy to achieve a better payoff, given the strategies chosen by all other players. 

In classical $n$-player games, communication between players was assumed to be cheap, easy, and straightforward, mainly because all players were in close proximity and had direct access to one another. This assumption made communication an insignificant concern in typical game theory analysis. However, with the advent of new large-scale machine learning applications, this is no longer the case. 
Communication between players can be expensive and challenging, especially in distributed systems where the clients/players are geographically dispersed or operate under communication constraints. Addressing communication costs and designing communication-efficient algorithms for $n$-player games have become increasingly important, and this is precisely the challenge that Multiplayer Federated Learning aims to address.

\paragraph{Equilibrium in $n$-player game.} Let $x^i \in \reals^{d_i}$ denote the action of player $i = 1,\dots,n$ and let $\vx = (x^1,\dots,x^n) \in \reals^{D} = \reals^{d_1 + \cdots + d_n}$ be the joint action/strategy vector of all players. 
Let $f_i (x^1,\dots,x^n) \colon \reals^{d_1+\dots+d_n} \to \reals$ be the function of the player $i$ (which player $i$ prefers to minimize in $x^i$) and let $x^{-i} = (x^1,\dots,x^{i-1},x^{i+1},\dots,x^n) \in \reals^{D - d_i}$ be the vector containing all players' actions except that of player 
$i$. 
With this notation in place, the goal of an $n$-player game is to find an \emph{equilibrium}, a joint action $\vx_\star = (x_\star^1, \dots, x_\star^n) \in \reals^D$, formally expressed as:
\begin{align}
\label{eqn:equilibrium}
\begin{split}
    \underset{\vx_\star = (x_\star^1,\dots,x_\star^n) \in \reals^D}{\text{find}} \quad f_i(x_\star^i; x_\star^{-i}) \le f_i (x^i; x_\star^{-i}), \quad
    \forall x^i \in \reals^{d_i}, \quad \forall i \in [n],
\end{split}
\end{align}
where $f_i (x^i ; x^{-i}) = f_i (x^1,\dots,x^n)$. 

\begin{wrapfigure}{r}{0.48\textwidth}
\vspace{-1.4cm}
   \centering
\includegraphics[width=0.48\textwidth]{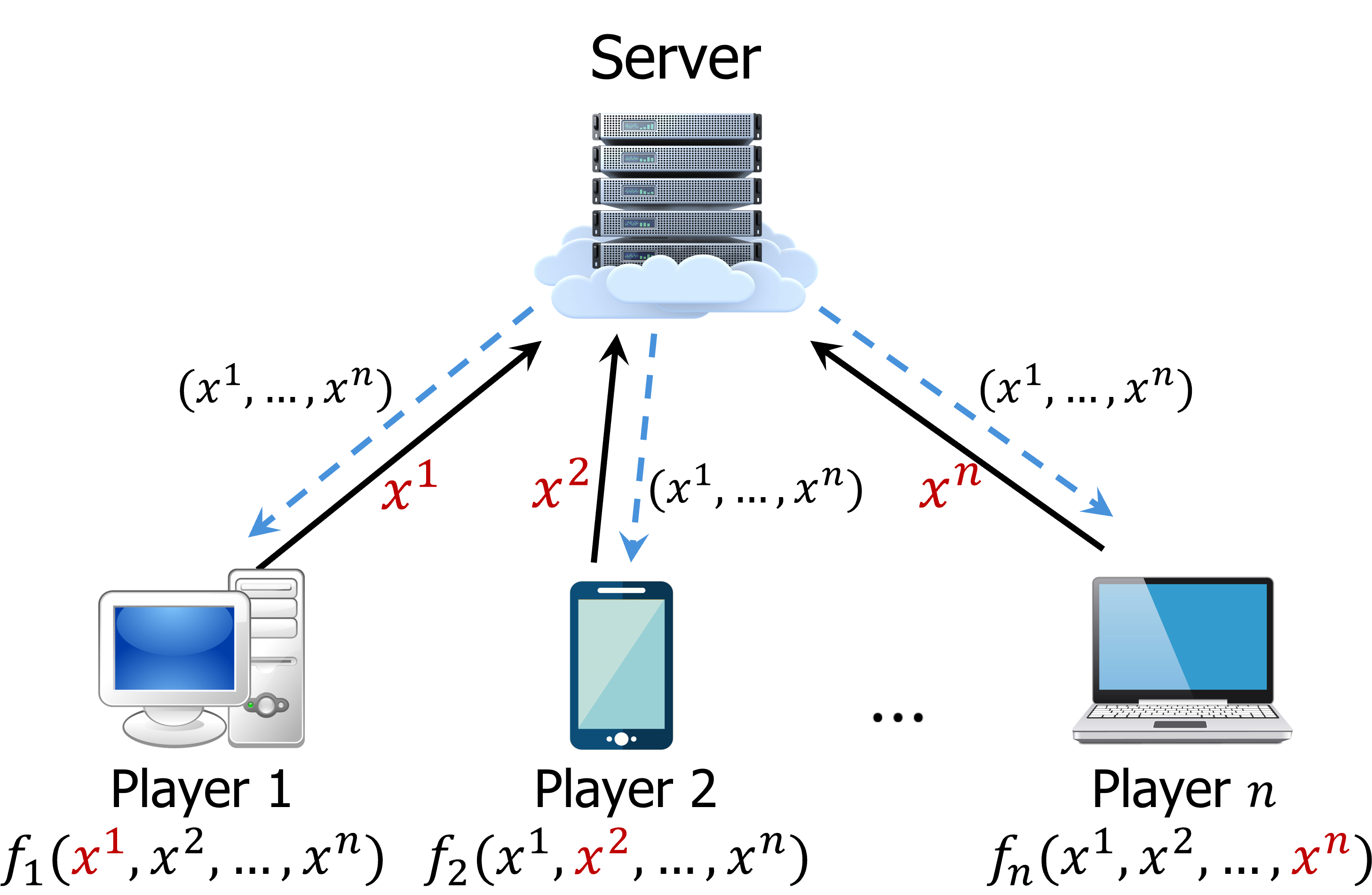} 
    \caption{Illustration of MpFL for heterogeneous functions $f_i$. The goal is for each player to reach the equilibrium $\vx_\star = (x_\star^1, \dots, x_\star^n)$ (see \eqref{eqn:equilibrium}) with as little communication as possible.}
    \label{fig:MpFL_illustration}
\vspace{-0.4cm}
\end{wrapfigure}

\paragraph{MpFL.}
As mentioned above, in the setting of interest of this work, we focus on an $n$-player game in which multiple players communicate via a central server to reach an equilibrium.  In this setting, each player of the $n$-player game represents a client to the system (see Figure~\ref{fig:MpFL_illustration}).
Mathematically, the problem is formulated as solving \eqref{eqn:equilibrium} with
\[
    f_i (x^1,\dots,x^n) = \EE_{\xi^i \sim \cD_i} \left[ f_{i,\xi^i} (x^1,\dots,x^n) \right] .
\]
Here $\cD_i$ denotes the data distribution of the $i$-th player, $f_{i,\xi^i}$ is the loss of the $i$-th player for a data point $\xi^i$ sampled from $\cD_i$.

\newpage

In our proposed FL environment, each client/player uses the strategies of all players to execute local updates. In particular, each player keeps the other players' strategies fixed and updates their own value, which is later shared with the master server, which concatenates all new strategies and sends them back to all players.  Later, in Section~\ref{section:algorithm} we introduce and analyze \cref{alg:playerwise-local-gradient}, named \emph{\fullname}~(\abbvname), which formalizes the above setting. 

Similarly to classical FL, our setting allows for \emph{heterogeneous (non-iid) data} as we have no restrictive assumption on the data distribution $\cD_i$ or the similarity between the functions of the players. 

\paragraph{Assumptions on multiplayer game.} Let us present the main assumptions on the functions of the multiplayer game, which we later use to provide the convergence analysis for \abbvname---objective function $f_i$ of each player $i \in [n]$ is convex and smooth.

Throughout this work, we denote the partial derivative of $f_i$ with respect to $x^i$ as $\nabla_{x^i} f_i (x^1,\dots,x^n) = \nabla f_i (x^i; x^{-i})$.
This convention allows us to remove the cumbersome subscript $x^i$ from the $\nabla$ notation, with the understanding that we only differentiate $f_i$ with respect to $x^i$ but never with $x^{-i}$.

\vspace{-5mm}

\noindent
\begin{assumption}[\textbf{\emph{Convex (CVX)}}]
\label{assumption:convexity}
For $i\in [n]$, for any $x^{-i} \in \reals^{D-d_i}$, the local function $f_i(\cdot; x^{-i}) \colon \reals^{d_i} \to \reals$ is convex. 
That is, for any $x^i, y^i \in \reals^{d_i}$ and $x^{-i} \in \reals^{D-d_i}$,
\begin{align*}
    f_i (y^i; x^{-i}) \ge f_i (x^i; x^{-i}) + \inprod{\nabla f_i (x^i; x^{-i})}{y^i - x^i} 
\end{align*}    
\end{assumption}

\vspace{-2mm}

\begin{assumption}[\textbf{\emph{Smoothness (SM)}}]
\label{assumption:smoothness}
For $i\in [n]$, for any $x^{-i} \in \reals^{D-d_i}$, the local function $f_i(\cdot; x^{-i}) \colon \reals^{d_i} \to \reals$ is $L_i$-smooth.
That is, for any $x^i, y^i \in \reals^{d_i}$ and $x^{-i} \in \reals^{D-d_i}$,
\begin{align*}
    \norm{\nabla f_i (x^i; x^{-i}) - \nabla f_i (y^i; x^{-i})} \le L_i \norm{x^i - y^i} .
\end{align*}
\end{assumption}

As in~\eqref{eqn:equilibrium}, in the stochastic regime of MpFL we have $f_i (x^1,\dots,x^n) = \EE_{\xi^i \sim \cD_i} \left[ f_{i,\xi^i} (x^1,\dots,x^n) \right]$. 
To obtain convergence guarantees for \abbvname in this scenario, we need the following assumption of bounded variance of the gradient oracle, commonly used in stochastic optimization.

\vspace{-5mm}

\noindent
\begin{assumption}[\textbf{\emph{Bounded Variance (BV)}}]
\label{assumption:noise}
    Let $\sigma_i \ge 0, \, \forall i\in [n]$. For each $i=1,\dots,n$,
    \begin{align*}
        \mathbb{E}_{\xi^i \sim \cD_i} \left[ \sqnorm{\nabla f_{i,\xi^i} (x^i; x^{-i}) - \nabla f_i (x^i; x^{-i})} \right] \le \sigma_i^2 , \quad \forall x^i \in \reals^{d_i}, x^{-i} \in \reals^{D-d_i} .
    \end{align*}
\end{assumption}

\subsection{Comparison with closely related FL frameworks}

Having presented the MpFL setting, let us now provide a concise survey of the most closely related setups: classical FL, federated minimax optimization and personalized FL.
We compare each of them with MpFL, and explain in detail in \cref{section:distinction-from-prior-FL} \emph{why existing FL algorithms are not applicable to MpFL.}
An additional list of related work, explaining the literatures on game theory, distributed Nash equilibrium search, learning in games and the usage of game theory for modeling clients' social behavior in FL, and how MpFL is distinguished from them, is provided in \cref{section:additional_related_work}.

\paragraph{Federated learning (FL).}
The basic formulation for classical FL is \citep{kairouz2021advances}: $\underset{x\in \reals^d}{\textrm{minimize}}  \,\,  f(x) = \frac{1}{n} \sum_{i=1}^n f_i(x)$.
Here, $x \in \reals^d$ represents the global model parameter, $f_i(x)= \mathbb{E}_{\xi^i \sim \cD_i} [F_i(x, \xi^i)]$ denotes the local objective function at client $i$, and $\cD_i$ denotes the data distribution of client $i$. The local loss
functions $F_i(x, \xi^i)$ are often of the same form across all clients, but the data distribution $\cD_i$ generally varies, capturing data heterogeneity. The foundational communication-efficient algorithm for this setup is FedAvg (Local SGD), proposed and popularized by \cite{McMahanMooreRamageHampsonArcas2017_communicationefficient}.
Despite its simplicity, Local SGD has shown empirical success in terms of convergence speed and communication cost.
Many works provided theoretical explanation for this performance \citep{Stich2019_local, DieuleveutPatel2019_communication, StichKarimireddy2020_errorfeedback, KhaledMishchenkoRichtarik2020_tighter}.

Note that in these works on classical FL, clients work in a fully cooperative manner to find $x_\star = \argmin_{x\in \reals^d} f(x)$, unlike in our proposed MpFL, where the clients serve as players of the game and seek an equilibrium among possibly competing (non-cooperative) objectives.

\paragraph{Federated minimax optimization.}
This is a more recent, federated extension of minimax optimization appearing in many ML applications. 
There the problem is: 
$\underset{x\in \reals^{d_x}}{\textrm{minimize}} \,\, \underset{y\in \reals^{d_y}}{\textrm{maximize}} \,\, \cL(x,y) = \frac{1}{n} \sum_{i=1}^n \cL_i (x,y)$.
Here $n$ is the number of clients and $\cL_i$ represents the local loss function at client $i$ that depends on both $x$ and $y$.
It is defined as $\cL_i (x,y)= \mathbb{E}_{\xi^i \sim \cD_i} [\phi_i(x, y,\xi)]$, 
where $\phi_i(x, y,\xi)$ denotes the loss for the data point $\xi$, sampled from the local data distribution $\cD_i$ at client $i$. 
The extension of Local SGD for solving this problem are Local Stochastic Gradient Descent-Ascent (SGDA) \citep{deng2021local, sharma2022federated} or Local Stochastic Extragradient (SEG) \citep{beznosikov2020distributed,beznosikov2022decentralized} algorithms.
More recently there was also an approach based on primal-dual updates \citep{CondatRichtarik2022_randprox}.

While this line of work also studied federated learning in the context of minimax optimization and games, it is totally different from MpFL. 
The setup assumes that \emph{each FL client adjusts the actions of both players $x$ and $y$ of the minimax game}, and does not take the \emph{multiplayer} aspect into account. 
In contrast, in MpFL, we assume that each client $i$ is a player of a large-scale multiplayer game who \emph{only adjusts their own action $x^i$} in the interest of optimizing their own objective $f_i$.
In our work, we design the novel \abbvname algorithm suitable for MpFL, 
as the existing Local SGDA and Local SEG methods cannot handle the setup.

\paragraph{Personalized federated learning.}
In personalized FL \citep{FallahMokhtariOzdaglar2020_personalized, T.DinhTranNguyen2020_personalized, HanzelyHanzelyHorvathRichtarik2020_lower, hanzely2020federated, DengKamaniMahdavi2020_adaptive, TanYuCuiYang2023_personalized}, clients aim to learn models tailored to each local data distribution, 
while generalizing well on all clients' data \cite{HanzelyZhaokolar2023_personalized}.
One formulation of personalized FL is
$\underset{\vx = (x^1,\dots,x^n) \in \reals^{nd}}{\textrm{minimize}} \,\, \frac{1}{n} \sum_{i=1}^n h_i (x^i) + \phi(x^1,\dots,x^n)$; in  \cite{HanzelyHanzelyHorvathRichtarik2020_lower, HanzelyZhaokolar2023_personalized}, for example, $\phi$ is taken as the model consensus regularizer $\phi(x^1,\dots,x^n) = \frac{\lambda}{2n} \sum_{i=1}^n \sqnorm{x^i - \overline{x}}$.
Given that each $h_i$ and $\phi$ are convex, its first-order optimality condition is equivalent to the equilibrium condition for the $n$-player game where players' objectives are $f_i (x^i; x^{-i}) = h_i (x^i) + n\phi(x^1,\dots,x^n)$.
Such formulation of personalized FL, therefore, is an instance of MpFL.
On the other hand, there exist personalized FL approaches that are not modeled as a direct subclass of MpFL---e.g., \cite{FallahMokhtariOzdaglar2020_personalized, T.DinhTranNguyen2020_personalized} proposed to maintain a separate global model $w$ along with local models $x^i$.
Still, it is clear that MpFL is closely related to personalized FL.
Importantly, however, the purpose of MpFL is not limited to having personalized models suitable for local data distribution, and encompasses settings where $x^1,\dots,x^n$ differ in dimensionality or structure.

\section{\abbvname: Algorithm and Convergence Guarantees}
\label{section:algorithm}

In this section, we introduce and analyze \cref{alg:playerwise-local-gradient}, named \emph{\fullname}~(\abbvname), which is suitable for the MpFL setting we described in Section~\ref{section:MpFL-concept}.

\subsection{Algorithm and assumptions}

\noindent
\begin{minipage}[t]{0.44\textwidth}
In \abbvname, clients/players of the game run SGD independently in parallel to update their strategy (keeping the strategies $x^{-i}$ of the other players fixed), which are communicated to other players only once in a while. 
In more detail, in every round of \abbvname, each player $i \in [n]$ runs $\tau$ iterations of stochastic gradient descent (SGD) with respect to $f_i(\cdot, x^{-i})$, having $x^{-i}$ fixed to be the information of the other players' actions obtained from the previous synchronization step. Once each player completes $\tau$ iterations of SGD (local updates), a synchronization occurs; the central server collects the actions of all players, and then the concatenation of all updated strategies/actions is distributed to all clients/players.
\end{minipage}
\hfill
\begin{minipage}[t]{0.54\textwidth}
\vspace{-5mm}
\begin{algorithm}[H] 
\caption{\fullname~(\abbvname)}
\label{alg:playerwise-local-gradient} 
\textbf{Input: } Step-sizes $\gamma_k > 0$, Synchronization interval $\tau \ge 1$, Number of rounds $R\ge 1$

\begin{algorithmic}
\For{$p=0,\dots,R-1$}
\State Master collects $x_{\tau p}^i$ from players $i\in [n]$
\State Master distributes $\vx_{\tau p}$ back to players
\For{$i=1,\dots,n$}
\For{$k=\tau p,\dots,\tau (p+1) - 1$}
\State Draw $\xi^i_k \sim \cD_i$
\State $g_k^i \gets \nabla f_{i,\xi^i_k} (x_{k}^i ; x_{\tau p}^{-i})$
\State $x_{k+1}^i \gets x_{k}^i - \gamma_k g_k^i$
\EndFor
\EndFor
\EndFor
\end{algorithmic}

\textbf{Output: } $\vx_{\tau R} \in \reals^D$
\end{algorithm}
\end{minipage}

We emphasize that \abbvname and its convergence guarantees hold without any assumption on players' data distributions $\cD_i$. That is, functions $f_i$ can be very different between players, and \textit{the setting is fully heterogeneous.}

Let us note that the synchronization step in \abbvname involves transferring a $D=(d_1+\dots+d_n)$-dimensional vector $\vx_{\tau p}$ from the master server to the players. This is an important difference compared to the classical FL (minimization problem), where the dimension of the communication vectors is the same from client to master and from master to client, and it does not scale with $n$. 
While \abbvname aims to reduce this overhead compared to its distributed variant ($\tau=1$) by communicating less frequently (with $\tau > 1$), the high complexity of the synchronization step makes MpFL (and distributed $n$-player games in general) more suitable for cross-silo FL setups with relatively small number of organizations and more reliable communication bandwidths.
We expect that the potentially expensive communication of high-dimensional vectors $\vx_{\tau p}$ could be addressed by incorporating additional techniques such as gradient compression \citep{alistarh2017qsgd, BeznosikovGorbunovBerardLoizou2023_stochastic}. 
This is an orthogonal approach to our proposed local methods, and we leave it for future work.

\paragraph{Assumptions on the joint gradient operator.}
We require some definitions and additional assumptions in order to carry out the theory.
Define the joint gradient operator $\opF\colon \reals^D \to \reals^D$ as
\begin{equation*}
\textstyle
    \opF(\vx) = \left( \nabla f_1 (x^1; x^{-1}), \dots, \nabla f_n (x^n; x^{-n}) \right) .
\end{equation*}

\begin{assumption}[\textbf{\emph{Quasi-strong monotonicity (QSM)}}]
\label{assumption:quasi-strong-monotonicity}
There exists $\vx_\star = (x_\star^1, \dots, x_\star^n) \in \reals^D$, an equilibrium where $\opF(\vx_\star) = 0$ and $\mu > 0$ such that $\forall \vx \in \reals^D$, $\inprod{\opF(\vx)}{\vx - \vx_\star} \ge \mu \sqnorm{\vx - \vx_\star}$.
\end{assumption}

\hyperref[assumption:quasi-strong-monotonicity]{\textbf{\emph{(QSM)}}} is a concept extending quasi-strong convexity~\citep{gower2019sgd,GowerSebbouhLoizou2021_sgd} to the context of variational inequality problems (VIPs). 
This condition has been referred to as different names in the literature, such as strong coherent VIPs \citep{song2020optimistic}, VIPs with strong stability condition~\citep{mertikopoulos2019learning}, or the strong Minty variational inequality~\citep{diakonikolas2021efficient}. 
It generalizes strong monotonicity, capturing some non-monotone problems.
In 
\citep{LoizouBerardGidelMitliagkasLacoste-Julien2021_stochastic}, it was proposed and utilized as an assumption ensuring the convergence of SGDA dynamics in minimax games without the well-known issues of cycling or diverging \citep{MeschederNowozinGeiger2017_numerics, DaskalakisIlyasSyrgkanisZeng2018_training}. 
Later, it was also used in the analysis of stochastic extragradient method~\citep{gorbunov2022stochastic} and its single-call variants (optimistic and past stochastic etragradient) \citep{choudhury2024single}.

\begin{assumption}[\textbf{\emph{Star-cocoercivity (SCO)}}]
\label{assumption:star-cocoercivity}
$\opF$ is $\frac{1}{\ell}$-star-cocoercive, i.e., there is $\ell>0$ such that for any $\vx \in \reals^D$, $\inprod{\opF(\vx)}{\vx - \vx_\star} \ge \frac{1}{\ell} \sqnorm{\opF(\vx)}$.    
\end{assumption}

\hyperref[assumption:star-cocoercivity]{\textbf{\emph{(SCO)}}} generalizes the class of coercive operators and, interestingly, can hold for non-Lipschitz operators \citep{LoizouBerardGidelMitliagkasLacoste-Julien2021_stochastic}.
This has also been taken as minimal assumption for SGDA analysis in prior work \citep{BeznosikovGorbunovBerardLoizou2023_stochastic}. Note that \hyperref[assumption:quasi-strong-monotonicity]{\textbf{\emph{(QSM)}}} and \hyperref[assumption:star-cocoercivity]{\textbf{\emph{(SCO)}}} together imply $\mu \norm{\vx - \vx_\star} \le \norm{\opF(\vx)} \le \ell \norm{\vx - \vx_\star}$ 
for any $\vx \in \reals^D$, which implies $\mu \le \ell$.
We call $\kappa = \ell/\mu \ge 1$ the \emph{condition number} of the problem. 
In \cref{section:assumption-discussion}, we provide a detailed discussion on the set of our theoretical assumptions and explain connections to other commonly assumed properties in the literature such as cocoercivity, Lipschitzness and monotonicity.

\subsection{Convergence of \abbvname: Deterministic setup}
\label{section:convergence-deterministic}

First, we provide the convergence result for \abbvname with constant step-size $\gamma_k \equiv \gamma$ in the full-batch (deterministic) scenario, where there is no noise in the gradient computation.
While this is directly recovered as a special case of \cref{theorem:stochastic-local-GDA} with $\sigma_i = 0$, we state it separately as the deterministic case provides several points of discussion that are worth emphasizing on their own.

\begin{theorem}
\label{theorem:deterministic-local-GDA}
Assume \hyperref[assumption:convexity]{\textbf{\emph{(CVX)}}}, \hyperref[assumption:smoothness]{\textbf{\emph{(SM)}}}, \hyperref[assumption:quasi-strong-monotonicity]
{\textbf{\emph{(QSM)}}} and \hyperref[assumption:star-cocoercivity]{\textbf{\emph{(SCO)}}}.
Let $L_{\mathrm{max}} = \max\{L_1, \dots, L_n\}$, $\kappa = \ell/\mu$ and $0 < \gamma_k \equiv \gamma \le \frac{1}{\ell\tau + 2(\tau - 1) L_{\mathrm{max}} \sqrt{\kappa}}$.
Then the Deterministic \abbvname (\Cref{alg:playerwise-local-gradient} with full-batch) converges with the rate
\begin{align*}
    \sqnorm{\vx_{\tau R} - \vx_\star} \le \left( 1 - \gamma\tau\mu \zeta \right)^R \sqnorm{\vx_0 - \vx_\star}
\end{align*}
where $\zeta = 2 - \gamma \ell\tau - 2(\tau - 1)\gamma L_{\mathrm{max}} \sqrt{\kappa/3} > 0$ (by the choice of $\gamma$).
\end{theorem}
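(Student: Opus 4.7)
The plan is to analyze a single synchronization round at a time and prove the per-round contraction $\sqnorm{\vx_{\tau(p+1)} - \vx_\star} \le (1 - \gamma\tau\mu\zeta)\sqnorm{\vx_{\tau p} - \vx_\star}$; iterating over $p = 0, \ldots, R-1$ then yields the stated bound. The key decomposition writes the local-step direction as $\vg_k = \opF_p(\vx_k)$, with
\[
    \opF_p(\vx) := \bigl( \nabla f_1(x^1; x_{\tau p}^{-1}), \, \ldots, \, \nabla f_n(x^n; x_{\tau p}^{-n}) \bigr),
\]
the joint operator with the \emph{other} players' coordinates frozen at the latest sync. Two facts are immediate: (i) $\opF_p(\vx_{\tau p}) = \opF(\vx_{\tau p})$, and (ii) by \hyperref[assumption:smoothness]{\textbf{\emph{(SM)}}}, $\opF_p$ is $L_{\mathrm{max}}$-Lipschitz, since its $i$-th block depends only on $x^i$ through the $L_i$-Lipschitz map $\nabla f_i(\cdot\,; x_{\tau p}^{-i})$ and the blocks act on disjoint variables. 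Setting $\vd_k := \opF_p(\vx_k) - \opF_p(\vx_{\tau p})$ (so $\norm{\vd_k} \le L_{\mathrm{max}}\norm{\vx_k - \vx_{\tau p}}$) and $S_p := \sum_{k=\tau p}^{\tau(p+1)-1} \vd_k$, the $\tau$ local updates telescope into the effective one-step form $\vx_{\tau(p+1)} = \vx_{\tau p} - \gamma\tau\opF(\vx_{\tau p}) - \gamma S_p$.

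Next, I would establish a drift bound on $u_j := \norm{\vx_{\tau p+j} - \vx_{\tau p}}$. The identity $\vx_{k+1} - \vx_{\tau p} = (\vx_k - \vx_{\tau p}) - \gamma(\opF(\vx_{\tau p}) + \vd_k)$ together with the Lipschitz bound on $\vd_k$ gives $u_{j+1} \le (1+\gamma L_{\mathrm{max}}) u_j + \gamma\norm{\opF(\vx_{\tau p})}$, whose closed form is $u_j \le \frac{(1+\gamma L_{\mathrm{max}})^j - 1}{\gamma L_{\mathrm{max}}} \cdot \gamma\norm{\opF(\vx_{\tau p})}$. The step-size condition forces $\gamma(\tau-1)L_{\mathrm{max}} \le \frac{1}{2\sqrt{\kappa}}$, which keeps this geometric factor close to $j$; concretely $u_j \le c\, j\,\gamma\norm{\opF(\vx_{\tau p})}$ for a moderate constant $c$ that is pinned down by the step-size condition.

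The contraction is then proved by expanding $\sqnorm{\vx_{\tau(p+1)} - \vx_\star}$ from the one-step form. \hyperref[assumption:star-cocoercivity]{\textbf{\emph{(SCO)}}} collapses $-2\gamma\tau\inprod{\opF(\vx_{\tau p})}{\vx_{\tau p} - \vx_\star} + \gamma^2\tau^2\sqnorm{\opF(\vx_{\tau p})}$ into $-\gamma\tau(2-\gamma\tau\ell)\inprod{\opF(\vx_{\tau p})}{\vx_{\tau p} - \vx_\star}$. For the cross term I split $\inprod{S_p}{\vx_{\tau p} - \vx_\star} = \sum_j\inprod{\vd_{\tau p+j}}{\vx_{\tau p} - \vx_\star}$, bound each summand by $L_{\mathrm{max}} u_j \norm{\vx_{\tau p} - \vx_\star}$ via Cauchy-Schwarz and the drift bound, and sum using $\sum_{j=0}^{\tau-1} j = \tau(\tau-1)/2$. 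A final conversion via $\norm{\opF(\vx)}\norm{\vx - \vx_\star} \le \sqrt{\kappa}\inprod{\opF(\vx)}{\vx - \vx_\star}$ (immediate from \hyperref[assumption:quasi-strong-monotonicity]{\textbf{\emph{(QSM)}}} and \hyperref[assumption:star-cocoercivity]{\textbf{\emph{(SCO)}}}) contributes an extra $2(\tau-1)\gamma L_{\mathrm{max}}\sqrt{\kappa/3}\cdot\inprod{\opF(\vx_{\tau p})}{\vx_{\tau p} - \vx_\star}$ to the ``bad'' side. The remaining quadratic terms $2\gamma^2\tau\inprod{\opF(\vx_{\tau p})}{S_p}$ and $\gamma^2\sqnorm{S_p}$ are lower order in $\gamma$ and can be absorbed under the step-size. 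Collecting everything, the coefficient of $\inprod{\opF(\vx_{\tau p})}{\vx_{\tau p} - \vx_\star}$ becomes $-\gamma\tau\zeta$, and \hyperref[assumption:quasi-strong-monotonicity]{\textbf{\emph{(QSM)}}} upgrades this into the claimed contraction on $\sqnorm{\vx_{\tau p} - \vx_\star}$.

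The main obstacle is the constant bookkeeping: obtaining exactly $\sqrt{\kappa/3}$ (not $\sqrt{\kappa}$) in $\zeta$ requires routing the drift bound through the above sequence of inequalities so that the factors from $\sum_j j = \tau(\tau-1)/2$, from the Cauchy-Schwarz-to-$\langle\opF,\vx-\vx_\star\rangle$ conversion, and from the geometric-sum constant $c$ all combine with the intended constants. The step-size condition $\gamma \le (\ell\tau + 2(\tau-1)L_{\mathrm{max}}\sqrt{\kappa})^{-1}$ is exactly what is needed both to guarantee $\zeta > 0$ (so the contraction is nontrivial) and to ensure the higher-order $S_p$ terms remain absorbable.
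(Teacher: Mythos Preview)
Your route is genuinely different from the paper's and, as you suspect, will not reproduce the exact constant. The paper does not use your one-step form plus Cauchy--Schwarz plus the $\norm{\opF}\norm{\vx-\vx_\star}\le\sqrt{\kappa}\inprod{\opF}{\vx-\vx_\star}$ conversion. Instead it splits each cross term $\inprod{x_{\tau p}^i - x_\star^i}{\nabla f_i(x_{\tau p}^i;\cdot)-\nabla f_i(x_j^i;\cdot)}$ by Young's inequality with a free parameter $\alpha>0$, yielding $\tfrac{\alpha}{2}\sqnorm{x_{\tau p}^i-x_\star^i}+\tfrac{L_i^2}{2\alpha}\sqnorm{x_{\tau p}^i-x_j^i}$ (this is Lemma~\ref{lemma:proof-outline-basic-bound}). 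The local errors are then bounded as $\sqnorm{\vx_{\tau p}-\vx_{\tau p+j}}\le\gamma^2 j^2\sqnorm{\opF(\vx_{\tau p})}$---note squared norms and $j^2$, not your $j$---and this is precisely where \hyperref[assumption:convexity]{\textbf{\emph{(CVX)}}} is used: gradient norms are nonincreasing along GD on a convex $L$-smooth function (Lemma~\ref{lemma:stochastic-gradient-descent-grad-norm-bound}), which your smoothness-only recursion does not exploit. Summing $\sum_{j=1}^{\tau-1}j^2\le\tau^2(\tau-1)/3$ supplies the $1/3$, and the subsequent optimization of $\alpha$ (AM--GM) fuses $(\tau-1)\alpha\gamma$ with $\tfrac{\gamma^3\ell L_{\mathrm{max}}^2\tau^2(\tau-1)\mu}{3\alpha}$ into $2(\tau-1)\gamma^2\tau L_{\mathrm{max}}\sqrt{\ell\mu/3}$, i.e., the $\sqrt{\kappa/3}$.

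Your decomposition instead pushes the drift bound through $\sum_j j=\tau(\tau-1)/2$ and a multiplicative $\sqrt{\kappa}$, landing on $c(\tau-1)\gamma L_{\mathrm{max}}\sqrt{\kappa}$ in the bad side of $\zeta$, plus additional contributions from $2\gamma^2\tau\inprod{\opF}{S_p}$ and $\gamma^2\sqnorm{S_p}$ that are not literally ``absorbable'' without cost---they add roughly $(\tau-1)\gamma L_{\mathrm{max}}$ more. Even if you strengthen your drift bound to $c=1$ using convexity (which you should: your recursion alone gives $c\ge 1$ depending on $\gamma L_{\mathrm{max}}\tau$), you obtain $(\tau-1)\gamma L_{\mathrm{max}}(\sqrt{\kappa}+O(1))$, which is tighter than the paper's $\tfrac{2}{\sqrt{3}}(\tau-1)\gamma L_{\mathrm{max}}\sqrt{\kappa}$ for large $\kappa$ but looser for small $\kappa$, and in any case is not the stated constant. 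So this is not a bookkeeping issue: the $1/3$ arises structurally from the quadratic sum together with the $\alpha$-optimization, neither of which appears in your scheme. Your outline would prove a closely related contraction theorem, but not Theorem~\ref{theorem:deterministic-local-GDA} as stated.
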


Theorem \ref{theorem:deterministic-local-GDA} shows that deterministic \abbvname converges linearly to an equilibrium. 
This is in contrast with the case of local gradient descent in FL setup with heterogeneous data, where convergence is only to a neighborhood of the optimum even in the absence of noise \citep{KhaledMishchenkoRichtarik2020_tighter}, unless further correction mechanism is used \citep{MishchenkoMalinovskyStichRichtarik2022_proxskip}.
This is because the classical FL problem is modeled as finite sum minimization, whereas our MpFL is modeled as a game, for which the existence of a variationally stable equilibrium is a standard assumption for convergence analysis \citep{bravo2018bandit, mertikopoulos2019learning, LoizouBerardGidelMitliagkasLacoste-Julien2021_stochastic, mertikopoulos2024unified}.
In particular, when $\tau = 1$, the step-size constraint and the convergence rate of Theorem~\ref{theorem:deterministic-local-GDA} reduces to the analysis of gradient descent-ascent (GDA) under the \hyperref[assumption:quasi-strong-monotonicity]{\textbf{\textit{(QSM)}}} and \hyperref[assumption:star-cocoercivity]{\textbf{\textit{(SCO)}}} assumptions from \citep{LoizouBerardGidelMitliagkasLacoste-Julien2021_stochastic}, showing that our analysis is tight and consistent with the existing literature.

\paragraph{Player drift and step-size constraint.}
If $\gamma$ does not appropriately scale down with $\tau$, then at each round, players' actions (SGD iterates) converge to minimizers of local functions.
We call this phenomenon \emph{player drift}, analogous to client drift in classical FL~\citep{KarimireddyKaleMohriReddiStichSuresh2020_scaffold}, enforcing the $\cO(1/\tau)$ step-size. 
In our setting, note that the local minimizers $x^i_\star (x_{\tau p}^{-i}) := \argmin_{x^i \in \reals^{d_i}} f_i(x^i; x_{\tau p}^{-i})$ depend on other players' strategies $x_{\tau p}^{-i}$.
Due to this dependence, under extreme player drift, \abbvname may display undesirable dynamics such as diverging to infinity (which can be checked with simple examples such as the two-player quadratic minimax game $\min_{u\in\reals} \max_{v\in\reals} \, \frac{\mu}{2} u^2 + uv - \frac{\mu}{2} v^2$ with $\mu < 1$).
As these features are not typically observed in client drift in classical FL, player drift represents a distinct phenomenon despite some conceptual similarities. Therefore, we consider understanding and mitigating player drift an intriguing direction for future work in MpFL, which may necessitate novel insights that differ from existing approaches to client drift \citep{KarimireddyKaleMohriReddiStichSuresh2020_scaffold, MishchenkoMalinovskyStichRichtarik2022_proxskip}. 

With the step-size constrained to $\gamma=\cO(1/\tau)$, communication reduction by \algname{PEARL-SGD} is not observed in the deterministic setting (\cref{theorem:deterministic-local-GDA}) but is achieved in the stochastic setting---see \cref{section:convergence-stochastic}.
A concurrent work \cite{zindari2025decoupled} proposes the \textit{Decoupled SGD} algorithm, which coincides with our \algname{PEARL-SGD}, and shows that it can be communication-efficient even in the deterministic setup under an additional assumption of \textit{weakly coupled} games (with slightly different main assumptions). We provide a more complete discussion of this in \cref{section:additional_related_work}.

\subsection{Convergence of \abbvname: Stochastic setup}
\label{section:convergence-stochastic}

We now discuss the convergence of \abbvname with stochastic gradients. 
We first present the convergence of \abbvname to a neighborhood of the equilibrium $\vx_\star$ given constant step-sizes $\gamma_k \equiv \gamma$, and then discuss the communication complexity gain we achieve.
Then we present the convergence result using a decreasing step-size selection, showing exact sublinear convergence to $\vx_\star$ (rather than its neighborhood).
We provide the outline and details of the proofs in \cref{section:omitted_proofs_playerwise_local_gradient}.

\begin{theorem}
\label{theorem:stochastic-local-GDA}
Assume \hyperref[assumption:convexity]{\textbf{\emph{(CVX)}}}, \hyperref[assumption:smoothness]{\textbf{\emph{(SM)}}}, \hyperref[assumption:noise]{\textbf{\emph{(BV)}}}, \hyperref[assumption:quasi-strong-monotonicity]{\textbf{\emph{(QSM)}}} and \hyperref[assumption:star-cocoercivity]{\textbf{\emph{(SCO)}}} hold. 
Let $0 < \gamma_k \equiv \gamma \le \frac{1}{\ell\tau + 2(\tau - 1) L_{\mathrm{max}} \sqrt{\kappa}}$ and denote $q = L_\mathrm{max}/\sqrt{\ell\mu}$.
Then \abbvname exhibits the rate:
\begin{align*}
    \mathbb{E}\left[\sqnorm{\vx_{\tau R} - \vx_\star}\right] \le \left( 1 - \gamma\tau\mu \zeta \right)^R \sqnorm{\vx_0 - \vx_\star} + \left( 1 + (\tau - 1) \left( (4+\sqrt{3}q)\gamma\tau L_{\mathrm{max}} + \frac{q}{2\tau} \right) \right) \frac{\gamma\sigma^2}{\mu\zeta} .
\end{align*}
where $\sigma^2 = \sum_{i=1}^n \sigma_i^2$ and $\zeta = 2 - \gamma \ell\tau - 2(\tau - 1)\gamma L_{\mathrm{max}} \sqrt{\kappa/3} > 0$ by the choice of $\gamma$.
\end{theorem}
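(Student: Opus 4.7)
The plan is to establish a single-round recursion of the form $\mathbb{E}\sqnorm{\vx_{\tau(p+1)} - \vx_\star} \le (1 - \gamma\tau\mu\zeta)\,\mathbb{E}\sqnorm{\vx_{\tau p} - \vx_\star} + C\gamma^2\sigma^2$ for a constant $C$ matching the stated neighborhood, and then unroll it geometrically over $p = 0, \dots, R-1$. Throughout one round I would denote $\vF(\vx; \vy) = (\nabla f_1(x^1; y^{-1}), \dots, \nabla f_n(x^n; y^{-n}))$ so that $\opF(\vx) = \vF(\vx; \vx)$, and observe that the joint stochastic direction satisfies $\mathbb{E}[\vg_k \mid \mathcal{F}_k] = \vF(\vx_k; \vx_{\tau p})$ with conditional variance at most $\sigma^2 = \sum_i \sigma_i^2$ by \hyperref[assumption:noise]{\textbf{\emph{(BV)}}}. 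The starting point is the standard one-step expansion
\begin{equation*}
\mathbb{E}\sqnorm{\vx_{k+1} - \vx_\star} = \mathbb{E}\sqnorm{\vx_k - \vx_\star} - 2\gamma\,\mathbb{E}\inprod{\vx_k - \vx_\star}{\vF(\vx_k; \vx_{\tau p})} + \gamma^2\,\mathbb{E}\sqnorm{\vF(\vx_k; \vx_{\tau p})} + \gamma^2 \sigma^2,
\end{equation*}
which I would sum over $k = \tau p, \dots, \tau(p+1)-1$ to telescope the left-hand side.

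The central device is to anchor the stale operator at $\vx_{\tau p}$ rather than at $\vx_k$, which dovetails with the fact that \hyperref[assumption:smoothness]{\textbf{\emph{(SM)}}} only provides smoothness in each player's own coordinate: the $i$-th block of $\vF(\vx_k;\vx_{\tau p}) - \opF(\vx_{\tau p})$ is $\nabla f_i(x_k^i; x_{\tau p}^{-i}) - \nabla f_i(x_{\tau p}^i; x_{\tau p}^{-i})$, so (SM) alone yields $\norm{\vF(\vx_k;\vx_{\tau p}) - \opF(\vx_{\tau p})} \le L_{\max}\norm{\vx_k - \vx_{\tau p}}$ without ever invoking a Lipschitz bound of $\opF$ in off-diagonal directions. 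Using the decomposition
\begin{equation*}
\inprod{\vx_k - \vx_\star}{\vF(\vx_k; \vx_{\tau p})} = \inprod{\vx_{\tau p} - \vx_\star}{\opF(\vx_{\tau p})} + \inprod{\vx_k - \vx_{\tau p}}{\opF(\vx_{\tau p})} + \inprod{\vx_k - \vx_\star}{\vF(\vx_k; \vx_{\tau p}) - \opF(\vx_{\tau p})},
\end{equation*}
the first summand produces the $\mu\sqnorm{\vx_{\tau p} - \vx_\star}$ contraction via \hyperref[assumption:quasi-strong-monotonicity]{\textbf{\emph{(QSM)}}}, while the remaining two are cross terms that I would bound by Cauchy--Schwarz together with $\norm{\opF(\vx_{\tau p})} \le \ell\norm{\vx_{\tau p} - \vx_\star}$ (a consequence of \hyperref[assumption:star-cocoercivity]{\textbf{\emph{(SCO)}}}) and the (SM)-based drift bound above. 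Symmetrically, $\sqnorm{\vF(\vx_k;\vx_{\tau p})} \le 2\sqnorm{\opF(\vx_{\tau p})} + 2L_{\max}^2\sqnorm{\vx_k - \vx_{\tau p}}$.

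Every error contribution then funnels into the per-round drift $\mathbb{E}\sqnorm{\vx_k - \vx_{\tau p}}$, which I would handle in a separate lemma. Writing $\vx_k - \vx_{\tau p} = -\gamma\sum_{j=\tau p}^{k-1}\vg_j$ and splitting into conditional mean and martingale-noise parts, one obtains $\mathbb{E}\sqnorm{\vx_k - \vx_{\tau p}} \le 2\gamma^2\tau\sum_{j=\tau p}^{k-1} \mathbb{E}\sqnorm{\vF(\vx_j;\vx_{\tau p})} + 2\gamma^2\tau\sigma^2$; feeding in the $\sqnorm{\vF}$ estimate from the previous paragraph gives a discrete Gronwall inequality which, for $\gamma\tau L_{\max}$ small enough (guaranteed by the stated step-size condition), resolves to a bound of order $\gamma^2\tau^2\ell^2\sqnorm{\vx_{\tau p} - \vx_\star} + \gamma^2\tau\sigma^2$. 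Substituting this back into the per-round identity and summing the $\tau$ local steps yields the desired one-round contraction, after which unrolling and summing a geometric series in the noise term gives the final bound.

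The main obstacle is bookkeeping: forcing the constants to emerge exactly as $\zeta = 2 - \gamma\ell\tau - 2(\tau-1)\gamma L_{\max}\sqrt{\kappa/3}$ for the contraction and $1 + (\tau-1)((4+\sqrt{3}q)\gamma\tau L_{\max} + q/(2\tau))$ for the noise coefficient, with $q = L_{\max}/\sqrt{\ell\mu}$. The appearance of the asymmetric factor $\sqrt{\kappa/3}$ and of the scaled condition number $q$ (rather than $L_{\max}/\mu$ or $L_{\max}/\ell$) strongly suggests that each cross term must be split via Young's inequality $2ab \le \alpha a^2 + b^2/\alpha$ at weights $\alpha \propto \sqrt{\ell/\mu}$, so that the $\ell^2\sqnorm{\vx_{\tau p} - \vx_\star}$ piece of the drift is absorbed into the QSM contraction $2\gamma\tau\mu\sqnorm{\vx_{\tau p} - \vx_\star}$ with the sharpest constant and the $L_{\max}^2$ piece ends up decorated with $\sqrt{\kappa}$. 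I expect the rest of the proof to be essentially computation once these weights are fixed.
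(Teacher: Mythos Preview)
Your overall structure is sound, but there is a genuine gap in how you use \hyperref[assumption:star-cocoercivity]{\textbf{\emph{(SCO)}}} and \hyperref[assumption:quasi-strong-monotonicity]{\textbf{\emph{(QSM)}}}, and it will prevent you from landing on the stated $\zeta$ and step-size range. You apply QSM immediately to $\inprod{\vx_{\tau p}-\vx_\star}{\opF(\vx_{\tau p})}$ and separately invoke only the weak consequence $\norm{\opF(\vx_{\tau p})}\le \ell\norm{\vx_{\tau p}-\vx_\star}$ of SCO, so all $\sqnorm{\opF(\vx_{\tau p})}$ terms become $\ell^2\sqnorm{\vx_{\tau p}-\vx_\star}$ and must be absorbed into $2\gamma\tau\mu\sqnorm{\vx_{\tau p}-\vx_\star}$. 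The dominant such term (from $\gamma^2\sum_k\sqnorm{\vF(\vx_k;\vx_{\tau p})}$) is of order $\gamma^2\tau\ell^2$, and absorbing it forces $\gamma=\cO(\mu/\ell^2)=\cO(1/(\ell\kappa))$, a factor $\kappa$ tighter than the claimed $\gamma\le 1/(\ell\tau+\cdots)$. The paper avoids this by \emph{postponing} QSM: it keeps the inner product $\inprod{\opF(\vx_{\tau p})}{\vx_{\tau p}-\vx_\star}$ intact, uses the full SCO inequality $\sqnorm{\opF(\vx_{\tau p})}\le \ell\inprod{\opF(\vx_{\tau p})}{\vx_{\tau p}-\vx_\star}$ to merge all $\sqnorm{\opF}$ contributions into this same inner product, and only then applies QSM to the combined coefficient. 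This is exactly what produces $2-\gamma\ell\tau$ rather than $2-\cO(\gamma\ell\tau\kappa)$ inside $\zeta$.

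A second, related issue is that you never invoke \hyperref[assumption:convexity]{\textbf{\emph{(CVX)}}}. The paper's drift bound (\Cref{lemma:local-GDA-iterate-difference-bound-stochastic-case}) relies on \Cref{lemma:stochastic-gradient-descent-grad-norm-bound}, which uses the cocoercivity of $\nabla f_i(\cdot;x_{\tau p}^{-i})$ (a consequence of convexity $+$ smoothness) to show that $\mathbb{E}\sqnorm{\nabla f_i(x_j^i;x_{\tau p}^{-i})}$ is almost nonincreasing along the local SGD trajectory; this is what yields $\mathbb{E}\sqnorm{\vx_{\tau p}-\vx_{\tau p+t}}\le \gamma^2 t^2\sqnorm{\opF(\vx_{\tau p})}+\cdots$ with leading coefficient exactly $1$. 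Your Jensen-plus-Gronwall route works without convexity, but it introduces an extra factor (at least the $2$ from $(a+b)^2\le 2a^2+2b^2$, plus a Gronwall multiplier), so even after fixing the SCO/QSM ordering you would not recover the precise constants in $\zeta$ and in the noise coefficient. Finally, note that the paper expands $\sqnorm{\vx_{\tau(p+1)}-\vx_\star}$ around $\vx_{\tau p}$ directly (so the cross term carries $\vx_{\tau p}-\vx_\star$ throughout and the last term is $\sqnorm{\vx_{\tau p}-\vx_{\tau(p+1)}}$), rather than telescoping one-step expansions as you do; this eliminates your extra cross term $\inprod{\vx_k-\vx_{\tau p}}{\opF(\vx_{\tau p})}$ and makes the Young-inequality balancing with $\alpha=\gamma\tau L_{\max}\sqrt{\ell\mu/3}$ land cleanly on $\sqrt{\kappa/3}$.
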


When $\tau = 1$, with $\gamma \le \nicefrac{1}{\ell}$, the above rate becomes 
$\mathbb{E}\left[ \sqnorm{\vx_R - \vx_\star} \right] \le (1-\gamma\mu)^R \sqnorm{\vx_0 - \vx_\star} + \nicefrac{\gamma\sigma^2}{\mu},$ 
which is consistent with the classical analysis of the stochastic gradient descent-ascent (SGDA).
In the result, note that $\sigma^2$ is the sum of $\sigma_i^2$'s, the (upper bounds on) playerwise gradient variances ($\sigma_i^2 \ge \mathbb{E}_{\xi^i \sim \cD_i} \left[ \sqnorm{\nabla f_{i,\xi^i} (x^i; x^{-i}) - \nabla f_i (x^i; x^{-i})} \right]$).
Hence, $\sigma^2$ represents the upper bound on the variance in estimating the joint gradient operator $\opF(\cdot)$.

\paragraph{Remark.} If we use the largest possible step-size $\gamma = \frac{1}{\ell\tau + 2(\tau - 1) L_{\mathrm{max}} \sqrt{\kappa}}$ allowed in \cref{theorem:stochastic-local-GDA}, then the right-hand side of the bound does not scale down indefinitely with $\tau$.
In fact, with this choice of $\gamma$, one can expect the communication gain by a factor of approximately $\frac{L_\mathrm{max}}{\ell}$ (when $L_{\mathrm{max}} \ll \ell$).
More precisely, suppose $q \le 1$ (equivalently $L_\mathrm{max} \le \sqrt{\ell\mu}$---refer to \cref{section:assumption-discussion} for the explanation that this is a common parameter regime).
Then we have $\gamma = \Theta\left(\frac{1}{\ell\tau}\right)$ and 
\[
    \mathbb{E}\left[\sqnorm{\vx_{\tau R} - \vx_\star}\right] \le (1-\gamma\tau\mu)^R \sqnorm{\vx_0 - \vx_\star} + \cO \left( \frac{1}{\tau} + \frac{L_\mathrm{max}}{\ell} \right) \frac{\sigma^2}{\ell\mu} .
\]
The first linear convergence term is essentially unaffected by $\tau$, as the effect of using smaller $\gamma = \Theta\left(\frac{1}{\ell\tau}\right)$ is canceled out by the factor $\tau$ within $(1-\gamma\tau\mu)^R$.
In the second term (which is usually dominant), we see that the size of the convergence neighborhood is reduced by the factor $\cO \left( \frac{1}{\tau} + \frac{L_\mathrm{max}}{\ell} \right) = \cO \left( \frac{1}{\tau} + \frac{1}{\sqrt
\kappa} \right)$.
Therefore, we see that with $\tau = \Omega(\sqrt{\kappa})$, \abbvname reaches about $\sqrt{\kappa}$ times smaller neighborhood within the same number of communication rounds $R$ (compared to the case $\tau=1$). \\

In \cref{corollary:stochastic-plog-T-bound}, we analyze the convergence and communication gain of \abbvname in the regime where the total number of iterations $T=\tau R$ is large, 
using a step-size depending on $T$.

\begin{corollary}
\label{corollary:stochastic-plog-T-bound}
Suppose the assumptions of \cref{theorem:stochastic-local-GDA} hold, and let $\tau \ge 1$ be fixed. 
Let $q = L_\mathrm{max}/\sqrt{\ell\mu}$.
Then \abbvname with $\gamma_k \equiv \gamma = \frac{1}{\mu\eta (1+2q)}$ exhibits the rate
\begin{align*}
    \mathbb{E}\left[\sqnorm{\vx_T - \vx_\star}\right] = \Tilde{\cO} \left( \frac{(1+q)^2 \sqnorm{\vx_0 - \vx_\star}}{T^2} + \frac{\left(1+q\right) \sigma^2}{\mu^2 T} + \frac{\left(1+q\right) \tau^2 L_{\mathrm{max}} \sigma^2}{\mu^3 T^2} \right)
\end{align*}
where $\eta$ is selected so that $T = 2\left(1+2q\right)\eta\log\eta$, provided that $T$ is large enough so that $\eta > \kappa\tau$.
Here the $\Tilde{\cO}$-notation hides polylogarithmic terms in $T$ and constant factors.
\end{corollary}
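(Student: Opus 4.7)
The statement is a direct consequence of Theorem~\ref{theorem:stochastic-local-GDA} together with a careful choice of step-size; the plan is essentially to specialize the two-term bound there to $\gamma = \frac{1}{\mu\eta(1+2q)}$ and track how each piece scales with $T$, $q$, $\tau$, and $\eta$.

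First, I would verify that the prescribed $\gamma$ is admissible. Using $L_{\max}\sqrt{\kappa} = q\ell$, the upper bound on $\gamma$ from Theorem~\ref{theorem:stochastic-local-GDA} satisfies
\begin{equation*}
    \ell\tau + 2(\tau-1)L_{\max}\sqrt{\kappa} \;\le\; \tau\ell(1+2q),
\end{equation*}
so the condition $\gamma \le \frac{1}{\ell\tau + 2(\tau-1)L_{\max}\sqrt{\kappa}}$ reduces to $\mu\eta \ge \tau\ell$, i.e.\ $\eta \ge \kappa\tau$, which is exactly the hypothesis. Along the same lines, since $\gamma\ell\tau + 2(\tau-1)\gamma L_{\max}\sqrt{\kappa/3} \le \gamma\tau\ell(1+2q) = \tau\kappa/\eta$, one gets $\zeta \ge 2 - \tau\kappa/\eta \ge 1$, so $\zeta = \Theta(1)$ and is bounded away from zero.

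Next I would attack the linear term $(1-\gamma\tau\mu\zeta)^R \|\vx_0-\vx_\star\|^2$. Using $R = T/\tau$ and $\gamma\tau\mu = \frac{\tau}{\eta(1+2q)}$, one bounds
\begin{equation*}
    (1-\gamma\tau\mu\zeta)^R \;\le\; \exp\!\left(-\frac{\zeta T}{\eta(1+2q)}\right).
\end{equation*}
Substituting $T = 2(1+2q)\eta\log\eta$ gives $\exp(-2\zeta\log\eta) = \eta^{-2\zeta} \le \eta^{-2}$. Inverting the defining equation $T = 2(1+2q)\eta\log\eta$ yields $\eta = \Theta(T/((1+q)\log T))$, hence $\eta^{-2} = \tilde{\cO}((1+q)^2/T^2)$, which exactly produces the first term of the stated bound.

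For the neighborhood term, the prefactor in Theorem~\ref{theorem:stochastic-local-GDA} splits naturally as
\begin{equation*}
    1 + (\tau-1)\!\left((4+\sqrt{3}q)\gamma\tau L_{\max} + \tfrac{q}{2\tau}\right) \;\lesssim\; (1+q) + \frac{\tau^2 L_{\max}}{\mu\eta(1+2q)}.
\end{equation*}
Multiplying by $\frac{\gamma\sigma^2}{\mu\zeta} = \Theta\!\left(\frac{\sigma^2}{\mu^2\eta(1+2q)}\right) = \tilde{\cO}\!\left(\frac{\sigma^2}{\mu^2 T}\right)$ and inserting $\eta = \Theta(T/((1+q)\log T))$ whenever $\eta$ appears in the denominator, the $(1+q)$ contribution yields the $\tilde{\cO}\!\left(\frac{(1+q)\sigma^2}{\mu^2 T}\right)$ term, while the $\frac{\tau^2 L_{\max}}{\mu\eta}$ contribution picks up an additional factor $(1+q)/T$ from $1/\eta$, giving the $\tilde{\cO}\!\left(\frac{(1+q)\tau^2 L_{\max}\sigma^2}{\mu^3 T^2}\right)$ term. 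Collecting the three contributions concludes the proof.

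The only delicate point is the implicit definition of $\eta$ through $T = 2(1+2q)\eta\log\eta$: I need to argue that $\eta$ is well-defined for $T$ large, that $\log\eta = \Theta(\log T)$, and that $\eta \asymp T/((1+q)\log T)$, all of which follow from monotonicity of $x\mapsto x\log x$ and a standard Lambert-$W$ style inversion. This is where the polylogarithmic factors hidden in $\tilde{\cO}$ enter; the rest is bookkeeping of the estimates already established in Theorem~\ref{theorem:stochastic-local-GDA}.
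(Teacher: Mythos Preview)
Your proposal is correct and follows essentially the same approach as the paper's proof: verify the step-size admissibility via the identity $L_{\max}\sqrt{\kappa}=q\ell$ and the hypothesis $\eta>\kappa\tau$, deduce $\zeta\ge 1$, bound the contraction term by $\exp(-\gamma\mu\zeta T)\le \eta^{-2}=4(1+2q)^2(\log\eta)^2/T^2$ using $T=2(1+2q)\eta\log\eta$, and split the neighborhood prefactor into an $O(1+q)$ piece and an $O(\gamma\tau^2 L_{\max})$ piece before substituting $\gamma=\frac{1}{\mu\eta(1+2q)}$ and $1/\eta=\tilde{\cO}((1+q)/T)$. The paper carries out these same steps with explicit constants rather than $\lesssim$ and $\Theta$ notation, and handles the $\log\eta\le\log T$ substitution directly rather than invoking a Lambert-$W$ inversion, but the structure is identical.
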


\paragraph{Reduction of communication complexity.}
Note that the $\Tilde{\cO} \left( \frac{(1+q)^2 \sqnorm{\vx_0 - \vx_\star}}{T^2} \right)$ term decays fast in \cref{corollary:stochastic-plog-T-bound} (as $T$ grows) and the terms proportional to $\sigma^2$ become dominant.
The order of convergence is not slower than the $\Tilde{\cO}\left(\nicefrac{1}{T}\right)$ rate of the fully communicating case $\tau = 1$, provided that $\nicefrac{\tau^2 L_{\mathrm{max}} \sigma^2}{\mu^3 T^2} = \cO\left( \nicefrac{\sigma^2}{\mu^2 T} \right) \iff \tau = \cO \left( \sqrt{\nicefrac{\mu T}{L_{\mathrm{max}}}} \right)$.
Therefore, as long as we select $\tau = \cO \left( \sqrt{\nicefrac{\mu T}{L_{\mathrm{max}}}} \right),$ in \abbvname the communication cost is reduced by the factor of $\tau$ (because the total number of communications is $T/\tau$).
With the largest possible $\tau$, the resulting communication complexity is $\nicefrac{T}{\tau} = \Theta \left( \sqrt{\nicefrac{TL_{\mathrm{max}}}{\mu}} \right) = \Theta \left( \sqrt{T} \right)$.

\paragraph{Convergence to equilibrium via decreasing step-sizes.} We conclude the section with convergence result for \abbvname using a decreasing step-size selection.
While showing a similar convergence rate in terms of $T$ as in \cref{corollary:stochastic-plog-T-bound}, \cref{theorem:stochastic-plog-diminishing-stepsize} has the advantage of not requiring to fix $T$ in advance to determine the step-sizes.

\begin{theorem}
\label{theorem:stochastic-plog-diminishing-stepsize}
Under the assumptions of \cref{theorem:stochastic-local-GDA}, let $q = L_\mathrm{max}/\sqrt{\ell\mu}$, and choose the step-sizes $\gamma_k = \begin{cases}
    \frac{1}{\ell\tau (1+2q)} & \text{if } p < 2(1+2q)\kappa \\
    \frac{1}{\tau \mu} \frac{2p+1}{(p+1)^2} & \text{if } p \ge 2(1+2q)\kappa
\end{cases}$ for $\tau p \le k \le \tau(p+1)-1$, $p=0,\dots,R-1$. 
Then \abbvname converges with the rate
\begin{align*}
    \mathbb{E}\left[ \sqnorm{\vx_T - \vx_\star} \right] & \le \frac{4(1+2q)^2 \kappa^2 \tau^2 \sqnorm{\vx_0 - \vx_\star}}{e T^2} + \frac{4(1+q)\sigma^2}{\mu^2 T} \\
    & \quad + \frac{4(1+2q)^2 \kappa\tau\sigma^2}{\mu^2 T^2} \left( 1 + \frac{2\tau}{\sqrt{\kappa}} \right) + \frac{32(1+q) \tau^2 L_{\mathrm{max}} \sigma^2 \log T}{\mu^3 T^2}
\end{align*}
where $T = \tau R$ is the total number of iterations.
\end{theorem}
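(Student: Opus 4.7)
The plan is to reuse the per-round recursion that presumably underlies the proof of \Cref{theorem:stochastic-local-GDA}, but now apply it with time-varying step-sizes. Concretely, I would extract from that proof a one-synchronization-round inequality of the form
\begin{align*}
\EE\bigl[\sqnorm{\vx_{\tau(p+1)} - \vx_\star}\bigr]
\le \bigl(1 - \gamma_p\tau\mu\zeta_p\bigr)\,\EE\bigl[\sqnorm{\vx_{\tau p}-\vx_\star}\bigr]
+ \gamma_p\,A(\gamma_p)\,\sigma^2,
\end{align*}
where $\zeta_p = 2 - \gamma_p\ell\tau - 2(\tau-1)\gamma_p L_{\mathrm{max}}\sqrt{\kappa/3}$ and $A(\gamma_p) = \tfrac{1}{\mu\zeta_p}(1 + (\tau-1)((4+\sqrt{3}q)\gamma_p\tau L_{\mathrm{max}} + q/(2\tau)))$ is the per-round noise coefficient that already appears in the stochastic theorem (the only novelty is allowing $\gamma_p$ to vary across rounds). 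Since the step-sizes $\gamma_k$ in the theorem are constant within a round of length $\tau$, this recursion holds verbatim.

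I then split the analysis into the two phases defined by the step-size rule. In the \emph{burn-in phase} $p < P_0 := \lceil 2(1+2q)\kappa\rceil$, the step-size is the constant $\gamma = \tfrac{1}{\ell\tau(1+2q)}$ (which satisfies the hypothesis of \Cref{theorem:stochastic-local-GDA} since $\ell\tau+2(\tau-1)L_{\mathrm{max}}\sqrt{\kappa} \le \ell\tau(1+2q)$). Applying \Cref{theorem:stochastic-local-GDA} directly, using $\gamma\tau\mu\zeta \ge 1/\kappa$ for $p=0,\dots,P_0-1$, and the standard estimate $(1-1/\kappa)^{P_0} \le e^{-2(1+2q)}$, I get
\begin{align*}
\EE\bigl[\sqnorm{\vx_{\tau P_0} - \vx_\star}\bigr]
\le \tfrac{1}{e}\sqnorm{\vx_0 - \vx_\star}
+ C_1\bigl(1 + \tfrac{\tau}{\sqrt{\kappa}}\bigr)\tfrac{\sigma^2}{\ell\mu},
\end{align*}
for an explicit constant $C_1$ depending on $1+q$; this produces the first and third terms of the stated bound after multiplication by the telescoping factor from Phase 2.

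For the \emph{decay phase} $p \ge P_0$, the step-size $\gamma_p = \tfrac{2p+1}{\tau\mu(p+1)^2}$ is designed precisely so that $1 - \gamma_p\tau\mu = \tfrac{p^2}{(p+1)^2}$, and one checks that $\zeta_p \ge 1$ for $p \ge P_0$, hence $1 - \gamma_p\tau\mu\zeta_p \le \tfrac{p^2}{(p+1)^2}$. Multiplying the recursion by $(p+1)^2$ and telescoping from $P_0$ to $R-1$ yields
\begin{align*}
R^2\,\EE\bigl[\sqnorm{\vx_{\tau R}-\vx_\star}\bigr]
\le P_0^2\,\EE\bigl[\sqnorm{\vx_{\tau P_0}-\vx_\star}\bigr]
+ \sum_{p=P_0}^{R-1} (p+1)^2\gamma_p A(\gamma_p)\sigma^2.
\end{align*}
Using $\gamma_p \le 2/(\tau\mu(p+1))$ and plugging in $A(\gamma_p)$, the noise sum splits into two parts: a $\cO(\sigma^2/(\tau\mu^2))\cdot\sum 1$ part producing the $\frac{(1+q)\sigma^2}{\mu^2 T}$ term after dividing by $R^2$, and a $\cO(\tau L_{\mathrm{max}}\sigma^2/\mu^3)\cdot\sum 1/(p+1)$ part whose harmonic sum is where the $\log T$ factor in the fourth term originates.

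The main technical obstacle, and the reason I would allocate most of the careful work to Phase 2, is keeping the coefficient of the local-drift contribution $(4+\sqrt{3}q)\gamma_p\tau L_{\mathrm{max}}$ properly scaled: because $\gamma_p$ is no longer uniformly small, one must verify round-by-round that this factor stays below $1$ (so that $\zeta_p$ remains bounded below by a positive constant) and that the accumulated contribution only produces the $\log T$ factor rather than a $T$ factor. Once the one-round recursion is firmly established with time-varying $\gamma_p$ and the bound $\zeta_p \ge 1$ is confirmed throughout Phase 2, the two-phase telescoping is essentially the same pattern as standard diminishing-step-size SGD analyses for strongly convex problems, and combining the Phase 1 and Phase 2 bounds with $T = \tau R$ yields the four terms in the theorem.
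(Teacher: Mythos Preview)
Your two-phase strategy---apply \Cref{theorem:stochastic-local-GDA} with the constant step to control the burn-in, then exploit $1-\gamma_p\tau\mu = p^2/(p+1)^2$ in the decay phase, multiply by $(p+1)^2$, telescope, and split the noise sum into a $\sum 1$ piece and a harmonic $\sum 1/(p+1)$ piece---is exactly the paper's proof. One correction: the per-round noise term in the recursion underlying \Cref{theorem:stochastic-local-GDA} is $\gamma_p^2\tau\sigma^2\bigl(1+(\tau-1)(\cdots)\bigr)$, not $\gamma_p\sigma^2/(\mu\zeta_p)(1+\cdots)$; the $1/(\mu\zeta)$ factor in the theorem statement appears only after geometric summation and must not be inserted into the single-round inequality. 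With your $A(\gamma_p)$ as written, the leading piece after multiplying by $(p+1)^2$ would be $\Theta((p+1)/(\tau\mu^2))$ rather than $\Theta(1/(\tau\mu^2))$, and the telescoped sum would miss the target by a factor of $R$---your stated output $\sigma^2/(\mu^2 T)$ is correct, but it follows from the $\gamma_p^2$ form, not from the formula you wrote. (A smaller slip: in the burn-in phase $\gamma\tau\mu\zeta \ge 1/(\kappa(1+2q))$, not $1/\kappa$, so the contraction after $P_0$ rounds is $\le e^{-2}$ rather than $e^{-2(1+2q)}$; this still suffices for the $1/e$ in the theorem.)
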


\section{Numerical Experiments}\label{sec:numerical_experiments}

In this section, we conduct experiments to assess the empirical performance of \abbvname and verify our theory.
We focus on two setups: a multiplayer game with quadratic objectives, and a distributed mobile robot control problem.
Details of experiments are provided in \cref{section:details-of-experiments}.

\begin{figure*}[t]
\centering
\begin{subfigure}[b]{0.32\textwidth}
    \centering
    \includegraphics[width=\textwidth]{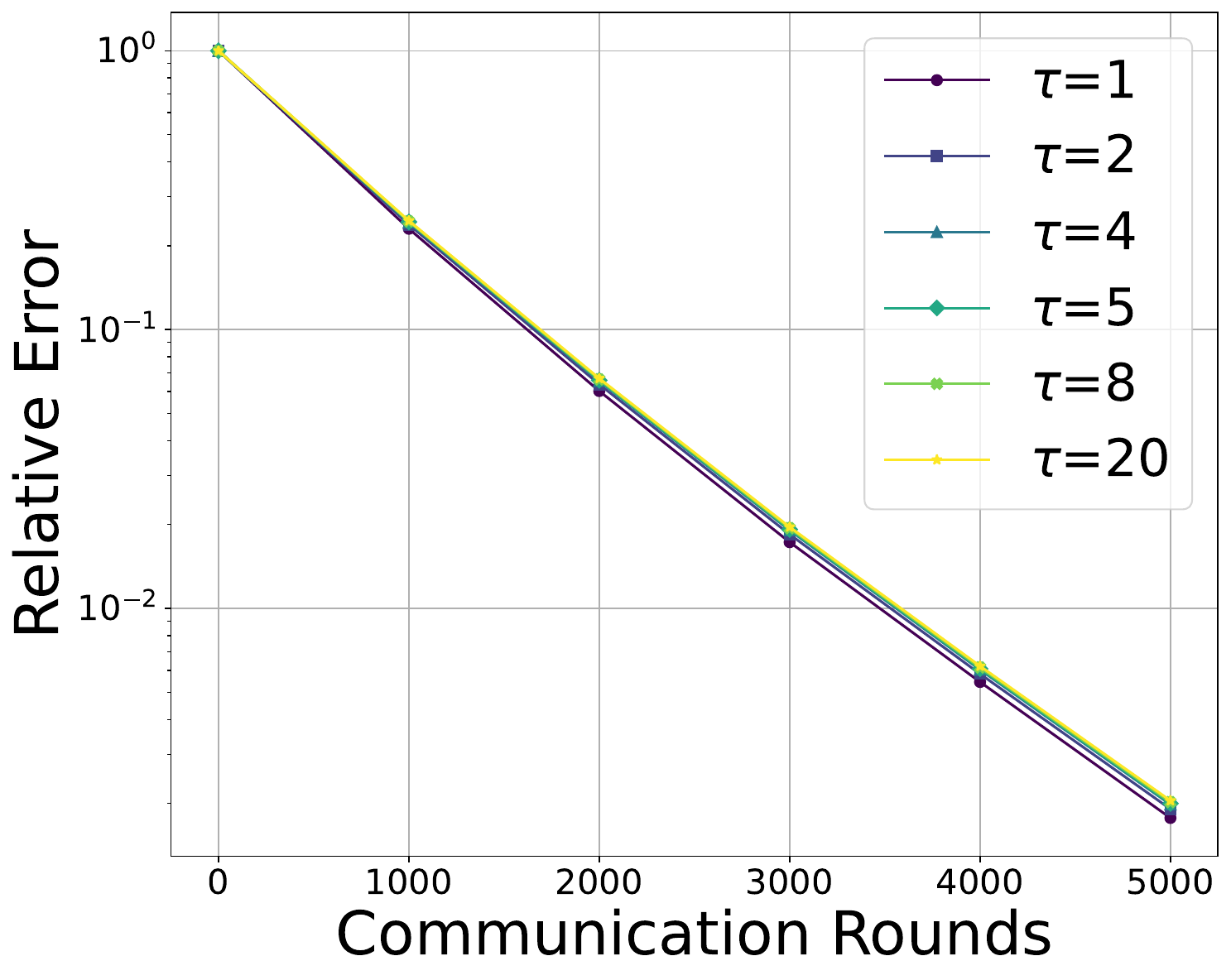}
    \caption{}\label{fig:nplayer_det_theoretical_gamma_tau}
\end{subfigure}
\begin{subfigure}[b]{0.32\textwidth}
    \centering
    \includegraphics[width=\textwidth]{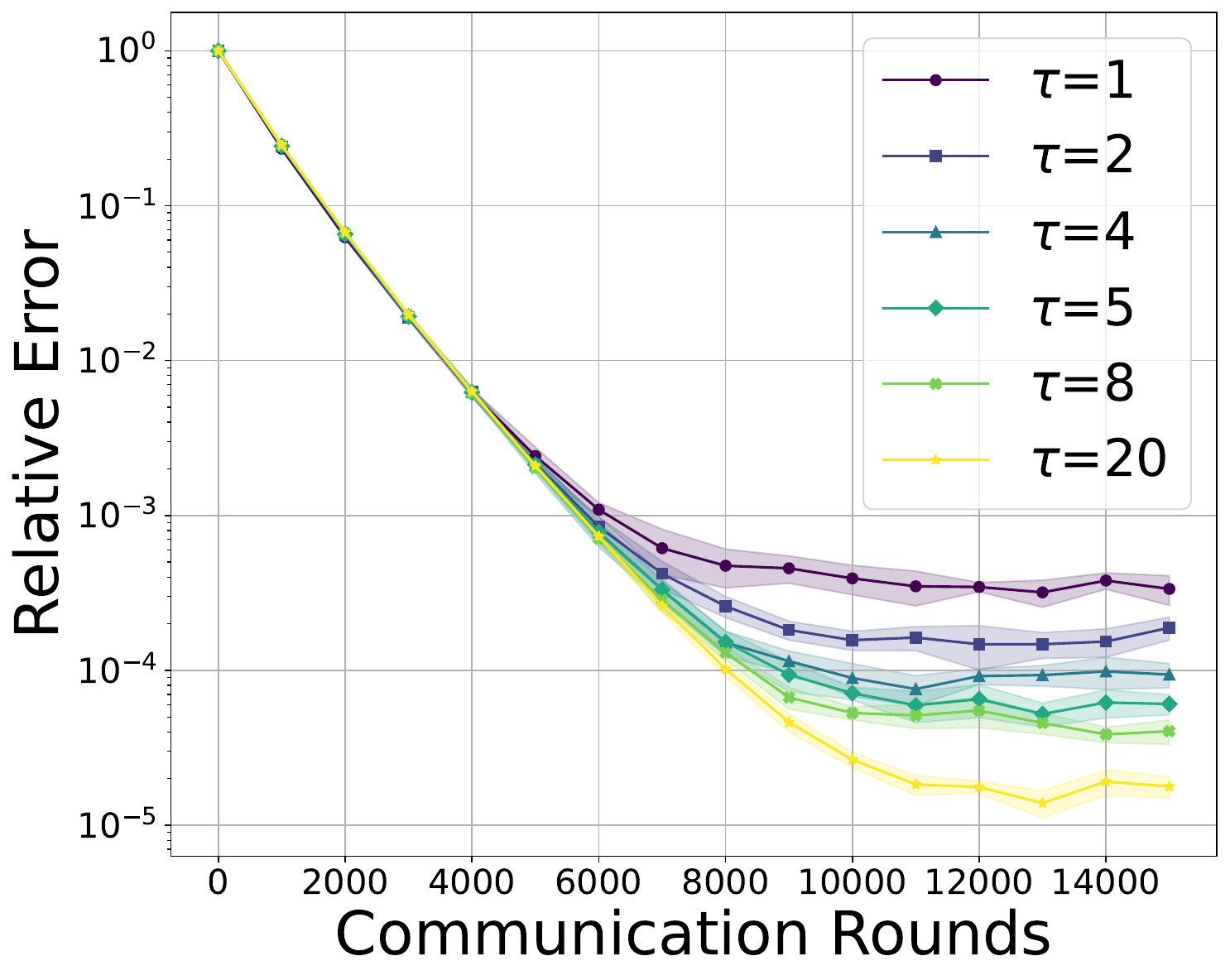}
    \caption{}\label{fig:nplayer_stoch_theoretical_gamma_tau}
\end{subfigure}
\centering
\begin{subfigure}[b]{0.32\textwidth}
    \centering
    \includegraphics[width=\textwidth]{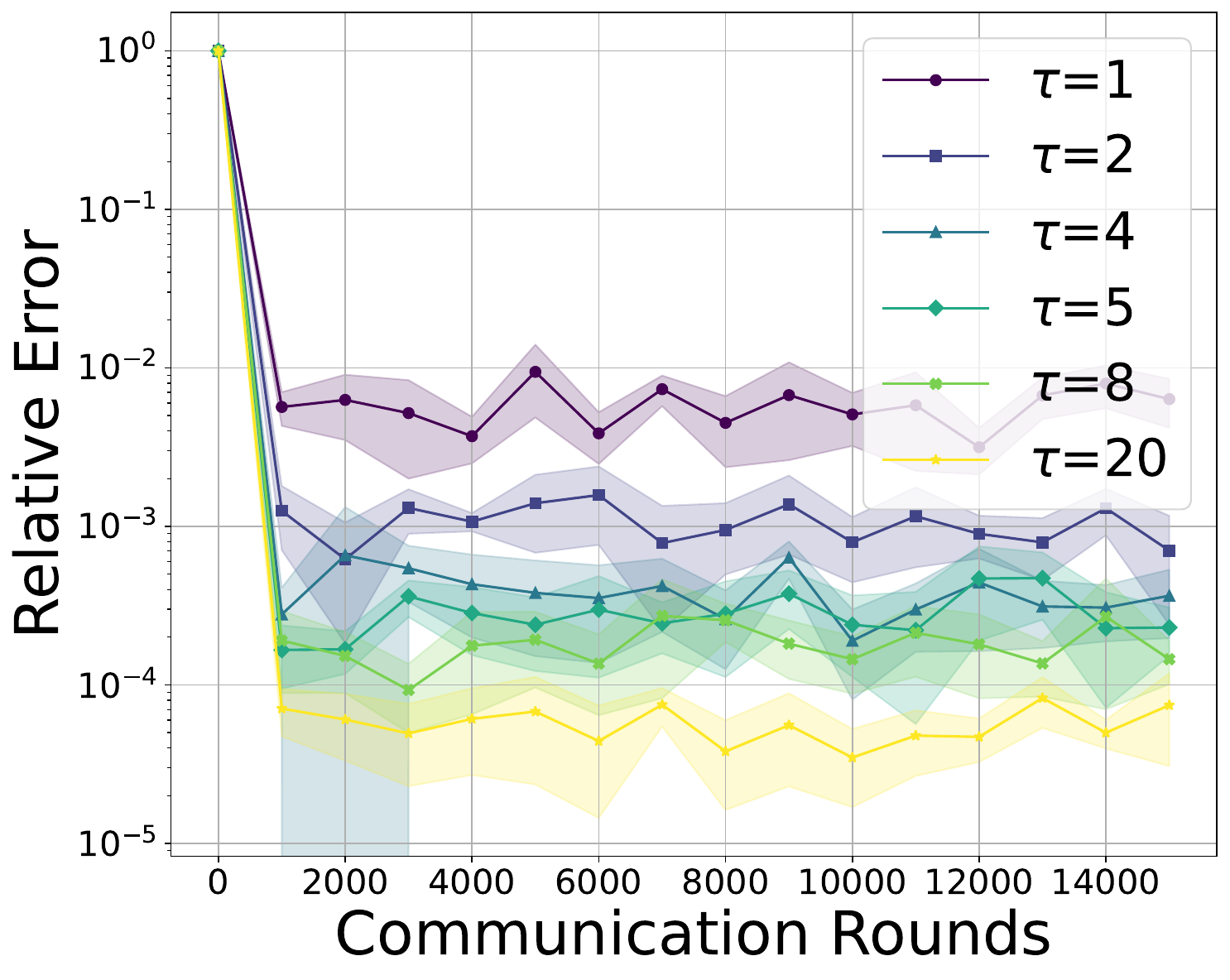}
    \caption{}\label{subfig:robot-accuracy}
\end{subfigure}
    \caption{Performance plots for \abbvname. 
    Figures~\ref{fig:nplayer_det_theoretical_gamma_tau} (deterministic) and \ref{fig:nplayer_stoch_theoretical_gamma_tau} (stochastic) show the relative error $\frac{\sqnorm{\vx_k - \vx_\star}}{\sqnorm{\vx_0 - \vx_\star}}$ on the $n$-player game defined by~\eqref{eqn:n_player_objective} with different values of $\tau$, using theoretical step-sizes.
    (We provide additional experiments for this $n$-player game setup in Appendix~\ref{section:additional-experiments}.)
    Figure \ref{subfig:robot-accuracy} shows the relative error in the (stochastic) mobile robot control setup \eqref{eqn:mobile-robot-objectives} for distinct values of $\tau$. 
    }\label{fig:nplayer_theoretical}
\end{figure*}

\subsection{Quadratic $n$-player game}
\label{sec:n-player-experiment}

We consider an $n$-player game where the local function of the $i$-th player is given by
\begin{equation}\label{eqn:n_player_objective}
\textstyle
    f_i (x^i; x^{-i}) := \frac{1}{M} \sum_{m=1}^M f_{i,m} (x^i; x^{-i}),
\end{equation}
for $i=1,\dots,n$ (with $d_1 = \dots = d_n = d$). 
In this setting, each $f_{i,m} (x^i; x^{-i})$ takes the form $f_{i,m} (x^i; x^{-i}) = \frac{1}{2} \langle x^i, \mathbf{A}_{i,m} x^i \rangle + \sum_{\substack{1 \leq j \leq n, j \neq i}} \langle x^i, \mathbf{B}_{i,j,m} x^j \rangle + \langle a_{i,m}, x^i \rangle$,
where $\mathbf{A}_{i,m}, \mathbf{B}_{i,j,m} \in \mathbb{R}^{d \times d}$ and $a_{i,m} \in \mathbb{R}^d$ for $m = 1, \dots, M$. 

\paragraph{Connection to game \& control theory literature.}
The above $n$-player game formulation has been often considered in game and control theory literature and has been used in recent works on distributed games (Nash equilibrium search) \citep{SalehisadaghianiPavel2016_distributed, YeHu2017_distributed, LiDing2019_distributed, TatarenkoShiNedic2021_geometric, KalyvaPsillakis2024_distributed}.
In connection with this literature, in \cref{section:robot-experiments}, we demonstrate an experiment on a concrete robot control setup from \citep{KalyvaPsillakis2024_distributed}.

We run \abbvname with the theoretical step-size $\gamma = \nicefrac{1}{\left(\ell\tau + 2(\tau - 1) L_{\mathrm{max}} \sqrt{\kappa} \right)}$ from Theorems~\ref{theorem:deterministic-local-GDA} and \ref{theorem:stochastic-local-GDA} with $\tau \in \{1, 2, 4, 5, 8, 20\}$.
We set the cocoercivity parameter to $\ell = \nicefrac{L^2}{\mu}$ following \citep{Facchinei2003FiniteDimensionalVI}, where $L$ and $\mu$ are explicitly computed Lipschitz constant and strong monotonicity parameter of $\opF$.
Figure~\ref{fig:nplayer_det_theoretical_gamma_tau} displays the results from Deterministic \abbvname, where we observe that all values of $\tau$ produce indistinguishable performance plots (which is predicted, as $\gamma$ scales down with $\tau$). 
Figure~\ref{fig:nplayer_stoch_theoretical_gamma_tau} shows results from the stochastic setting (we mini-batch from the finite sum \eqref{eqn:n_player_objective}), where we repeat each experiment 5 times and plot the mean relative error with standard deviation (shaded region).
It demonstrates that \abbvname with larger synchronization interval $\tau$ provides a clear benefit of achieving smaller relative error $\frac{\sqnorm{\vx_k - \vx_\star}}{\sqnorm{\vx_0 - \vx_\star}}$ using the same number of communication rounds.
These results verify our theoretical predictions from \cref{section:algorithm}.

In \cref{subsection:n-player-tuning-experiment}, we provide additional simulations regarding the case where the precise theoretical parameters $\mu,\ell,L_\mathrm{max}$ are not known, so that $\gamma$ has to be tuned empirically. 
It demonstrates that in practice, $(\tau, \gamma)$ can be effective tunable hyperparameters for gaining communication efficiency.

\begin{figure}[t]
    \centering
    \includegraphics[width=0.45\linewidth]{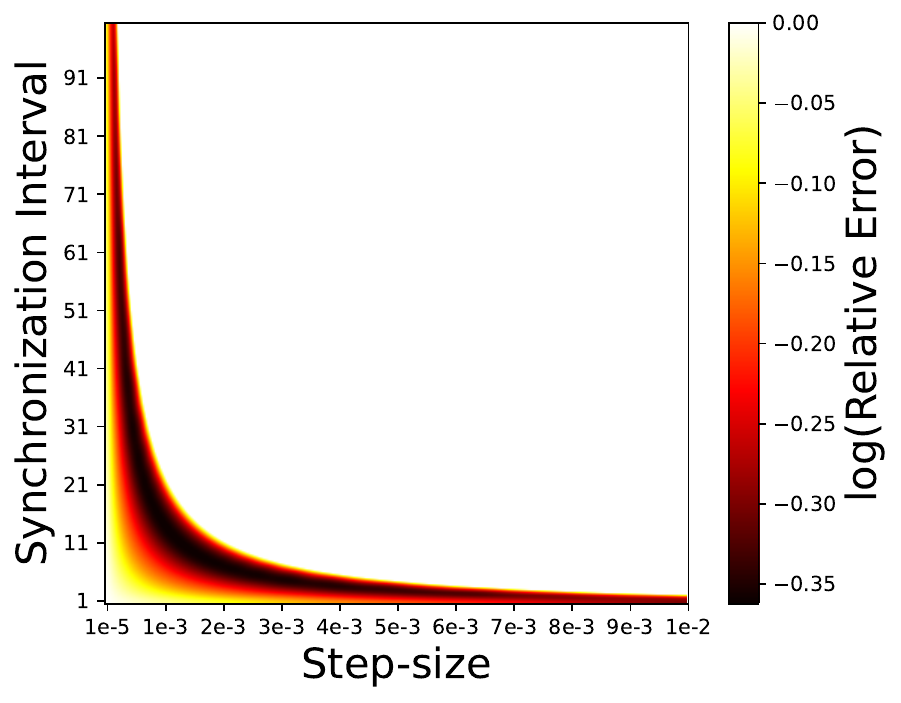}
    \caption{Heatmap of relative errors in logarithmic scale.}
    \label{fig:heatmap}
\end{figure}

\paragraph{Performance of \abbvname for different $(\gamma, \tau)$ pairs.} 
Figure \ref{fig:heatmap} displays the heatmap of relative errors (log-scale) after $100$ communication rounds of Deterministic \abbvname in the case of quadratic game with $n=2$.
White and yellow regions indicate divergence/poor performance; darker regions indicate lower relative errors.

Figure \ref{fig:heatmap} reveals a trend---for a fixed $\gamma$, \abbvname's performance improves as $\tau$ increases up to certain threshold, after which it declines and finally diverges.
Another key observation is that the dark region of the heatmap (signifying the best performance) takes the shape of a hyperbola.
This is consistent with our \cref{theorem:deterministic-local-GDA}, showing the relationship $\gamma_\tau \propto \nicefrac{1}{\tau}$ where $\gamma_\tau$ is the optimal step-size choice given $\tau$ (providing fastest convergence).

\subsection{Distributed mobile robot control}
\label{section:robot-experiments}

Here, we consider a distributed control problem of mobile robots from \citep{KalyvaPsillakis2024_distributed}. 
This is a multi-agent system where each robot has its own objective, depending on the positions $x^i \in \reals^d$ (corresponding to action/strategy in our formulation of multiplayer game) of each $i$-th robot. 
Specifically, the objective function of the robot $i$ is:
\begin{equation}
    f_i(\vx) = \underbrace{\frac{a_i}{2}\| x^i - x^i_\mathrm{anc}\|^2}_{:=J_{i1}(x^i)} + \underbrace{\frac{b_i}{2} \sum_{j = 1}^N \| x^i - x^j - h_{ij}\|^2}_{:=J_{i2}(x^i; x^{-i})}
    \label{eqn:mobile-robot-objectives}
\end{equation}
where $J_{i1}(x^i)$ represents the cost penalizing the distance of agent $i$ from the anchor point $x^i_\mathrm{anc} \in \reals^d$, and $J_{i2}(x^i; x^{-i})$ is the cost associated with the relative displacement between the robots' positions. 
The control problem finds an equilibrium of the $n$-player game, which is the concatenation of all robots' position vectors, ensuring that each robot stays close to $x^i_\mathrm{anc}$ while maintaining designated displacement from other robots. 

We implement \abbvname with synchronization intervals $\tau \in \{ 1, 2, 4, 5, 8, 20 \}$ and the theoretical step-size $\gamma = \frac{1}{\ell \tau + L_{\max}(\tau -1) \sqrt{\kappa}}$. 
\Cref{subfig:robot-accuracy} shows that with larger values of $\tau$, \abbvname achieves better accuracy (in terms of distance to $\vx_\star$) within a given number of communication rounds. 
This highlights the potential benefit of using local update steps in solving real-world problems formulated as multiplayer games.

\section{Conclusion}

In this paper, we introduce Multiplayer Federated Learning (MpFL), an FL framework that models setups where clients, acting strategically in their own interests, collaborate through a central server to train models (actions) with the goal of reaching an equilibrium.
We propose the \abbvname algorithm for handling MpFL and establish its tight convergence guarantees under heterogeneous settings where each player has distinct objectives and data distributions. 
We show that \abbvname achieves improved communication efficiency, mitigating the primary computational bottleneck in large-scale applications.

Our work offers several promising future research directions, including the incorporation of ideas such as Extragradient methods \citep{Korpelevich1976_extragradient, gorbunov2022extragradient}, asynchronous updates \citep{dean2012large, Stich2019_local}, gradient compression \citep{alistarh2017qsgd}, gradient tracking \citep{nedic2017achieving}, and algorithmic correction for client drifts \citep{KarimireddyKaleMohriReddiStichSuresh2020_scaffold, MishchenkoMalinovskyStichRichtarik2022_proxskip}. 
We anticipate that our initiation of the study of MpFL will stimulate further research along these and related directions.

\newpage

\section*{Acknowledgments}
Sayantan Choudhury acknowledges support from the Chen Family Dissertation Fellowship. Nicolas Loizou acknowledges support from CISCO Research and the JHU Catalyst Award.

\bibliographystyle{abbrvnat}
\bibliography{ref}

@article{alistarh2017qsgd,
  title={{{QSGD}}: Communication-efficient {{SGD}} via gradient quantization and encoding},
  author={Alistarh, Dan and Grubic, Demjan and Li, Jerry and Tomioka, Ryota and Vojnovic, Milan},
  journal={Neural Information Processing Systems},
  year={2017}
}

@article{beznosikov2020distributed,
  title={Distributed saddle-point problems: Lower bounds, near-optimal and robust algorithms},
  author={Beznosikov, Aleksandr and Samokhin, Valentin and Gasnikov, Alexander},
  journal={arXiv preprint arXiv:2010.13112},
  year={2020}
}

@article{choudhury2024single,
  title={Single-call stochastic extragradient methods for structured non-monotone variational inequalities: Improved analysis under weaker conditions},
  author={Choudhury, Sayantan and Gorbunov, Eduard and Loizou, Nicolas},
  journal={Neural Information Processing Systems},
  year={2024}
}

@article{DaskalakisIlyasSyrgkanisZeng2018_training,
  title = {Training {{GANs}} with Optimism},
  author = {Daskalakis, Constantinos and Ilyas, Andrew and Syrgkanis, Vasilis and Zeng, Haoyang},
  year = {2018},
  journal = {International Conference on Learning Representations}
}

@article{dean2012large,
  title={Large scale distributed deep networks},
  author={Dean, Jeffrey and Corrado, Greg and Monga, Rajat and Chen, Kai and Devin, Matthieu and Mao, Mark and Ranzato, Marc'aurelio and Senior, Andrew and Tucker, Paul and Yang, Ke and others},
  journal={Neural Information Processing Systems},
  year={2012}
}

@article{deng2021local,
  title={Local stochastic gradient descent ascent: Convergence analysis and communication efficiency},
  author={Deng, Yuyang and Mahdavi, Mehrdad},
  journal={International Conference on Artificial Intelligence and Statistics},
  year={2021},
}

@article{DengKamaniMahdavi2020_adaptive,
  title = {Adaptive Personalized Federated Learning},
  author = {Deng, Yuyang and Kamani, Mohammad Mahdi and Mahdavi, Mehrdad},
  year = {2020},
  journal = {arXiv:2003.13461},
  eprint = {2003.13461},
  archiveprefix = {arXiv}
}

@article{diakonikolas2021efficient,
  title={Efficient methods for structured nonconvex-nonconcave min-max optimization},
  author={Diakonikolas, Jelena and Daskalakis, Constantinos and Jordan, Michael I},
  journal={International Conference on Artificial Intelligence and Statistics},
  year={2021}
}

@article{DieuleveutPatel2019_communication,
  title = {Communication Trade-Offs for {{Local-SGD}} with Large Step Size},
  author = {Dieuleveut, Aymeric and Patel, Kumar Kshitij},
  editor = {Wallach, H. and Larochelle, H. and Beygelzimer, A. and {dAlch{\'e}-Buc}, F. and Fox, E. and Garnett, R.},
  year = {2019},
  journal = {Neural Information Processing Systems}
}

@article{DonahueKleinberg2021_modelsharing,
  title = {Model-Sharing Games: {{Analyzing}} Federated Learning under Voluntary Participation},
  author = {Donahue, Kate and Kleinberg, Jon},
  year = {2021},
  journal = {AAAI Conference on Artificial Intelligence},
}

@article{FallahMokhtariOzdaglar2020_personalized,
  title = {Personalized Federated Learning with Theoretical Guarantees: A Model-Agnostic Meta-Learning Approach},
  author = {Fallah, Alireza and Mokhtari, Aryan and Ozdaglar, Asuman},
  editor = {Larochelle, H. and Ranzato, M. and Hadsell, R. and Balcan, M.F. and Lin, H.},
  year = {2020},
  journal = {Neural Information Processing Systems}
}

@book{Facchinei2003FiniteDimensionalVI,
    title={Finite-Dimensional Variational Inequalities and Complementarity Problems},
    author={Francisco Facchinei and J.-S. Pang},
    year={2003},
    publisher={Springer}
}

@article{GempPatelBachrachLanctotDasagiMarrisPiliourasLiuTuyls2024_steering,
  title = {Steering Language Models with Game-Theoretic Solvers},
  author = {Gemp, Ian and Patel, Roma and Bachrach, Yoram and Lanctot, Marc and Dasagi, Vibhavari and Marris, Luke and Piliouras, Georgios and Liu, Siqi and Tuyls, Karl},
  year = {2024},
  journal = {ICML Agentic Markets Workshop}
}

@article{GoodfellowPouget-AbadieMirzaXuWarde-FarleyOzairCourvilleBengio2014_generative,
  title = {Generative Adversarial Nets},
  author = {Goodfellow, Ian and {Pouget-Abadie}, Jean and Mirza, Mehdi and Xu, Bing and {Warde-Farley}, David and Ozair, Sherjil and Courville, Aaron and Bengio, Yoshua},
  editor = {Ghahramani, Z. and Welling, M. and Cortes, C. and Lawrence, N. and Weinberger, K.Q.},
  year = {2014},
  journal = {Neural Information Processing Systems}
}

@article{GorbunovHanzelyRichtarik2021_local,
  title = {Local {{SGD}}: {{Unified}} Theory and New Efficient Methods},
  author = {Gorbunov, Eduard and Hanzely, Filip and Richtarik, Peter},
  editor = {Banerjee, Arindam and Fukumizu, Kenji},
  year = {2021},
  journal = {International Conference on Artificial Intelligence and Statistics}
}

@article{gorbunov2022stochastic,
  title={Stochastic extragradient: General analysis and improved rates},
  author={Gorbunov, Eduard and Berard, Hugo and Gidel, Gauthier and Loizou, Nicolas},
  journal={International Conference on Artificial Intelligence and Statistics},
  year={2022}
}

@article{gower2019sgd,
  title={SGD: General analysis and improved rates},
  author={Gower, Robert Mansel and Loizou, Nicolas and Qian, Xun and Sailanbayev, Alibek and Shulgin, Egor and Richt{\'a}rik, Peter},
  journal={International Conference on Machine Learning},
  year={2019}
}

@article{HaddadpourMahdavi2019_convergence,
  title = {On the Convergence of Local Descent Methods in Federated Learning},
  author = {Haddadpour, Farzin and Mahdavi, Mehrdad},
  year = {2019},
  journal = {arXiv:1910.14425},
  eprint = {1910.14425},
  archiveprefix = {arXiv}
}

@article{HanzelyHanzelyHorvathRichtarik2020_lower,
  title = {Lower Bounds and Optimal Algorithms for Personalized Federated Learning},
  author = {Hanzely, Filip and Hanzely, Slavom{\'{\i}}r and Horv{\'a}th, Samuel and Richtarik, Peter},
  editor = {Larochelle, H. and Ranzato, M. and Hadsell, R. and Balcan, M.F. and Lin, H.},
  year = {2020},
  journal = {Neural Information Processing Systems}
}

@article{HarsanyiSelten1988_general,
  title = {A General Theory of Equilibrium Selection in Games},
  author = {Harsanyi, John C and Selten, Reinhard},
  year = {1988},
  journal = {MIT Press Books},
  volume = {1},
  publisher = {The MIT Press}
}

@article{JacobShenFarinaAndreas2024_consensus,
  title = {The Consensus Game: {{Language}} Model Generation via Equilibrium Search},
  author = {Jacob, Athul Paul and Shen, Yikang and Farina, Gabriele and Andreas, Jacob},
  year = {2024},
  journal = {International Conference on Learning Representations}
}

@article{KarimireddyKaleMohriReddiStichSuresh2020_scaffold,
  title = {{{SCAFFOLD}}: {{Stochastic}} Controlled Averaging for Federated Learning},
  author = {Karimireddy, Sai Praneeth and Kale, Satyen and Mohri, Mehryar and Reddi, Sashank and Stich, Sebastian and Suresh, Ananda Theertha},
  editor = {III, Hal Daum{\'e} and Singh, Aarti},
  year = {2020},
  journal = {International Conference on Machine Learning}
}

@article{KhaledMishchenkoRichtarik2020_tighter,
  title = {Tighter Theory for Local {{SGD}} on Identical and Heterogeneous Data},
  author = {Khaled, Ahmed and Mishchenko, Konstantin and Richtarik, Peter},
  editor = {Chiappa, Silvia and Calandra, Roberto},
  year = {2020},
  journal = {International Conference on Artificial Intelligence and Statistics}
}

@article{KoloskovaLoizouBoreiriJaggiStich2020_unified,
  title = {A Unified Theory of Decentralized {{SGD}} with Changing Topology and Local Updates},
  author = {Koloskova, Anastasia and Loizou, Nicolas and Boreiri, Sadra and Jaggi, Martin and Stich, Sebastian},
  editor = {III, Hal Daum{\'e} and Singh, Aarti},
  year = {2020},
  journal = {International Conference on Machine Learning}
}

@article{KonecnyMcMahanRamageRichtarik2016_federated,
  title = {Federated Optimization: {{Distributed}} Machine Learning for on-Device Intelligence},
  author = {Kone{\v c}n{\'y}, Jakub and McMahan, H. Brendan and Ramage, Daniel and Richt{\'a}rik, Peter},
  year = {2016},
  journal = {arXiv:1610.02527},
}

@book{Kreps1990_game,
  title = {Game Theory and Economic Modelling},
  author = {Kreps, David M},
  year = {1990},
  publisher = {Oxford University Press}
}

@article{KrepsMilgromRobertsWilson1982_rational,
  title = {Rational Cooperation in the Finitely Repeated Prisoners' Dilemma},
  author = {Kreps, David M and Milgrom, Paul and Roberts, John and Wilson, Robert},
  year = {1982},
  journal = {Journal of Economic theory},
  volume = {27},
  number = {2},
  pages = {245--252},
  publisher = {Elsevier}
}

@article{LanctotZambaldiGruslysLazaridouTuylsPerolatSilverGraepel2017_unified,
  title = {A Unified Game-Theoretic Approach to Multiagent Reinforcement Learning},
  author = {Lanctot, Marc and Zambaldi, Vinicius and Gruslys, Audrunas and Lazaridou, Angeliki and Tuyls, Karl and P{\'e}rolat, Julien and Silver, David and Graepel, Thore},
  year = {2017},
  journal = {Neural Information Processing Systems}
}

@article{LiSahuZaheerSanjabiTalwalkarSmith2020_federated,
  title = {Federated Optimization in Heterogeneous Networks},
  author = {Li, Tian and Sahu, Anit Kumar and Zaheer, Manzil and Sanjabi, Maziar and Talwalkar, Ameet and Smith, Virginia},
  editor = {Dhillon, I. and Papailiopoulos, D. and Sze, V.},
  year = {2020},
  journal = {Annual Conference on Machine Learning and Systems}
}

@article{xu2021federated,
  title={Federated learning for healthcare informatics},
  author={Xu, Jie and Glicksberg, Benjamin S and Su, Chang and Walker, Peter and Bian, Jiang and Wang, Fei},
  journal={Journal of healthcare informatics research},
  volume={5},
  pages={1--19},
  year={2021},
  publisher={Springer}
}

@article{antunes2022federated,
  title={Federated learning for healthcare: Systematic review and architecture proposal},
  author={Antunes, Rodolfo Stoffel and Andr{\'e} da Costa, Cristiano and K{\"u}derle, Arne and Yari, Imrana Abdullahi and Eskofier, Bj{\"o}rn},
  journal={ACM Transactions on Intelligent Systems and Technology (TIST)},
  volume={13},
  number={4},
  pages={1--23},
  year={2022},
  publisher={ACM New York, NY}
}

@article{li2021survey,
  title={A survey on federated learning systems: Vision, hype and reality for data privacy and protection},
  author={Li, Qinbin and Wen, Zeyi and Wu, Zhaomin and Hu, Sixu and Wang, Naibo and Li, Yuan and Liu, Xu and He, Bingsheng},
  journal={IEEE Transactions on Knowledge and Data Engineering},
  volume={35},
  number={4},
  pages={3347--3366},
  year={2021},
  publisher={IEEE}
}

@article{LiuHuangLuoHuangLiuChenFengChenYuYang2020_fedvision,
  title = {{{FedVision}}: {{An}} Online Visual Object Detection Platform Powered by Federated Learning},
  author = {Liu, Yang and Huang, Anbu and Luo, Yun and Huang, He and Liu, Youzhi and Chen, Yuanyuan and Feng, Lican and Chen, Tianjian and Yu, Han and Yang, Qiang},
  year = {2020},
  journal = {AAAI Conference on Artificial Intelligence},
  annotation = {Abstract Note: \&lt;p\&gt;Visual object detection is a computer vision-based artificial intelligence (AI) technique which has many practical applications (e.g., fire hazard monitoring). However, due to privacy concerns and the high cost of transmitting video data, it is highly challenging to build object detection models on centrally stored large training datasets following the current approach. Federated learning (FL) is a promising approach to resolve this challenge. Nevertheless, there currently lacks an easy to use tool to enable computer vision application developers who are not experts in federated learning to conveniently leverage this technology and apply it in their systems. In this paper, we report \&lt;em\&gt;FedVision\&lt;/em\&gt; - a machine learning engineering platform to support the development of federated learning powered computer vision applications. The platform has been deployed through a collaboration between \&lt;em\&gt;WeBank\&lt;/em\&gt; and \&lt;em\&gt;Extreme Vision\&lt;/em\&gt; to help customers develop computer vision-based safety monitoring solutions in smart city applications. Over four months of usage, it has achieved significant efficiency improvement and cost reduction while removing the need to transmit sensitive data for three major corporate customers. To the best of our knowledge, this is the first real application of FL in computer vision-based tasks.\&lt;/p\&gt;}
}

@article{hard2018federated,
  title={Federated learning for mobile keyboard prediction},
  author={Hard, Andrew and Rao, Kanishka and Mathews, Rajiv and Ramaswamy, Swaroop and Beaufays, Fran{\c{c}}oise and Augenstein, Sean and Eichner, Hubert and Kiddon, Chlo{\'e} and Ramage, Daniel},
  journal={arXiv preprint arXiv:1811.03604},
  year={2018}
}

@article{liu2021federated,
  title={Federated learning meets natural language processing: A survey},
  author={Liu, Ming and Ho, Stella and Wang, Mengqi and Gao, Longxiang and Jin, Yuan and Zhang, He},
  journal={arXiv:2107.12603},
  year={2021}
}

@article{li2020federated,
  title={Federated learning: Challenges, methods, and future directions},
  author={Li, Tian and Sahu, Anit Kumar and Talwalkar, Ameet and Smith, Virginia},
  journal={IEEE signal processing magazine},
  volume={37},
  number={3},
  pages={50--60},
  year={2020},
  publisher={IEEE}
}

@article{mcmahan2017communication,
  title={Communication-efficient learning of deep networks from decentralized data},
  author={McMahan, Brendan and Moore, Eider and Ramage, Daniel and Hampson, Seth and y Arcas, Blaise Aguera},
  journal={International Conference on Artificial Intelligence and Statistics},
  year={2017},
}

@article{konevcny2016federated,
  title={Federated Learning: Strategies for Improving Communication Efficiency},
  author={Kone{\v{c}}n{\`y}, Jakub and Brendan McMahan, H and Yu, Felix X and Richt{\'a}rik, Peter and Theertha Suresh, Ananda and Bacon, Dave},
  journal={arXiv e-prints},
  pages={arXiv--1610},
  year={2016}
}

@article{kairouz2021advances,
  title={Advances and open problems in federated learning},
  author={Kairouz, Peter and McMahan, H Brendan and Avent, Brendan and Bellet, Aur{\'e}lien and Bennis, Mehdi and Bhagoji, Arjun Nitin and Bonawitz, Kallista and Charles, Zachary and Cormode, Graham and Cummings, Rachel and others},
  journal={Foundations and trends{\textregistered} in machine learning},
  volume={14},
  number={1--2},
  pages={1--210},
  year={2021},
  publisher={Now Publishers, Inc.}
}

@article{LiuKangZouPuHeYeOuyangZhangYang2024_vertical,
  title = {Vertical Federated Learning: {{Concepts}}, Advances, and Challenges},
  author = {Liu, Yang and Kang, Yan and Zou, Tianyuan and Pu, Yanhong and He, Yuanqin and Ye, Xiaozhou and Ouyang, Ye and Zhang, Ya-Qin and Yang, Qiang},
  year = {2024},
  journal = {IEEE Transactions on Knowledge and Data Engineering},
  volume = {36},
  number = {7},
  pages = {3615--3634}
}

@article{LiuZhangKangLiChenHongYang2022_fedbcd,
  title = {{{FedBCD}}: A Communication-Efficient Collaborative Learning Framework for Distributed Features},
  author = {Liu, Yang and Zhang, Xinwei and Kang, Yan and Li, Liping and Chen, Tianjian and Hong, Mingyi and Yang, Qiang},
  year = {2022},
  journal = {IEEE Transactions on Signal Processing},
  volume = {70},
  pages = {4277--4290}
}

@article{LiZhuLuongNiyatoWuZhangChen2022_applications,
  title = {Applications of Multi-Agent Reinforcement Learning in Future {{Internet}}: {{A}} Comprehensive Survey},
  author = {Li, Tianxu and Zhu, Kun and Luong, Nguyen Cong and Niyato, Dusit and Wu, Qihui and Zhang, Yang and Chen, Bing},
  year = {2022},
  journal = {IEEE Communications Surveys \& Tutorials},
  volume = {24},
  number = {2},
  pages = {1240--1279},
  publisher = {IEEE}
}

@article{LoizouBerardGidelMitliagkasLacoste-Julien2021_stochastic,
  title = {Stochastic Gradient Descent-Ascent and Consensus Optimization for Smooth Games: {{Convergence}} Analysis under Expected Co-Coercivity},
  author = {Loizou, Nicolas and Berard, Hugo and Gidel, Gauthier and Mitliagkas, Ioannis and {Lacoste-Julien}, Simon},
  editor = {Beygelzimer, A. and Dauphin, Y. and Liang, P. and Vaughan, J. Wortman},
  year = {2021},
  journal = {Neural Information Processing Systems}
}

@book{LuceRaiffa1989_games,
  title = {Games and Decisions: {{Introduction}} and Critical Survey},
  author = {Luce, R Duncan and Raiffa, Howard},
  year = {1989},
  publisher = {Courier Corporation}
}

@article{McMahanMooreRamageHampsonArcas2017_communicationefficient,
  title = {Communication-Efficient Learning of Deep Networks from Decentralized Data},
  author = {McMahan, Brendan and Moore, Eider and Ramage, Daniel and Hampson, Seth and y Arcas, Blaise Aguera},
  editor = {Singh, Aarti and Zhu, Jerry},
  year = {2017},
  journal = {International Conference on Artificial Intelligence and Statistics}
}

@article{mertikopoulos2019learning,
  title={Learning in games with continuous action sets and unknown payoff functions},
  author={Mertikopoulos, Panayotis and Zhou, Zhengyuan},
  journal={Mathematical Programming},
  volume={173},
  pages={465--507},
  year={2019},
  publisher={Springer}
}

@article{MishchenkoMalinovskyStichRichtarik2022_proxskip,
  title = {{{ProxSkip}}: {{Yes}}! {{Local}} Gradient Steps Provably Lead to Communication Acceleration! {{Finally}}!},
  author = {Mishchenko, Konstantin and Malinovsky, Grigory and Stich, Sebastian and Richtarik, Peter},
  editor = {Chaudhuri, Kamalika and Jegelka, Stefanie and Song, Le and Szepesvari, Csaba and Niu, Gang and Sabato, Sivan},
  year = {2022},
  journal = {International Conference on Machine Learning}
}

@article{MitraJaafarPappasHassani2021_linear,
  title = {Linear Convergence in Federated Learning: {{Tackling}} Client Heterogeneity and Sparse Gradients},
  author = {Mitra, Aritra and Jaafar, Rayana and Pappas, George J. and Hassani, Hamed},
  editor = {Beygelzimer, A. and Dauphin, Y. and Liang, P. and Vaughan, J. Wortman},
  year = {2021},
  journal = {Neural Information Processing Systems}
}

@article{beznosikov2022decentralized,
  title={Decentralized local stochastic extra-gradient for variational inequalities},
  author={Beznosikov, Aleksandr and Dvurechenskii, Pavel and Koloskova, Anastasiia and Samokhin, Valentin and Stich, Sebastian U and Gasnikov, Alexander},
  journal={Neural Information Processing Systems},
  year={2022}
}

@article{Nash1951_noncooperative,
  title = {Non-Cooperative Games},
  author = {Nash, John},
  year = {1951},
  journal = {Annals of Mathematics},
  volume = {54},
  number = {2},
  eprint = {1969529},
  eprinttype = {jstor},
  pages = {286--295},
  publisher = {[Annals of Mathematics, Trustees of Princeton University on Behalf of the Annals of Mathematics, Mathematics Department, Princeton University]},
  urldate = {2024-09-24}
}

@article{NashJr1950_equilibrium,
  title = {Equilibrium Points in N-Person Games},
  author = {Nash Jr, John F},
  year = {1950},
  journal = {Proceedings of the national academy of sciences},
  volume = {36},
  number = {1},
  pages = {48--49}
}

@article{nedic2017achieving,
  title={Achieving geometric convergence for distributed optimization over time-varying graphs},
  author={Nedic, Angelia and Olshevsky, Alex and Shi, Wei},
  journal={SIAM Journal on Optimization},
  volume={27},
  number={4},
  pages={2597--2633},
  year={2017},
  publisher={SIAM}
}

@book{Schelling1980_strategy,
  title = {The Strategy of Conflict: With a New Preface by the Author},
  author = {Schelling, Thomas C},
  year = {1980},
  publisher = {Harvard university press}
}

@article{Shapley1953_stochastic,
  title = {Stochastic Games},
  author = {Shapley, Lloyd S},
  year = {1953},
  journal = {Proceedings of the national academy of sciences},
  volume = {39},
  number = {10},
  pages = {1095--1100},
  publisher = {National Acad Sciences}
}

@article{sharma2022federated,
  title={Federated minimax optimization: Improved convergence analyses and algorithms},
  author={Sharma, Pranay and Panda, Rohan and Joshi, Gauri and Varshney, Pramod},
  journal={International Conference on Machine Learning},
  year={2022},
}

@article{sokotaunified,
  title={A Unified Approach to Reinforcement Learning, Quantal Response Equilibria, and Two-Player Zero-Sum Games},
  author={Sokota, Samuel and D'Orazio, Ryan and Kolter, J Zico and Loizou, Nicolas and Lanctot, Marc and Mitliagkas, Ioannis and Brown, Noam and Kroer, Christian},
  journal={International Conference on Learning Representations},
    year={2023}
}

@article{mills2021multi,
  title={Multi-task federated learning for personalised deep neural networks in edge computing},
  author={Mills, Jed and Hu, Jia and Min, Geyong},
  journal={IEEE Transactions on Parallel and Distributed Systems},
  volume={33},
  number={3},
  pages={630--641},
  year={2021},
  publisher={IEEE}
}

@article{hanzely2020federated,
  title={Federated learning of a mixture of global and local models},
  author={Hanzely, Filip and Richt{\'a}rik, Peter},
  journal={arXiv:2002.05516},
  year={2020}
}

@article{SmithChiangSanjabiTalwalkar2017_federated,
  title = {Federated Multi-Task Learning},
  author = {Smith, Virginia and Chiang, Chao-Kai and Sanjabi, Maziar and Talwalkar, Ameet S},
  editor = {Guyon, I. and Luxburg, U. Von and Bengio, S. and Wallach, H. and Fergus, R. and Vishwanathan, S. and Garnett, R.},
  year = {2017},
  journal = {Neural Information Processing Systems}
}

@article{song2020optimistic,
  title={Optimistic dual extrapolation for coherent non-monotone variational inequalities},
  author={Song, Chaobing and Zhou, Zhengyuan and Zhou, Yichao and Jiang, Yong and Ma, Yi},
  journal={Neural Information Processing Systems},
  year={2020}
}

@article{Stich2019_local,
  title = {Local {{SGD}} Converges Fast and Communicates Little},
  author = {Stich, Sebastian U.},
  year = {2019},
  journal = {International Conference on Learning Representations}
}

@article{StichKarimireddy2020_errorfeedback,
  title = {The Error-Feedback Framework: {{SGD}} with Delayed Gradients},
  author = {Stich, Sebastian U. and Karimireddy, Sai Praneeth},
  year = {2020},
  journal = {Journal of Machine Learning Research},
  volume = {21},
  number = {237},
  pages = {1--36}
}

@article{T.DinhTranNguyen2020_personalized,
  title = {Personalized Federated Learning with Moreau Envelopes},
  author = {T. Dinh, Canh and Tran, Nguyen and Nguyen, Josh},
  editor = {Larochelle, H. and Ranzato, M. and Hadsell, R. and Balcan, M.F. and Lin, H.},
  year = {2020},
  journal = {Neural Information Processing Systems}
}

@article{TanYuCuiYang2023_personalized,
  title = {Towards Personalized Federated Learning},
  author = {Tan, Alysa Ziying and Yu, Han and Cui, Lizhen and Yang, Qiang},
  year = {2023},
  journal = {IEEE Transactions on Neural Networks and Learning Systems},
  volume = {34},
  number = {12},
  pages = {9587--9603}
}

@incollection{VonNeumannMorgenstern2007_theory,
  title = {Theory of Games and Economic Behavior: 60th Anniversary Commemorative Edition},
  booktitle = {Theory of Games and Economic Behavior},
  author = {Von Neumann, John and Morgenstern, Oskar},
  year = {2007},
  publisher = {Princeton university press}
}

@article{WangTuorSalonidisLeungMakayaHeChan2019_adaptive,
  title = {Adaptive Federated Learning in Resource Constrained Edge Computing Systems},
  author = {Wang, Shiqiang and Tuor, Tiffany and Salonidis, Theodoros and Leung, Kin K. and Makaya, Christian and He, Ting and Chan, Kevin},
  year = {2019},
  journal = {IEEE Journal on Selected Areas in Communications},
  volume = {37},
  number = {6},
  pages = {1205--1221}
}

@article{YangLiuChenTong2019_federated,
  title = {Federated Machine Learning: {{Concept}} and Applications},
  author = {Yang, Qiang and Liu, Yang and Chen, Tianjian and Tong, Yongxin},
  year = {2019},
  journal = {ACM Transactions on Intelligent Systems and Technology},
  volume = {10},
  number = {2},
  publisher = {Association for Computing Machinery},
  address = {New York, NY, USA},
  articleno = {12},
  issue_date = {March 2019}
}

@article{YuYangZhu2019_parallel,
  title = {Parallel Restarted {{SGD}} with Faster Convergence and Less Communication: Demystifying Why Model Averaging Works for Deep Learning},
  author = {Yu, Hao and Yang, Sen and Zhu, Shenghuo},
  year = {2019},
  journal = {AAAI Conference on Artificial Intelligence},
  address = {Honolulu, Hawaii, USA},
  articleno = {698}
}

@article{ZhaoLiLaiSudaCivinChandra2018_federated,
  title = {Federated Learning with Non-Iid Data},
  author = {Zhao, Yue and Li, Meng and Lai, Liangzhen and Suda, Naveen and Civin, Damon and Chandra, Vikas},
  year = {2018},
  journal = {arXiv:1806.00582},
  eprint = {1806.00582},
  archiveprefix = {arXiv}
}

@article{FengLiYuLiuYang2022_semisupervised,
  title = {Semi-Supervised Federated Heterogeneous Transfer Learning},
  author = {Feng, Siwei and Li, Boyang and Yu, Han and Liu, Yang and Yang, Qiang},
  year = {2022},
  journal = {Knowledge-Based Systems},
  volume = {252},
  pages = {109384}
}

@article{LiuKangXingChenYang2020_secure,
  title = {A Secure Federated Transfer Learning Framework},
  author = {Liu, Yang and Kang, Yan and Xing, Chaoping and Chen, Tianjian and Yang, Qiang},
  year = {2020},
  journal = {IEEE Intelligent Systems},
  volume = {35},
  number = {4},
  pages = {70--82}
}

@article{SharmaXingLiuKang2019_secure,
  title = {Secure and Efficient Federated Transfer Learning},
  author = {Sharma, Shreya and Xing, Chaoping and Liu, Yang and Kang, Yan},
  year = {2019},
  journal = {IEEE International Conference on Big Data}
}

@article{MarfoqNegliaBelletKameniVidal2021_federated,
  title = {Federated Multi-Task Learning under a Mixture of Distributions},
  author = {Marfoq, Othmane and Neglia, Giovanni and Bellet, Aur{\'e}lien and Kameni, Laetitia and Vidal, Richard},
  editor = {Ranzato, M. and Beygelzimer, A. and Dauphin, Y. and Liang, P.S. and Vaughan, J. Wortman},
  year = {2021},
  journal = {Neural Information Processing Systems}
}

@article{li2022convergence,
  title={On the convergence of stochastic extragradient for bilinear games using restarted iteration averaging},
  author={Li, Chris Junchi and Yu, Yaodong and Loizou, Nicolas and Gidel, Gauthier and Ma, Yi and Le Roux, Nicolas and Jordan, Michael},
  journal = {International Conference on Artificial Intelligence and Statistics},
  year={2022}
}

@article{loizou2016distributionally,
  title={Distributionally Robust Games with Risk-averse Players},
  author={Loizou, Nicolas},
  journal = {International Conference on Operations Research and Enterprise Systems},
  year={2016}
}

@article{zheng2024dissipative,
  title={Dissipative Gradient Descent Ascent Method: A Control Theory Inspired Algorithm for Min-max Optimization},
  author={Zheng, Tianqi and Loizou, Nicolas and You, Pengcheng and Mallada, Enrique},
  journal={IEEE Control Systems Letters},
  year={2024}
}

@article{emmanouilidis2024stochastic,
  title={Stochastic extragradient with random reshuffling: Improved convergence for variational inequalities},
  author={Emmanouilidis, Konstantinos and Vidal, Ren{\'e} and Loizou, Nicolas},
  journal ={International Conference on Artificial Intelligence and Statistics},
  year={2024}
}

@article{zhang2023communication,
  title={Communication-efficient gradient descent-accent methods for distributed variational inequalities: Unified analysis and local updates},
  author={Zhang, Siqi and Choudhury, Sayantan and Stich, Sebastian U and Loizou, Nicolas},
  journal={International Conference on Learning Representations},
  year={2024}
}

@article{AzizianMitliagkasLacoste-JulienGidel2020_tight,
  title = {A Tight and Unified Analysis of Gradient-Based Methods for a Whole Spectrum of Differentiable Games},
  author = {Azizian, Wa{\"i}ss and Mitliagkas, Ioannis and {Lacoste-Julien}, Simon and Gidel, Gauthier},
  editor = {Chiappa, Silvia and Calandra, Roberto},
  year = {2020},
  journal = {International Conference on Artificial Intelligence and Statistics}
}

@article{gorbunov2022extragradient,
  title={Extragradient method: O (1/k) last-iterate convergence for monotone variational inequalities and connections with cocoercivity},
  author={Gorbunov, Eduard and Loizou, Nicolas and Gidel, Gauthier},
  year={2022},
journal={International Conference on Artificial Intelligence and Statistics},
}

@article{Korpelevich1976_extragradient,
  title = {The Extragradient Method for Finding Saddle Points and Other Problems},
  author = {Korpelevich, G. M.},
  year = {1976},
  journal = {Ekonomika i Matematicheskie Metody},
  volume = {12},
  number = {4},
  pages = {747--756}
}

@article{GowerSebbouhLoizou2021_sgd,
  title = {{{SGD}} for Structured Nonconvex Functions: {{Learning}} Rates, Minibatching and Interpolation},
  author = {Gower, Robert and Sebbouh, Othmane and Loizou, Nicolas},
  editor = {Banerjee, Arindam and Fukumizu, Kenji},
  year = {2021},
  journal = {International Conference on Artificial Intelligence and Statistics}
}

@article{KhodakBalcanTalwalkar2019_adaptive,
  title = {Adaptive Gradient-Based Meta-Learning Methods},
  author = {Khodak, Mikhail and Balcan, Maria-Florina F and Talwalkar, Ameet S},
  editor = {Wallach, H. and Larochelle, H. and Beygelzimer, A. and {dAlch{\'e}-Buc}, F. and Fox, E. and Garnett, R.},
  year = {2019},
  journal = {Neural Information Processing Systems}
}

@article{BeznosikovGorbunovBerardLoizou2023_stochastic,
  title = {Stochastic Gradient Descent-Ascent: {{Unified}} Theory and New Efficient Methods},
  author = {Beznosikov, Aleksandr and Gorbunov, Eduard and Berard, Hugo and Loizou, Nicolas},
  editor = {Ruiz, Francisco and Dy, Jennifer and {van de Meent}, Jan-Willem},
  year = {2023},
  journal = {International Conference on Artificial Intelligence and Statistics}
}

@article{MeschederNowozinGeiger2017_numerics,
  title = {The Numerics of {{GANs}}},
  author = {Mescheder, Lars and Nowozin, Sebastian and Geiger, Andreas},
  editor = {Guyon, I. and Luxburg, U. Von and Bengio, S. and Wallach, H. and Fergus, R. and Vishwanathan, S. and Garnett, R.},
  year = {2017},
  journal = {Neural Information Processing Systems}
}

@book{ryu2022large,
  title={Large-scale Convex Optimization: Algorithms \& Analyses via Monotone Operators},
  author={Ryu, Ernest K and Yin, Wotao},
  year={2022},
  publisher={Cambridge University Press}
}

@article{ryu2022scaled,
  title={Scaled relative graphs: Nonexpansive operators via 2D Euclidean geometry},
  author={Ryu, Ernest K and Hannah, Robert and Yin, Wotao},
  journal={Mathematical Programming},
  volume={194},
  number={1},
  pages={569--619},
  year={2022},
  publisher={Springer}
}

@article{AhmedAgiza1998_dynamics,
  title = {Dynamics of a Cournot Game with {{$n$-competitors}}},
  author = {Ahmed, E. and Agiza, H.N.},
  year = {1998},
  journal = {Chaos, Solitons \& Fractals},
  volume = {9},
  number = {9},
  pages = {1513--1517}
}

@inproceedings{LiDing2019_distributed,
  title = {Distributed Nash Equilibrium Searching via Fixed-Time Consensus-Based Algorithms},
  booktitle = {2019 {{American Control Conference}} ({{ACC}})},
  author = {Li, Zhongguo and Ding, Zhengtao},
  year = {2019},
  pages = {2765--2770}
}

@article{PanPavel2007_global,
  title = {Global Convergence of an Iterative Gradient Algorithm for the Nash Equilibrium in an Extended {{OSNR}} Game},
  author = {Pan, Y. and Pavel, L.},
  year = {2007},
  journal = {IEEE INFOCOM}
}

@article{SalehisadaghianiPavel2016_distributed,
  title = {Distributed {{Nash}} Equilibrium Seeking: {{A}} Gossip-Based Algorithm},
  author = {Salehisadaghiani, Farzad and Pavel, Lacra},
  year = {2016},
  journal = {Automatica},
  volume = {72},
  pages = {209--216}
}

@article{TatarenkoShiNedic2021_geometric,
  title = {Geometric Convergence of Gradient Play Algorithms for Distributed Nash Equilibrium Seeking},
  author = {Tatarenko, Tatiana and Shi, Wei and Nedi{\'c}, Angelia},
  year = {2021},
  journal = {IEEE Transactions on Automatic Control},
  volume = {66},
  number = {11},
  pages = {5342--5353}
}

@article{YeHu2017_distributed,
  title = {Distributed {{Nash}} Equilibrium Seeking by a Consensus Based Approach},
  author = {Ye, Maojiao and Hu, Guoqiang},
  year = {2017},
  journal = {IEEE Transactions on Automatic Control},
  volume = {62},
  number = {9},
  pages = {4811--4818}
}

@article{YeHu2017_game,
  title = {Game Design and Analysis for Price-Based Demand Response: {{An}} Aggregate Game Approach},
  author = {Ye, Maojiao and Hu, Guoqiang},
  year = {2017},
  journal = {IEEE Transactions on Cybernetics},
  volume = {47},
  number = {3},
  pages = {720--730}
}

@article{SaadHanPoorBasar2012_gametheoretic,
  title = {Game-Theoretic Methods for the Smart Grid: {{An}} Overview of Microgrid Systems, Demand-Side Management, and Smart Grid Communications},
  author = {Saad, Walid and Han, Zhu and Poor, H. Vincent and Basar, Tamer},
  year = {2012},
  journal = {IEEE Signal Processing Magazine},
  volume = {29},
  number = {5},
  pages = {86--105}
}

@article{KalyvaPsillakis2024_distributed,
  title = {Distributed Control of a Mobile Robot Multi-Agent System for {{Nash}} Equilibrium Seeking with Sampled Neighbor Information},
  author = {Kalyva, Dimitra and Psillakis, Haris E.},
  year = {2024},
  journal = {Automatica},
  volume = {166},
  pages = {111712}
}

@article{CondatRichtarik2022_randprox,
  title = {{{RandProx}}: {{Primal-dual}} Optimization Algorithms with Randomized Proximal Updates},
  author = {Condat, Laurent and Richt{\'a}rik, Peter},
  year = {2022},
  journal = {NeurIPS OPT 2022 Workshop}
}

@article{GrudzienMalinovskyRichtarik2023_can,
  title = {Can 5th Generation Local Training Methods Support Client Sampling? {{Yes}}!},
  author = {Grudzie{\'n}, Micha{\l} and Malinovsky, Grigory and Richtarik, Peter},
  editor = {Ruiz, Francisco and Dy, Jennifer and {van de Meent}, Jan-Willem},
  year = {2023},
  journal = {International Conference on Artificial Intelligence and Statistics}
}

@article{HuHuang2023_tighter,
  title = {Tighter Analysis for {{ProxSkip}}},
  author = {Hu, Zhengmian and Huang, Heng},
  editor = {Krause, Andreas and Brunskill, Emma and Cho, Kyunghyun and Engelhardt, Barbara and Sabato, Sivan and Scarlett, Jonathan},
  year = {2023},
  journal = {International Conference on Machine Learning}
}

@article{HanzelyZhaokolar2023_personalized,
  title = {Personalized Federated Learning: A Unified Framework and Universal Optimization Techniques},
  author = {Hanzely, Filip and Zhao, Boxin and {kolar}, mladen},
  year = {2023},
  journal = {Transactions on Machine Learning Research}
}

@book{Brown1949_notes,
  title = {Some Notes on Computation of Games Solutions},
  author = {Brown, George W},
  year = {1949},
  publisher = {Rand Corporation}
}

@article{Robinson1951_iterative,
  title = {An Iterative Method of Solving a Game},
  author = {Robinson, Julia},
  year = {1951},
  journal = {Annals of Mathematics},
  volume = {54},
  number = {2},
  eprint = {1969530},
  eprinttype = {jstor},
  pages = {296--301},
  publisher = {[Annals of Mathematics, Trustees of Princeton University on Behalf of the Annals of Mathematics, Mathematics Department, Princeton University]},
  urldate = {2024-12-17}
}

@article{ShammaArslan2005_dynamic,
  title = {Dynamic Fictitious Play, Dynamic Gradient Play, and Distributed Convergence to {{Nash}} Equilibria},
  author = {Shamma, J.S. and Arslan, G.},
  year = {2005},
  journal = {IEEE Transactions on Automatic Control},
  volume = {50},
  number = {3},
  pages = {312--327}
}

@book{shapley1963some,
  title={Some topics in two-person games},
  author={Shapley, Lloyd S and others},
  year={1963},
  publisher={Rand Corporation}
}

@article{jordan1993three,
title = {Three Problems in Learning Mixed-Strategy Nash Equilibria},
journal = {Games and Economic Behavior},
volume = {5},
number = {3},
pages = {368-386},
year = {1993},
author = {J.S. Jordan},
}

@article{monderershapley1996_fictitious,
  title={Fictitious play property for games with identical interests},
  author={Monderer, Dov and Shapley, Lloyd S},
  journal={Journal of economic theory},
  volume={68},
  number={1},
  pages={258--265},
  year={1996},
  publisher={Elsevier}
}

@article{DemyanovPevnyi1972_numerical,
  title = {Numerical Methods for Finding Saddle Points},
  author = {Dem'yanov, V.F. and Pevnyi, A.B.},
  year = {1972},
  journal = {USSR Computational Mathematics and Mathematical Physics},
  volume = {12},
  number = {5},
  pages = {11--52}
}

@article{FiezRatliff2021_local,
  title = {Local Convergence Analysis of Gradient Descent Ascent with Finite Timescale Separation},
  author = {Fiez, Tanner and Ratliff, Lillian J},
  year = {2021},
  journal = {International Conference on Learning Representations}
}

@article{LeeChoYun2024_fundamental,
  title = {Fundamental Benefit of Alternating Updates in Minimax Optimization},
  author = {Lee, Jaewook and Cho, Hanseul and Yun, Chulhee},
  year = {2024},
  journal = {International Conference on Machine Learning}
}

@article{LinJinJordan2020_gradient,
  title = {On Gradient Descent Ascent for Nonconvex-Concave Minimax Problems},
  author = {Lin, Tianyi and Jin, Chi and Jordan, Michael},
  editor = {III, Hal Daum{\'e} and Singh, Aarti},
  year = {2020},
  journal = {International Conference on Machine Learning},
  series = {Proceedings of Machine Learning Research},
  pdf = {http://proceedings.mlr.press/v119/lin20a/lin20a.pdf}
}

@article{MishchenkoKovalevShulginRichtarikMalitsky2020_revisiting,
  title = {Revisiting Stochastic Extragradient},
  author = {Mishchenko, Konstantin and Kovalev, Dmitry and Shulgin, Egor and Richt{\'a}rik, Peter and Malitsky, Yura},
  year = {2020},
  journal = {International Conference on Artificial Intelligence and Statistics},
  organization = {{PMLR}}
}

@article{MokhtariOzdaglarPattathil2020_unified,
  title = {A Unified Analysis of Extra-Gradient and Optimistic Gradient Methods for Saddle Point Problems: {{Proximal}} Point Approach},
  author = {Mokhtari, Aryan and Ozdaglar, Asuman and Pattathil, Sarath},
  year = {2020},
  journal = {International Conference on Artificial Intelligence and Statistics},
  organization = {{PMLR}}
}

@article{YangKiyavashHe2020_global,
  title = {Global Convergence and Variance Reduction for a Class of Nonconvex-Nonconcave Minimax Problems},
  author = {Yang, Junchi and Kiyavash, Negar and He, Niao},
  editor = {Larochelle, H. and Ranzato, M. and Hadsell, R. and Balcan, M.F. and Lin, H.},
  year = {2020},
  journal = {Neural Information Processing Systems}
}

@article{ChavdarovaGidelFleuretLacoste-Julien2019_reducing,
  title = {Reducing Noise in {{GAN}} Training with Variance Reduced Extragradient},
  author = {Chavdarova, Tatjana and Gidel, Gauthier and Fleuret, Fran{\c c}ois and {Lacoste-Julien}, Simon},
  editor = {Wallach, H. and Larochelle, H. and Beygelzimer, A. and {dAlch{\'e}-Buc}, F. and Fox, E. and Garnett, R.},
  year = {2019},
  journal = {Neural Information Processing Systems}
}

@article{DaskalakisPanageas2018_limit,
  title = {The Limit Points of (Optimistic) Gradient Descent in Min-Max Optimization},
  author = {Daskalakis, Constantinos and Panageas, Ioannis},
  editor = {Bengio, S. and Wallach, H. and Larochelle, H. and Grauman, K. and {Cesa-Bianchi}, N. and Garnett, R.},
  year = {2018},
  journal = {Neural Information Processing Systems}
}

@article{GidelBerardVignoudVincentLacoste-Julien2019_variational,
  title = {A Variational Inequality Perspective on Generative Adversarial Networks},
  author = {Gidel, Gauthier and Berard, Hugo and Vignoud, Ga{\"e}tan and Vincent, Pascal and {Lacoste-Julien}, Simon},
  year = {2019},
  journal = {International conference on Learning Representations}
}

@article{GorbunovTaylorGidel2022_lastiterate,
  title = {Last-Iterate Convergence of Optimistic Gradient Method for Monotone Variational Inequalities},
  author = {Gorbunov, Eduard and Taylor, Adrien and Gidel, Gauthier},
  year = {2022},
  journal = {Neural Information Processing Systems}
}

@article{Popov1980_modification,
  title = {A Modification of the {{Arrow-Hurwicz}} Method for Search of Saddle Points},
  author = {Popov, L. D.},
  year = {1980},
  journal = {Mathematical notes of the Academy of Sciences of the USSR},
  volume = {28},
  number = {5},
  pages = {845--848}
}

@article{RakhlinSridharan2013_online,
  title = {Online Learning with Predictable Sequences},
  author = {Rakhlin, Alexander and Sridharan, Karthik},
  editor = {{Shalev-Shwartz}, Shai and Steinwart, Ingo},
  year = {2013},
  journal = {Conference on Learning Theory},
  pdf = {http://proceedings.mlr.press/v30/Rakhlin13.pdf}
}

@article{RakhlinSridharan2013_optimization,
  title = {Optimization, Learning, and Games with Predictable Sequences},
  author = {Rakhlin, Sasha and Sridharan, Karthik},
  editor = {Burges, C.J. and Bottou, L. and Welling, M. and Ghahramani, Z. and Weinberger, K.Q.},
  year = {2013},
  journal = {Neural Information Processing Systems}
}

@article{AbernethyLaiWibisono2021_lastiterate,
  title = {Last-Iterate Convergence Rates for Min-Max Optimization: {{Convergence}} of Hamiltonian Gradient Descent and Consensus Optimization},
  author = {Abernethy, Jacob and Lai, Kevin A. and Wibisono, Andre},
  editor = {Feldman, Vitaly and Ligett, Katrina and Sabato, Sivan},
  year = {2021},
  journal = {international Conference on Algorithmic Learning Theory},
  pdf = {http://proceedings.mlr.press/v132/abernethy21a/abernethy21a.pdf}
}

@article{BotCsetnekNguyen2023_fast,
  title = {Fast {{Optimistic Gradient Descent Ascent}} ({{OGDA}}) {{Method}} in {{Continuous}} and {{Discrete Time}}},
  author = {Bo{\c t}, Radu Ioan and Csetnek, Ern{\"o} Robert and Nguyen, Dang-Khoa},
  year = {2023},
  journal = {Foundations of Computational Mathematics}
}

@article{CaiZheng2023_accelerated,
  title = {Accelerated Single-Call Methods for Constrained Min-Max Optimization},
  author = {Cai, Yang and Zheng, Weiqiang},
  year = {2023},
  journal = {International conference on Learning Representations}
}

@article{Diakonikolas2020_halpern,
  title = {Halpern Iteration for Near-Optimal and Parameter-Free Monotone Inclusion and Strong Solutions to Variational Inequalities},
  author = {Diakonikolas, Jelena},
  editor = {Abernethy, Jacob and Agarwal, Shivani},
  year = {2020},
  journal = {Conference on Learning Theory},
  pdf = {http://proceedings.mlr.press/v125/diakonikolas20a/diakonikolas20a.pdf}
}

@article{LeeKim2021_fast,
  title = {Fast Extra Gradient Methods for Smooth Structured Nonconvex-Nonconcave Minimax Problems},
  author = {Lee, Sucheol and Kim, Donghwan},
  editor = {Ranzato, M. and Beygelzimer, A. and Dauphin, Y. and Liang, P.S. and Vaughan, J. Wortman},
  year = {2021},
  journal = {Neural Information Processing Systems}
}

@article{YoonKimSuhRyu2024_optimal,
  title = {Optimal Acceleration for Minimax and Fixed-Point Problems Is Not Unique},
  author = {Yoon, TaeHo and Kim, Jaeyeon and Suh, Jaewook J. and Ryu, Ernest K.},
  year = {2024},
  journal = {International Conference on Machine Learning}
}

@article{YoonRyu2021_accelerated,
  title = {Accelerated Algorithms for Smooth Convex-Concave Minimax Problems with {{O}}(1/K{$^2$}) Rate on Squared Gradient Norm},
  author = {Yoon, Taeho and Ryu, Ernest K},
  editor = {Meila, Marina and Zhang, Tong},
  year = {2021},
  journal = {International Conference on Machine Learning},
  pdf = {http://proceedings.mlr.press/v139/yoon21d/yoon21d.pdf}
}

@article{mertikopoulos2024unified,
  title={A unified stochastic approximation framework for learning in games},
  author={Mertikopoulos, Panayotis and Hsieh, Ya-Ping and Cevher, Volkan},
  journal={Mathematical Programming},
  volume={203},
  number={1},
  pages={559--609},
  year={2024},
  publisher={Springer}
}

@article{mazumdar2020gradient,
  title={On gradient-based learning in continuous games},
  author={Mazumdar, Eric and Ratliff, Lillian J and Sastry, S Shankar},
  journal={SIAM Journal on Mathematics of Data Science},
  volume={2},
  number={1},
  pages={103--131},
  year={2020},
  publisher={SIAM}
}

@article{lin2020finite,
  title={Finite-time last-iterate convergence for multi-agent learning in games},
  author={Lin, Tianyi and Zhou, Zhengyuan and Mertikopoulos, Panayotis and Jordan, Michael},
  journal={International Conference on Machine Learning},
  year={2020},
}

@article{hsieh2021limits,
  title={The limits of min-max optimization algorithms: Convergence to spurious non-critical sets},
  author={Hsieh, Ya-Ping and Mertikopoulos, Panayotis and Cevher, Volkan},
  journal={International Conference on Machine Learning},
  year={2021},
}

@article{bravo2018bandit,
  title={Bandit learning in concave N-person games},
  author={Bravo, Mario and Leslie, David and Mertikopoulos, Panayotis},
  journal={Neural Information Processing Systems},
  year={2018}
}

@article{giannou2021rate,
  title={On the rate of convergence of regularized learning in games: From bandits and uncertainty to optimism and beyond},
  author={Giannou, Angeliki and Vlatakis-Gkaragkounis, Emmanouil-Vasileios and Mertikopoulos, Panayotis},
  journal={Neural Information Processing Systems},
  year={2021}
}

@article{blum2021one,
  title={One for one, or all for all: Equilibria and optimality of collaboration in federated learning},
  author={Blum, Avrim and Haghtalab, Nika and Phillips, Richard Lanas and Shao, Han},
  journal={International Conference on Machine Learning},
  year={2021},
}

@article{karimireddy2022mechanisms,
  title={Mechanisms that incentivize data sharing in federated learning},
  author={Karimireddy, Sai Praneeth and Guo, Wenshuo and Jordan, Michael I},
  journal={arXiv:2207.04557},
  year={2022}
}

@article{murhekar2023incentives,
  title={Incentives in federated learning: Equilibria, dynamics, and mechanisms for welfare maximization},
  author={Murhekar, Aniket and Yuan, Zhuowen and Ray Chaudhury, Bhaskar and Li, Bo and Mehta, Ruta},
  journal={Neural Information Processing Systems},
  year={2023}
}

@article{dorner2023incentivizing,
  title={Incentivizing honesty among competitors in collaborative learning and optimization},
  author={Dorner, Florian E and Konstantinov, Nikola and Pashaliev, Georgi and Vechev, Martin},
  journal={Neural Information Processing Systems},
  year={2023}
}

@article{arisdakessian2023coalitional,
  title={Coalitional federated learning: Improving communication and training on non-iid data with selfish clients},
  author={Arisdakessian, Sarhad and Wahab, Omar Abdel and Mourad, Azzam and Otrok, Hadi},
  journal={IEEE Transactions on Services Computing},
  volume={16},
  number={4},
  pages={2462--2476},
  year={2023},
}

@article{capitaine2024unravelling,
  title={Unravelling in collaborative learning},
  author={Capitaine, Aymeric and Boursier, Etienne and Scheid, Antoine and Moulines, Eric and Jordan, Michael and El-Mhamdi, El-Mahdi and Durmus, Alain},
  journal={Neural Information Processing Systems},
  volume={37},
  pages={97231--97260},
  year={2024}
}

@article{tarzanagh2022fednest,
  title={Fednest: Federated bilevel, minimax, and compositional optimization},
  author={Tarzanagh, Davoud Ataee and Li, Mingchen and Thrampoulidis, Christos and Oymak, Samet},
  journal={International Conference on Machine Learning},
  year={2022}
}

@article{
qiu2023zerothorder,
title={Zeroth-Order Methods for Nondifferentiable, Nonconvex, and Hierarchical Federated Optimization},
author={Yuyang Qiu and Uday Shanbhag and Farzad Yousefian},
journal={Neural Information Processing Systems},
year={2023},
}

@article{yang2025first,
  title={First-Order Federated Bilevel Learning},
  author={Yang, Yifan and Xiao, Peiyao and Ma, Shiqian and Ji, Kaiyi},
  journal={AAAI Conference on Artificial Intelligence},
  year={2025}
}

@article{
zindari2025decoupled,
title={Decoupled SGDA for Games with Intermittent Strategy Communication},
author={Ali Zindari and Parham Yazdkhasti and Anton Rodomanov and Tatjana Chavdarova and Sebastian U Stich},
journal={International Conference on Machine Learning},
year={2025},
}

@article{aiyoshi1981hierarchical,
  title={Hierarchical decentralized systems and its new solution by a barrier method.},
  author={Aiyoshi, Eitaro and Shimizu, Kiyotaka},
  journal={IEEE Transactions on Systems, Man and Cybernetics},
  number={6},
  pages={444--449},
  year={1981},
}

@article{bracken1973mathematical,
  title={Mathematical programs with optimization problems in the constraints},
  author={Bracken, Jerome and McGill, James T},
  journal={Operations research},
  volume={21},
  number={1},
  pages={37--44},
  year={1973},
  publisher={INFORMS}
}

@article{huang2023achieving,
  title = 	 {Achieving Linear Speedup in Non-{IID} Federated Bilevel Learning},
  author =       {Huang, Minhui and Zhang, Dewei and Ji, Kaiyi},
  journal = 	 {International Conference on Machine Learning},
  year = 	 {2023},
}

@article{li2023communicationefficient,
 author = {Li, Junyi and Huang, Feihu and Huang, Heng},
 journal = {Neural Information Processing Systems},
 title = {Communication-Efficient Federated Bilevel Optimization with Global and Local Lower Level Problems},
 year = {2023}
}

@article{yang2023simfbo,
  title={{{SimFBO}}: Towards simple, flexible and communication-efficient federated bilevel learning},
  author={Yang, Yifan and Xiao, Peiyao and Ji, Kaiyi},
  journal={Neural Information Processing Systems},
  year={2023}
}

\newpage
\section*{NeurIPS Paper Checklist}

\begin{enumerate}

\item {\bf Claims}
    \item[] Question: Do the main claims made in the abstract and introduction accurately reflect the paper's contributions and scope?
    \item[] Answer: \answerYes{} 
    \item[] Justification: We explain the novel MpFL framework (\cref{section:MpFL-concept}), propose \abbvname (\cref{section:algorithm}), provide its convergence guarantees (Sections~\ref{section:convergence-deterministic}, \ref{section:convergence-stochastic}) and experiments (\cref{sec:numerical_experiments}).
    \item[] Guidelines:
    \begin{itemize}
        \item The answer NA means that the abstract and introduction do not include the claims made in the paper.
        \item The abstract and/or introduction should clearly state the claims made, including the contributions made in the paper and important assumptions and limitations. A No or NA answer to this question will not be perceived well by the reviewers. 
        \item The claims made should match theoretical and experimental results, and reflect how much the results can be expected to generalize to other settings. 
        \item It is fine to include aspirational goals as motivation as long as it is clear that these goals are not attained by the paper. 
    \end{itemize}

\item {\bf Limitations}
    \item[] Question: Does the paper discuss the limitations of the work performed by the authors?
    \item[] Answer: \answerYes{} 
    \item[] Justification: In lines 192--202 and 234--244 we discuss some limitations of the current framework, identifying future research directions.
    \item[] Guidelines:
    \begin{itemize}
        \item The answer NA means that the paper has no limitation while the answer No means that the paper has limitations, but those are not discussed in the paper. 
        \item The authors are encouraged to create a separate "Limitations" section in their paper.
        \item The paper should point out any strong assumptions and how robust the results are to violations of these assumptions (e.g., independence assumptions, noiseless settings, model well-specification, asymptotic approximations only holding locally). The authors should reflect on how these assumptions might be violated in practice and what the implications would be.
        \item The authors should reflect on the scope of the claims made, e.g., if the approach was only tested on a few datasets or with a few runs. In general, empirical results often depend on implicit assumptions, which should be articulated.
        \item The authors should reflect on the factors that influence the performance of the approach. For example, a facial recognition algorithm may perform poorly when image resolution is low or images are taken in low lighting. Or a speech-to-text system might not be used reliably to provide closed captions for online lectures because it fails to handle technical jargon.
        \item The authors should discuss the computational efficiency of the proposed algorithms and how they scale with dataset size.
        \item If applicable, the authors should discuss possible limitations of their approach to address problems of privacy and fairness.
        \item While the authors might fear that complete honesty about limitations might be used by reviewers as grounds for rejection, a worse outcome might be that reviewers discover limitations that aren't acknowledged in the paper. The authors should use their best judgment and recognize that individual actions in favor of transparency play an important role in developing norms that preserve the integrity of the community. Reviewers will be specifically instructed to not penalize honesty concerning limitations.
    \end{itemize}

\item {\bf Theory assumptions and proofs}
    \item[] Question: For each theoretical result, does the paper provide the full set of assumptions and a complete (and correct) proof?
    \item[] Answer: \answerYes{} 
    \item[] Justification: See Sections~\ref{section:MpFL-concept} and \ref{section:algorithm} for the assumptions and \cref{section:omitted_proofs_playerwise_local_gradient} for proofs.
    \item[] Guidelines:
    \begin{itemize}
        \item The answer NA means that the paper does not include theoretical results. 
        \item All the theorems, formulas, and proofs in the paper should be numbered and cross-referenced.
        \item All assumptions should be clearly stated or referenced in the statement of any theorems.
        \item The proofs can either appear in the main paper or the supplemental material, but if they appear in the supplemental material, the authors are encouraged to provide a short proof sketch to provide intuition. 
        \item Inversely, any informal proof provided in the core of the paper should be complemented by formal proofs provided in appendix or supplemental material.
        \item Theorems and Lemmas that the proof relies upon should be properly referenced. 
    \end{itemize}

    \item {\bf Experimental result reproducibility}
    \item[] Question: Does the paper fully disclose all the information needed to reproduce the main experimental results of the paper to the extent that it affects the main claims and/or conclusions of the paper (regardless of whether the code and data are provided or not)?
    \item[] Answer: \answerYes{} 
    \item[] Justification: See \cref{section:details-of-experiments}.
    \item[] Guidelines:
    \begin{itemize}
        \item The answer NA means that the paper does not include experiments.
        \item If the paper includes experiments, a No answer to this question will not be perceived well by the reviewers: Making the paper reproducible is important, regardless of whether the code and data are provided or not.
        \item If the contribution is a dataset and/or model, the authors should describe the steps taken to make their results reproducible or verifiable. 
        \item Depending on the contribution, reproducibility can be accomplished in various ways. For example, if the contribution is a novel architecture, describing the architecture fully might suffice, or if the contribution is a specific model and empirical evaluation, it may be necessary to either make it possible for others to replicate the model with the same dataset, or provide access to the model. In general. releasing code and data is often one good way to accomplish this, but reproducibility can also be provided via detailed instructions for how to replicate the results, access to a hosted model (e.g., in the case of a large language model), releasing of a model checkpoint, or other means that are appropriate to the research performed.
        \item While NeurIPS does not require releasing code, the conference does require all submissions to provide some reasonable avenue for reproducibility, which may depend on the nature of the contribution. For example
        \begin{enumerate}
            \item If the contribution is primarily a new algorithm, the paper should make it clear how to reproduce that algorithm.
            \item If the contribution is primarily a new model architecture, the paper should describe the architecture clearly and fully.
            \item If the contribution is a new model (e.g., a large language model), then there should either be a way to access this model for reproducing the results or a way to reproduce the model (e.g., with an open-source dataset or instructions for how to construct the dataset).
            \item We recognize that reproducibility may be tricky in some cases, in which case authors are welcome to describe the particular way they provide for reproducibility. In the case of closed-source models, it may be that access to the model is limited in some way (e.g., to registered users), but it should be possible for other researchers to have some path to reproducing or verifying the results.
        \end{enumerate}
    \end{itemize}

\item {\bf Open access to data and code}
    \item[] Question: Does the paper provide open access to the data and code, with sufficient instructions to faithfully reproduce the main experimental results, as described in supplemental material?
    \item[] Answer: \answerYes{} 
    \item[] Justification: We will include the source codes with instructions in the supplemental material.
    \item[] Guidelines:
    \begin{itemize}
        \item The answer NA means that paper does not include experiments requiring code.
        \item Please see the NeurIPS code and data submission guidelines (\url{https://nips.cc/public/guides/CodeSubmissionPolicy}) for more details.
        \item While we encourage the release of code and data, we understand that this might not be possible, so “No” is an acceptable answer. Papers cannot be rejected simply for not including code, unless this is central to the contribution (e.g., for a new open-source benchmark).
        \item The instructions should contain the exact command and environment needed to run to reproduce the results. See the NeurIPS code and data submission guidelines (\url{https://nips.cc/public/guides/CodeSubmissionPolicy}) for more details.
        \item The authors should provide instructions on data access and preparation, including how to access the raw data, preprocessed data, intermediate data, and generated data, etc.
        \item The authors should provide scripts to reproduce all experimental results for the new proposed method and baselines. If only a subset of experiments are reproducible, they should state which ones are omitted from the script and why.
        \item At submission time, to preserve anonymity, the authors should release anonymized versions (if applicable).
        \item Providing as much information as possible in supplemental material (appended to the paper) is recommended, but including URLs to data and code is permitted.
    \end{itemize}

\item {\bf Experimental setting/details}
    \item[] Question: Does the paper specify all the training and test details (e.g., data splits, hyperparameters, how they were chosen, type of optimizer, etc.) necessary to understand the results?
    \item[] Answer: \answerYes{} 
    \item[] Justification: See \cref{section:details-of-experiments}.
    \item[] Guidelines:
    \begin{itemize}
        \item The answer NA means that the paper does not include experiments.
        \item The experimental setting should be presented in the core of the paper to a level of detail that is necessary to appreciate the results and make sense of them.
        \item The full details can be provided either with the code, in appendix, or as supplemental material.
    \end{itemize}

\item {\bf Experiment statistical significance}
    \item[] Question: Does the paper report error bars suitably and correctly defined or other appropriate information about the statistical significance of the experiments?
    \item[] Answer: \answerYes{} 
    \item[] Justification: We run 5 independent runs for stochastic experiments.
    Our work does not require further validation of statistical significance as our numerical simulations are primarily conducted for the purpose of verifying the theory.
    \item[] Guidelines:
    \begin{itemize}
        \item The answer NA means that the paper does not include experiments.
        \item The authors should answer "Yes" if the results are accompanied by error bars, confidence intervals, or statistical significance tests, at least for the experiments that support the main claims of the paper.
        \item The factors of variability that the error bars are capturing should be clearly stated (for example, train/test split, initialization, random drawing of some parameter, or overall run with given experimental conditions).
        \item The method for calculating the error bars should be explained (closed form formula, call to a library function, bootstrap, etc.)
        \item The assumptions made should be given (e.g., Normally distributed errors).
        \item It should be clear whether the error bar is the standard deviation or the standard error of the mean.
        \item It is OK to report 1-sigma error bars, but one should state it. The authors should preferably report a 2-sigma error bar than state that they have a 96\% CI, if the hypothesis of Normality of errors is not verified.
        \item For asymmetric distributions, the authors should be careful not to show in tables or figures symmetric error bars that would yield results that are out of range (e.g. negative error rates).
        \item If error bars are reported in tables or plots, The authors should explain in the text how they were calculated and reference the corresponding figures or tables in the text.
    \end{itemize}

\item {\bf Experiments compute resources}
    \item[] Question: For each experiment, does the paper provide sufficient information on the computer resources (type of compute workers, memory, time of execution) needed to reproduce the experiments?
    \item[] Answer: \answerYes{} 
    \item[] Justification: See \cref{section:details-of-experiments}.
    \item[] Guidelines:
    \begin{itemize}
        \item The answer NA means that the paper does not include experiments.
        \item The paper should indicate the type of compute workers CPU or GPU, internal cluster, or cloud provider, including relevant memory and storage.
        \item The paper should provide the amount of compute required for each of the individual experimental runs as well as estimate the total compute. 
        \item The paper should disclose whether the full research project required more compute than the experiments reported in the paper (e.g., preliminary or failed experiments that didn't make it into the paper). 
    \end{itemize}
    
\item {\bf Code of ethics}
    \item[] Question: Does the research conducted in the paper conform, in every respect, with the NeurIPS Code of Ethics \url{https://neurips.cc/public/EthicsGuidelines}?
    \item[] Answer: \answerYes{} 
    \item[] Justification: 
    \item[] Guidelines:
    \begin{itemize}
        \item The answer NA means that the authors have not reviewed the NeurIPS Code of Ethics.
        \item If the authors answer No, they should explain the special circumstances that require a deviation from the Code of Ethics.
        \item The authors should make sure to preserve anonymity (e.g., if there is a special consideration due to laws or regulations in their jurisdiction).
    \end{itemize}

\item {\bf Broader impacts}
    \item[] Question: Does the paper discuss both potential positive societal impacts and negative societal impacts of the work performed?
    \item[] Answer: \answerNA{} 
    \item[] Justification: This is a theoretical work and we do not anticipate any direct societal impacts that would require mentioning.
    \item[] Guidelines:
    \begin{itemize}
        \item The answer NA means that there is no societal impact of the work performed.
        \item If the authors answer NA or No, they should explain why their work has no societal impact or why the paper does not address societal impact.
        \item Examples of negative societal impacts include potential malicious or unintended uses (e.g., disinformation, generating fake profiles, surveillance), fairness considerations (e.g., deployment of technologies that could make decisions that unfairly impact specific groups), privacy considerations, and security considerations.
        \item The conference expects that many papers will be foundational research and not tied to particular applications, let alone deployments. However, if there is a direct path to any negative applications, the authors should point it out. For example, it is legitimate to point out that an improvement in the quality of generative models could be used to generate deepfakes for disinformation. On the other hand, it is not needed to point out that a generic algorithm for optimizing neural networks could enable people to train models that generate Deepfakes faster.
        \item The authors should consider possible harms that could arise when the technology is being used as intended and functioning correctly, harms that could arise when the technology is being used as intended but gives incorrect results, and harms following from (intentional or unintentional) misuse of the technology.
        \item If there are negative societal impacts, the authors could also discuss possible mitigation strategies (e.g., gated release of models, providing defenses in addition to attacks, mechanisms for monitoring misuse, mechanisms to monitor how a system learns from feedback over time, improving the efficiency and accessibility of ML).
    \end{itemize}
    
\item {\bf Safeguards}
    \item[] Question: Does the paper describe safeguards that have been put in place for responsible release of data or models that have a high risk for misuse (e.g., pretrained language models, image generators, or scraped datasets)?
    \item[] Answer: \answerNA{} 
    \item[] Justification: \item[] Guidelines:
    \begin{itemize}
        \item The answer NA means that the paper poses no such risks.
        \item Released models that have a high risk for misuse or dual-use should be released with necessary safeguards to allow for controlled use of the model, for example by requiring that users adhere to usage guidelines or restrictions to access the model or implementing safety filters. 
        \item Datasets that have been scraped from the Internet could pose safety risks. The authors should describe how they avoided releasing unsafe images.
        \item We recognize that providing effective safeguards is challenging, and many papers do not require this, but we encourage authors to take this into account and make a best faith effort.
    \end{itemize}

\item {\bf Licenses for existing assets}
    \item[] Question: Are the creators or original owners of assets (e.g., code, data, models), used in the paper, properly credited and are the license and terms of use explicitly mentioned and properly respected?
    \item[] Answer: \answerNA{} 
    \item[] Justification: \item[] Guidelines:
    \begin{itemize}
        \item The answer NA means that the paper does not use existing assets.
        \item The authors should cite the original paper that produced the code package or dataset.
        \item The authors should state which version of the asset is used and, if possible, include a URL.
        \item The name of the license (e.g., CC-BY 4.0) should be included for each asset.
        \item For scraped data from a particular source (e.g., website), the copyright and terms of service of that source should be provided.
        \item If assets are released, the license, copyright information, and terms of use in the package should be provided. For popular datasets, \url{paperswithcode.com/datasets} has curated licenses for some datasets. Their licensing guide can help determine the license of a dataset.
        \item For existing datasets that are re-packaged, both the original license and the license of the derived asset (if it has changed) should be provided.
        \item If this information is not available online, the authors are encouraged to reach out to the asset's creators.
    \end{itemize}

\item {\bf New assets}
    \item[] Question: Are new assets introduced in the paper well documented and is the documentation provided alongside the assets?
    \item[] Answer: \answerNA{} 
    \item[] Justification: \item[] Guidelines:
    \begin{itemize}
        \item The answer NA means that the paper does not release new assets.
        \item Researchers should communicate the details of the dataset/code/model as part of their submissions via structured templates. This includes details about training, license, limitations, etc. 
        \item The paper should discuss whether and how consent was obtained from people whose asset is used.
        \item At submission time, remember to anonymize your assets (if applicable). You can either create an anonymized URL or include an anonymized zip file.
    \end{itemize}

\item {\bf Crowdsourcing and research with human subjects}
    \item[] Question: For crowdsourcing experiments and research with human subjects, does the paper include the full text of instructions given to participants and screenshots, if applicable, as well as details about compensation (if any)? 
    \item[] Answer: \answerNA{} 
    \item[] Justification: \item[] Guidelines:
    \begin{itemize}
        \item The answer NA means that the paper does not involve crowdsourcing nor research with human subjects.
        \item Including this information in the supplemental material is fine, but if the main contribution of the paper involves human subjects, then as much detail as possible should be included in the main paper. 
        \item According to the NeurIPS Code of Ethics, workers involved in data collection, curation, or other labor should be paid at least the minimum wage in the country of the data collector. 
    \end{itemize}

\item {\bf Institutional review board (IRB) approvals or equivalent for research with human subjects}
    \item[] Question: Does the paper describe potential risks incurred by study participants, whether such risks were disclosed to the subjects, and whether Institutional Review Board (IRB) approvals (or an equivalent approval/review based on the requirements of your country or institution) were obtained?
    \item[] Answer: \answerNA{} 
    \item[] Justification: \item[] Guidelines:
    \begin{itemize}
        \item The answer NA means that the paper does not involve crowdsourcing nor research with human subjects.
        \item Depending on the country in which research is conducted, IRB approval (or equivalent) may be required for any human subjects research. If you obtained IRB approval, you should clearly state this in the paper. 
        \item We recognize that the procedures for this may vary significantly between institutions and locations, and we expect authors to adhere to the NeurIPS Code of Ethics and the guidelines for their institution. 
        \item For initial submissions, do not include any information that would break anonymity (if applicable), such as the institution conducting the review.
    \end{itemize}

\item {\bf Declaration of LLM usage}
    \item[] Question: Does the paper describe the usage of LLMs if it is an important, original, or non-standard component of the core methods in this research? Note that if the LLM is used only for writing, editing, or formatting purposes and does not impact the core methodology, scientific rigorousness, or originality of the research, declaration is not required.
    \item[] Answer: \answerNA{} 
    \item[] Justification: \item[] Guidelines:
    \begin{itemize}
        \item The answer NA means that the core method development in this research does not involve LLMs as any important, original, or non-standard components.
        \item Please refer to our LLM policy (\url{https://neurips.cc/Conferences/2025/LLM}) for what should or should not be described.
    \end{itemize}

\end{enumerate}

\newpage

\appendix
\part*{Supplementary Material}
We organize the appendix as follows: 
Section~\ref{section:additional_related_work} provides an additional survey of related work. 
Section~\ref{section:distinction-from-prior-FL} provides a detailed explanation on how MpFL differs from prior FL frameworks.
Section~\ref{section:omitted_proofs_playerwise_local_gradient} presents the proofs of theoretical results. 
Section~\ref{section:details-of-experiments} provides the details of the experiments omitted from the main paper. 
Section~\ref{section:additional-experiments} presents some additional experiments.
Section~\ref{section:assumption-discussion} provides a detailed explanation and interpretation on the theoretical assumptions made in the paper.

\tableofcontents

\newpage

\section{Additional related work}

\label{section:additional_related_work}

\paragraph{Game Theory \& Equilibrium Computation.}
Multiplayer games, where multiple players each minimize their own cost function that is affected by the actions of the others, are a long-studied fundamental topic in mathematics and economics
\citep{NashJr1950_equilibrium, Nash1951_noncooperative, Shapley1953_stochastic, Schelling1980_strategy, KrepsMilgromRobertsWilson1982_rational, HarsanyiSelten1988_general, LuceRaiffa1989_games, Kreps1990_game, loizou2016distributionally, VonNeumannMorgenstern2007_theory}.
More recently, there has been an increasing interest in the ML community in game-theoretic problems with motivating applications, including adversarial learning \citep{GoodfellowPouget-AbadieMirzaXuWarde-FarleyOzairCourvilleBengio2014_generative, DaskalakisIlyasSyrgkanisZeng2018_training}, 
multi-agent reinforcement learning (MARL) \citep{LanctotZambaldiGruslysLazaridouTuylsPerolatSilverGraepel2017_unified, LiZhuLuongNiyatoWuZhangChen2022_applications, sokotaunified}, and language models \citep{GempPatelBachrachLanctotDasagiMarrisPiliourasLiuTuyls2024_steering, JacobShenFarinaAndreas2024_consensus}.
This incoming stream of applications has led to the development of novel analyses and insights regarding classical equilibrium-searching algorithms including gradient descent-ascent \citep{DemyanovPevnyi1972_numerical, LinJinJordan2020_gradient, YangKiyavashHe2020_global, FiezRatliff2021_local, zheng2024dissipative, LoizouBerardGidelMitliagkasLacoste-Julien2021_stochastic, LeeChoYun2024_fundamental}, extragradient \citep{Korpelevich1976_extragradient, ChavdarovaGidelFleuretLacoste-Julien2019_reducing, emmanouilidis2024stochastic, li2022convergence, MokhtariOzdaglarPattathil2020_unified, MishchenkoKovalevShulginRichtarikMalitsky2020_revisiting, gorbunov2022stochastic, gorbunov2022extragradient}, optimistic gradient \citep{Popov1980_modification, RakhlinSridharan2013_online, RakhlinSridharan2013_optimization, DaskalakisPanageas2018_limit, GidelBerardVignoudVincentLacoste-Julien2019_variational, GorbunovTaylorGidel2022_lastiterate} and consensus optimization/Hamiltonian gradient method \citep{MeschederNowozinGeiger2017_numerics, AzizianMitliagkasLacoste-JulienGidel2020_tight, AbernethyLaiWibisono2021_lastiterate, LoizouBerardGidelMitliagkasLacoste-Julien2021_stochastic}, and even the discovery of new accelerated algorithms for games \citep{Diakonikolas2020_halpern, YoonRyu2021_accelerated, LeeKim2021_fast, CaiZheng2023_accelerated, BotCsetnekNguyen2023_fast, YoonKimSuhRyu2024_optimal}.

\paragraph{Learning in games.} 
Without local updates (the case $\tau=1$), \algname{PEARL-SGD} corresponds to the stochastic gradient play dynamics or the online gradient descent considered in the literature on learning in games \citep{mazumdar2020gradient, lin2020finite, hsieh2021limits}, or more broadly, regularized Robbins-Monro processes or Follow-the-Generalized-Leader algorithms \citep{mertikopoulos2024unified, giannou2021rate}. 
Our setup considers the pure Nash equilibrium search in games with continuous and unconstrained action spaces, similarly as in \cite{lin2020finite}.
The theoretical assumptions used in our analysis are similar to the ones that has appeared in this line of works; e.g., \cite{lin2020finite} uses strong monotonicity and cocoercivity of the joint gradient operator, while we use the weaker notion of quasi-strong monotonicity, which is similar to (but stronger than) the variational stability assumed in multiple works including \citep{bravo2018bandit, mertikopoulos2019learning, mertikopoulos2024unified}, to name a few.
Despite the close connections, our paper is distinguished from these works as we focus on communication efficiency in distributed optimization (federated learning) setup.

\paragraph{Game theory for social client behavior in FL.}
Some prior work have also considered games with strategic clients participating in FL, focusing on designing mechanisms to prevent clients' social behaviors such as free-riding \citep{murhekar2023incentives}, coalitions \cite{DonahueKleinberg2021_modelsharing} or dishonesty \citep{dorner2023incentivizing}.
In these works, however, the meaning of the action $x_i$ is completely different from our setup of interest: it represents the size of the dataset which each client contributes to FL \cite{blum2021one, karimireddy2022mechanisms, murhekar2023incentives}, strategy to deceive others and defend against those attacks \cite{dorner2023incentivizing}, estimate of the trustworthiness of other clients \cite{arisdakessian2023coalitional} or willingness to participate in FL \cite{capitaine2024unravelling}, all having restrictive meaning in a specific social context.
On the other hand, in MpFL, $x_i$ are clients' \textit{local models to be optimized through learning}, where the objectives $f_i$ follow \textit{general game-theoretic structure}, and our primary focus is to design communication-efficient algorithms for finding an equilibrium.

\paragraph{Heterogeneity and client drift.}
One fundamental challenge for theory of Local SGD (FedAvg) is heterogeneity, i.e., varying $f_i$'s due to differences in local data distributions \citep{KonecnyMcMahanRamageRichtarik2016_federated, LiSahuZaheerSanjabiTalwalkarSmith2020_federated}.
Under such setup, Local SGD is prone to client drift \citep{ZhaoLiLaiSudaCivinChandra2018_federated, KarimireddyKaleMohriReddiStichSuresh2020_scaffold} where local descent trajectories head toward distinct minima (of local objectives), and convergence theories require either additional assumptions \citep{WangTuorSalonidisLeungMakayaHeChan2019_adaptive, YuYangZhu2019_parallel, HaddadpourMahdavi2019_convergence, LiSahuZaheerSanjabiTalwalkarSmith2020_federated} or technical analyses \citep{KhaledMishchenkoRichtarik2020_tighter, KoloskovaLoizouBoreiriJaggiStich2020_unified} to control this drift.
Some papers, based on theoretical insights, introduced or analyzed correction mechanisms for Local SGD to mitigate client drift \citep{KarimireddyKaleMohriReddiStichSuresh2020_scaffold, GorbunovHanzelyRichtarik2021_local, MitraJaafarPappasHassani2021_linear, MishchenkoMalinovskyStichRichtarik2022_proxskip, HuHuang2023_tighter, GrudzienMalinovskyRichtarik2023_can}.
Extension of such ideas to federated minimax optimization was explored in \citep{zhang2023communication}.
We note that the $n$-player game setup of MpFL is also fully heterogeneous as each player has distinct (possibly even conflicting) objective functions, and consequently, we have the analogous concept of \emph{player drift}.
We refer the readers interested in this topic to the discussion at the end of \cref{section:convergence-deterministic}.

\paragraph{FL frameworks with individual models.}
There are several distinct contexts for FL frameworks (other than personalized FL) where each client learns an individual model.
In Vertical FL \citep{YangLiuChenTong2019_federated, LiuZhangKangLiChenHongYang2022_fedbcd,  LiuKangZouPuHeYeOuyangZhangYang2024_vertical} scenarios, multiple organizations hold distinct features from the common set of samples and they collaborate to train their each local model.
In Federated Transfer Learning \citep{SharmaXingLiuKang2019_secure, LiuKangXingChenYang2020_secure, FengLiYuLiuYang2022_semisupervised}, the participating organizations similarly keep and train local models, but their datasets have heterogeneity over both sample and feature spaces with limited overlaps.
Federated Multi-Task Learning \citep{SmithChiangSanjabiTalwalkar2017_federated, MarfoqNegliaBelletKameniVidal2021_federated, mills2021multi} extends FL to cases where each client solves different, but related tasks.

\paragraph{Fictitious play.} 
The Fictitious Play (FP) is a classical algorithm, originally proposed by \citep{Brown1949_notes} to solve minimax games where each player has a finite action space and plays mixed (randomized) strategies.
In FP, each player selects an action that minimizes their expected loss (best response), assuming that the other player plays the empirical (historical) strategy, which is a uniform random mixture of their previously played actions.
The convergence of FP to a Nash equilibrium for minimax games was established in \citep{Robinson1951_iterative}, but FP fails to converge for general $n$-player (with $n>2$) or non-zero-sum games \citep{shapley1963some, jordan1993three}, except for particular cases such as all players having identical objectives \citep{monderershapley1996_fictitious}.

While it may appear that \abbvname is conceptually similar to FP (as each player performing multiple local SGD steps can be interpreted as seeking a local approximate best response to others' strategies) the connection is opaque due to some fundamental differences.
First, in \abbvname, players make their updates based on only the most recent strategies of other players (not the entire history as in FP).
Second, in \abbvname, local SGD steps are not run until players converge to local optima---this results in player drift as we discuss at the end of \cref{section:convergence-deterministic}, and is rather avoided by using step-sizes scaling down with the number of local steps.
Third, in the FP setting players are assumed to have finite action spaces and mixed strategies (corresponding to points on a probability simplex), while the MpFL setting deals with continuous action spaces with pure (non-random) strategies.
However, despite distinctions, as FP has been previously studied in the distributed $n$-player game setup \citep{ShammaArslan2005_dynamic}, exploring the further connection between MpFL and FP could be an interesting direction.

\paragraph{Federated bilevel optimization.}
Bilevel optimization is a nested problem in which the outer optimization objective depends on the solution to an inner optimization problem \cite{bracken1973mathematical, aiyoshi1981hierarchical}.
It can be viewed as a hierarchical game between a leader and a follower and generalizes minimax optimization.
Recently, several works have studied bilevel formulations in federated learning (FL) settings \cite{tarzanagh2022fednest, huang2023achieving, li2023communicationefficient, qiu2023zerothorder}, with a growing focus on designing communication-efficient, single-loop algorithms for federated bilevel optimization \cite{yang2023simfbo, yang2025first}.

\paragraph{Decoupled SGD.} 

The concurrent work \cite{zindari2025decoupled} proposes and analyzes the \textit{Decoupled SGD} algorithm whose update rule coincides with \abbvname. While the exposition of this work emphasizes Decoupled SGDA --- a version for two-player games, the multiplayer case is considered in their Appendix~C.
The paper defines $\opF_\vy (\vx) = \left(\nabla f_1(x^1; y^{-1}), \dots, \nabla f_n(x^n; y^{-n})\right)$ for $\vy \in \reals^D$, and assuming that $\opF_\vy$ is $\bar{\mu}$-strongly monotone and $\norm{\opF_\vy(\vx) - \opF(\vx)} \le L_c \norm{\vx - \vy}$, $\forall \vx,\vy\in \reals^D$, shows that Decoupled SGD can provide communication acceleration in the \textit{weakly-coupled} regime where $\kappa_c = \nicefrac{L_c}{\bar{\mu}}$ is small (even in the deterministic regime).
Despite the algorithmic similarity between \abbvname and Decoupled SGD, our work differs from \cite{zindari2025decoupled} not only in terms of how technical results are derived, but also in terms of the emphasis. In particular, unlike \cite{zindari2025decoupled}, we consider \abbvname as a component of the broader framework of MpFL, which we view as our most significant conceptual contribution.

\newpage

\section{Detailed distinction of MpFL from prior FL setups}
\label{section:distinction-from-prior-FL}

\paragraph{Classical FL algorithms are incompatible with MpFL.}
We first clarify that classical FL algorithms including Local~SGD \cite{mcmahan2017communication}, FedProx \cite{LiSahuZaheerSanjabiTalwalkarSmith2020_federated}, SCAFFOLD \cite{KarimireddyKaleMohriReddiStichSuresh2020_scaffold} or Scaffnew \cite{MishchenkoMalinovskyStichRichtarik2022_proxskip} are not suitable for the MpFL setting.
There the problem is formulated as
\begin{align}
\label{eqn:appendix-classical-FL}
    \underset{x\in \reals^d}{\textrm{minimize}}  \quad  f(x) = \frac{1}{n} \sum_{i=1}^n f_i(x) . \tag{FL}
\end{align}
On the other hand, recall that MpFL is formulated as
\begin{align}
\label{eqn:appendix-MpFL}
    \underset{\vx_\star = (x_\star^1,\dots,x_\star^n) \in \reals^D}{\text{find}} \quad f_i(x_\star^i; x_\star^{-i}) \le f_i (x^i; x_\star^{-i}), \quad \forall x^i \in \reals^{d_i}, \quad \forall i \in [n] . \tag{MpFL}
\end{align}
One obvious distinction is that \eqref{eqn:appendix-classical-FL} seeks a single $x_\star \in \reals^d$ minimizing the finite sum, while in \eqref{eqn:appendix-MpFL} each player finds distinct $x^1,\dots,x^n$ satisfying the equilibrium condition.
The player $i$ \textbf{does not} update $x^j$ for $j\ne i$ in MpFL, so unlike in classical FL, each player only contributes to partial coordinates of the desired solution $\vx_\star = (x^1_\star,\dots,x^n_\star)$.
Therefore, we cannot apply Local SGD or its variants to the MpFL setup.

\begin{figure}[H]
\centering
\begin{subfigure}[b]{0.36\textwidth}
    \centering
    \includegraphics[width=\textwidth]{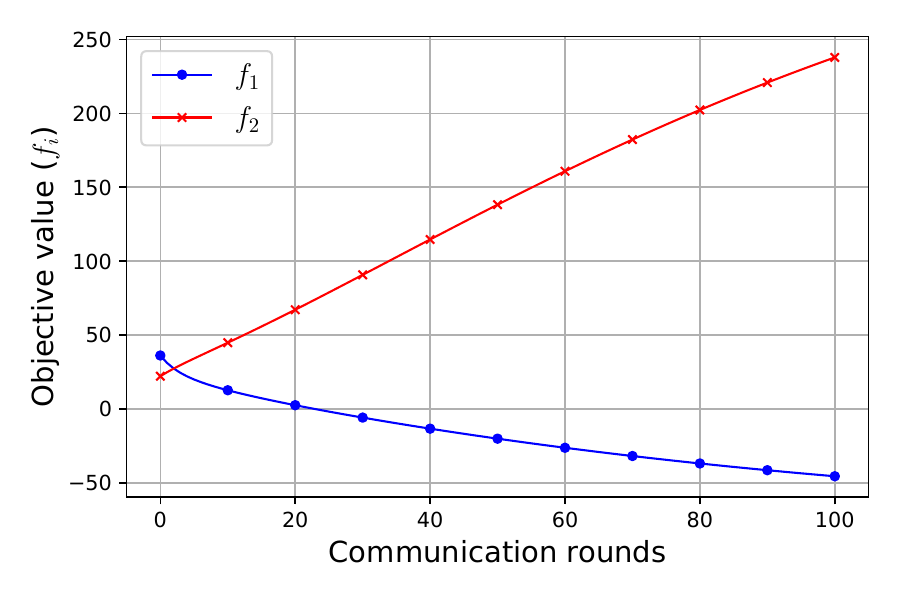}
    \caption{Local SGD} \label{fig:appendix-counterexample-local-sgd}
\end{subfigure}
\hspace{10mm}
\begin{subfigure}[b]{0.36\textwidth}
    \centering
    \includegraphics[width=\textwidth]{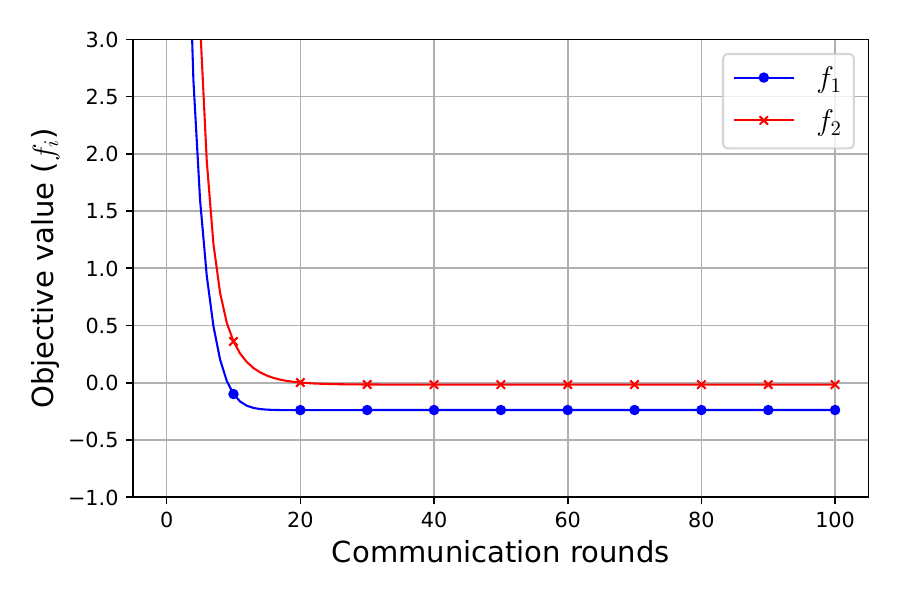}
    \caption{\abbvname} \label{fig:appendix-counterexample-pearl-sgd}
\end{subfigure}
    \caption{Plots of objective values $f_1, f_2$ in \eqref{eqn:appendix-simple-counterexample} from running \textbf{(left)} Local SGD on the joint variable $(u,v)$ and \textbf{(right)} \abbvname.}
\end{figure}

Additionally, it is generally not possible to reach an equilibrium by performing SGD on the sum of objectives (as in Local~SGD).
Consider the following simple example with $n=2$ clients:
\begin{align}
\label{eqn:appendix-simple-counterexample}
    f_1(u;v) = \frac{1}{2} u^\intercal (\mathbf{A}u - a - \mathbf{B}^\intercal v) - \frac{1}{20} \|v\|^2, \quad f_2(v;u) = \frac{1}{4} \|v\|^2 + \frac{1}{2} v^\intercal (\mathbf{B}u - b) - \frac{1}{20} \|u\|^2 
\end{align}
where $u,v \in \reals^d$, $\mathbf{A} \succ 0$ and $a,b \in \reals^d$ (we use $(u,v)$ instead of $(x^1,x^2)$ for clearer notation).
Running Local SGD with respect to the joint variable $(u,v)$ on the sum $\frac{1}{2}(f_1(u,v) + f_2(u,v))$ results in divergence of one of the objective values (\cref{fig:appendix-counterexample-local-sgd}) while \abbvname converges to the equilibrium and the objective values stabilize (\cref{fig:appendix-counterexample-pearl-sgd}).
However, although we included this example for illustration, note that here Local SGD is not even conforming to the rules of MpFL as both clients are updating $(u,v)$ simultaneously.

\paragraph{Federated minimax optimization (FMO) algorithms are incompatible with MpFL.}
In FMO, the problem is
\begin{align}
\label{eqn:appendix-federated-minimax}
    \underset{x\in \reals^{d_x}}{\textrm{minimize}} \,\, \underset{y\in \reals^{d_y}}{\textrm{maximize}} \quad \cL(x,y) = \frac{1}{n} \sum_{i=1}^n \cL_i (x,y) . \tag{FMO}
\end{align}
In algorithms designed for \eqref{eqn:appendix-federated-minimax} including Local~SGDA or Local~SEG, each client $i$ locally updates \emph{both} the minimization and maximization variables $(x^i, y^i)$, and \emph{all clients work collaboratively} toward finding the minimax solution (saddle point) of $\cL(x,y)$, which is a global objective.
On the other hand, in \eqref{eqn:appendix-MpFL}, each player $i$ locally updates \emph{only} their own action $x^i$, \emph{for their own individual interest} of reducing $f_i(\cdot; x^{-i})$.
Even though both \eqref{eqn:appendix-federated-minimax} and \eqref{eqn:appendix-MpFL} aim to reach an equilibrium, they are completely different processes from a conceptual level.
For example, for the $n$-player game setup in \cref{sec:n-player-experiment} with $n=5$, it is not possible to apply Local~SGDA or Local~SEG due to the conceptual mismatch.

\newpage

\section{Omitted proofs for \fullname~(\abbvname)}
\label{section:omitted_proofs_playerwise_local_gradient}

\subsection{Key ideas and proof outline}
\label{subsection:plog-proof-outline}

We first provide an outline for the proof of \cref{theorem:stochastic-local-GDA}.
The key components of the proof are as follows:
\textbf{(i)} we show that a round of local SGD in \abbvname behaves like a large single descent step with respect to the joint gradient operator $\opF$ except for \emph{local error} terms caused by running multiple SGD steps locally (\cref{lemma:proof-outline-basic-bound}), and \textbf{(ii)} we bound these local error terms (\cref{lemma:local-GDA-iterate-difference-bound-stochastic-case}).

\begin{lemma}
\label{lemma:proof-outline-basic-bound}
Assume \hyperref[assumption:smoothness]{\textbf{\emph{(SM)}}}, and let $L_{\mathrm{max}} = \max\{L_1,\dots,L_n\}$.
Let $0 \le p \le R-1$ be a fixed round index in \abbvname and suppose $\gamma_k \equiv \gamma > 0$ for $k=\tau p, \dots, \tau(p+1) - 1$.
Then for arbitrary $\alpha > 0$, we have
\begin{align*}
\textstyle
    & \mathbb{E} \left[\sqnorm{\vx_{\tau (p+1)} - \vx_\star} \,\middle|\, \vx_{\tau p} \right] \le (1 + (\tau -1)\alpha\gamma) \sqnorm{\vx_{\tau p} - \vx_\star} - 2\gamma\tau \inprod{\opF(\vx_{\tau p})}{\vx_{\tau p} - \vx_\star} \\
    & \qquad \qquad \qquad \qquad \qquad + \frac{\gamma L_{\mathrm{max}}^2}{\alpha} \sum_{j=\tau p+1}^{\tau p+\tau-1} \mathbb{E} \left[ \sqnorm{\vx_{\tau p} - \vx_{j}} \,\middle|\, \vx_{\tau p} \right] + \mathbb{E} \left[ \sqnorm{\vx_{\tau p} - \vx_{\tau (p+1)}} \,\middle|\, \vx_{\tau p} \right] .
\end{align*}
\end{lemma}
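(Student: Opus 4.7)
The strategy is to start from the elementary identity
\[
\sqnorm{\vx_{\tau(p+1)} - \vx_\star} = \sqnorm{\vx_{\tau p} - \vx_\star} + 2\inprod{\vx_{\tau(p+1)} - \vx_{\tau p}}{\vx_{\tau p} - \vx_\star} + \sqnorm{\vx_{\tau(p+1)} - \vx_{\tau p}}
\]
and take conditional expectation given $\vx_{\tau p}$. The last squared term already appears on the right-hand side of the claim, so the only work is to massage the cross term into the form $-2\gamma\tau\inprod{\opF(\vx_{\tau p})}{\vx_{\tau p} - \vx_\star}$ plus acceptable residuals. Since $\vx_{\tau(p+1)} - \vx_{\tau p} = -\gamma \sum_{k=\tau p}^{\tau p + \tau - 1} \vg_k$, the cross term is a sum of $\tau$ inner products, and the central question is how the stochastic gradients $\vg_k$ interact with $\opF(\vx_{\tau p})$.

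The key step is to introduce the round-dependent operator $\widetilde{\opF}_p(\vx) := \bigl(\nabla f_1(x^1; x_{\tau p}^{-1}), \dots, \nabla f_n(x^n; x_{\tau p}^{-n})\bigr)$, which is what \abbvname effectively uses as its deterministic drift within round $p$. By unbiasedness of the stochastic oracle and the tower property, $\mathbb{E}[\vg_k \mid \vx_{\tau p}] = \mathbb{E}[\widetilde{\opF}_p(\vx_k) \mid \vx_{\tau p}]$, and crucially $\widetilde{\opF}_p(\vx_{\tau p}) = \opF(\vx_{\tau p})$ because the "frozen" coordinates coincide with the current iterate at the start of the round. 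Adding and subtracting $\widetilde{\opF}_p(\vx_{\tau p})$ inside each inner product lets me split the cross term as
\[
-2\gamma\tau\inprod{\opF(\vx_{\tau p})}{\vx_{\tau p} - \vx_\star} \;-\; 2\gamma \sum_{k=\tau p + 1}^{\tau p + \tau - 1} \mathbb{E}\!\left[\inprod{\widetilde{\opF}_p(\vx_k) - \widetilde{\opF}_p(\vx_{\tau p})}{\vx_{\tau p} - \vx_\star} \,\middle|\, \vx_{\tau p}\right] ,
\]
isolating the desired anchor term and a player-drift residual.

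To control the residual, I apply Young's inequality $-2\gamma\inprod{a}{b} \le \gamma\alpha\sqnorm{b} + \tfrac{\gamma}{\alpha}\sqnorm{a}$ termwise with $a = \widetilde{\opF}_p(\vx_k) - \widetilde{\opF}_p(\vx_{\tau p})$ and $b = \vx_{\tau p} - \vx_\star$, producing the $(\tau-1)\alpha\gamma\sqnorm{\vx_{\tau p} - \vx_\star}$ factor after summing $\tau-1$ terms. Then I invoke \hyperref[assumption:smoothness]{\textbf{\emph{(SM)}}} componentwise: since $f_i(\cdot; x_{\tau p}^{-i})$ is $L_i$-smooth, $\|\nabla f_i(x_k^i; x_{\tau p}^{-i}) - \nabla f_i(x_{\tau p}^i; x_{\tau p}^{-i})\| \le L_i\|x_k^i - x_{\tau p}^i\|$, and summing the squares over $i$ and bounding $L_i \le L_{\mathrm{max}}$ gives $\sqnorm{\widetilde{\opF}_p(\vx_k) - \widetilde{\opF}_p(\vx_{\tau p})} \le L_{\mathrm{max}}^2 \sqnorm{\vx_k - \vx_{\tau p}}$. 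Taking conditional expectations and reassembling yields exactly the stated inequality.

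The only real subtlety, and what I expect to be the main point of care rather than difficulty, is the bookkeeping around the operator $\widetilde{\opF}_p$ versus $\opF$: the $i$-th stochastic gradient is evaluated at $(x_k^i, x_{\tau p}^{-i})$ rather than at the full iterate $\vx_k$, so one cannot directly write $\mathbb{E}[\vg_k \mid \vx_{\tau p}] = \mathbb{E}[\opF(\vx_k) \mid \vx_{\tau p}]$. The identity $\widetilde{\opF}_p(\vx_{\tau p}) = \opF(\vx_{\tau p})$ is precisely the bridge that allows introducing $\opF$ at the anchor point while leaving a smoothness-controllable deviation elsewhere. Beyond this observation, the remaining computations are a routine combination of unbiasedness, Young's inequality, and componentwise smoothness.
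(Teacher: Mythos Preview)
Your proposal is correct and follows essentially the same approach as the paper's proof: expand the squared distance, replace stochastic gradients by their conditional means via unbiasedness and the tower property, add and subtract the anchor gradient at $\vx_{\tau p}$, apply Young's inequality with parameter $\alpha$ to the $(\tau-1)$ residual terms, and invoke \hyperref[assumption:smoothness]{\textbf{\emph{(SM)}}} componentwise. The only difference is organizational---the paper carries out the argument per player $i$ and aggregates at the end, whereas you package everything into the joint operator $\widetilde{\opF}_p$ from the start---but the underlying computation is identical.
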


\paragraph{Local error bound.}
The right hand side of the bound in \cref{lemma:proof-outline-basic-bound} involves the quantities
\begin{align}
\label{eqn:local-errors}
    \mathbb{E} \left[ \sqnorm{\vx_{\tau p} - \vx_{j}} \,\middle|\, \vx_{\tau p} \right] = \sum_{i=1}^n \mathbb{E} \left[ \sqnorm{x_{\tau p}^i - x_j^i} \,\middle|\, \vx_{\tau p} \right]
\end{align}
for $j=\tau p+1, \dots, \tau (p+1)$.
We further bound \eqref{eqn:local-errors} using the following result.

\begin{lemma}
\label{lemma:local-GDA-iterate-difference-bound-stochastic-case}
Suppose Assumptions~\hyperref[assumption:convexity]{\textbf{\emph{(CVX)}}}, \hyperref[assumption:smoothness]{\textbf{\emph{(SM)}}} and  \hyperref[assumption:noise]{\textbf{\emph{(BV)}}} hold.
For a fixed $i\in [n]$ and a fixed communication round $p$ in \abbvname, suppose $\gamma_k \equiv \gamma$ for $k=\tau p, \dots, \tau(p+1) - 1$, where $0 < \gamma \le \frac{1}{L_i} \min\left\{ 1, \frac{1}{\tau-1} \right\}$.
Then for $t=0,\dots,\tau$,
\begin{align*}
    \mathbb{E} \left[ \sqnorm{x_{\tau p}^i - x_{\tau p + t}^i} \,\middle|\, \vx_{\tau p} \right]  & \le \gamma^2 t^2 \sqnorm{\nabla f(x_{\tau p}^i; x_{\tau p}^{-i})} + \gamma^2 t \left(1 + 2(t-1)(t+1) \gamma L_i \right) \sigma_i^2 .
\end{align*}
\end{lemma}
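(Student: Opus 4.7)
The plan is to unroll the SGD recursion: setting $z_t := x_{\tau p+t}^i - x_{\tau p}^i$ and $G^* := \nabla f_i(x_{\tau p}^i; x_{\tau p}^{-i})$, we have $z_t = -\gamma \sum_{s=0}^{t-1} g_{\tau p+s}^i$. Decompose each stochastic gradient as $g_{\tau p+s}^i = G_s + D_s$ with $G_s := \nabla f_i(x_{\tau p+s}^i; x_{\tau p}^{-i})$ the true gradient along the stochastic path, and $D_s$ the mean-zero noise satisfying $\mathbb{E}[\sqnorm{D_s} \mid \mathcal{F}_{\tau p+s}] \le \sigma_i^2$. The strategy is to bound the squared norms of the two sums $\sum_s G_s$ and $\sum_s D_s$ separately in expectation, matching the two terms on the right-hand side of the claim.

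The noise contribution is immediate by martingale orthogonality: because $\mathbb{E}[D_s \mid \mathcal{F}_{\tau p+s}] = 0$, the cross-terms $\mathbb{E}\langle D_j, D_s\rangle$ for $j \neq s$ vanish by the tower property, so $\mathbb{E}\bigl[\sqnorm{\sum_{s=0}^{t-1} D_s} \,\big|\, \vx_{\tau p}\bigr] = \sum_s \mathbb{E}[\sqnorm{D_s} \mid \vx_{\tau p}] \le t\sigma_i^2$, which produces the leading $\gamma^2 t \sigma_i^2$ summand.

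For the gradient sum, the clean $t^2 \sqnorm{G^*}$ bound is in essence the deterministic observation that on the $L_i$-smooth convex $\phi(\cdot) := f_i(\cdot; x_{\tau p}^{-i})$, a gradient step with $\gamma \le 2/L_i$ is cocoercive and makes the gradient norms along the noise-free trajectory $\bar x_s^i$ non-increasing, i.e.\ $\norm{\nabla \phi(\bar x_s^i)} \le \norm{G^*}$; the triangle inequality then yields $\norm{\sum_{s=0}^{t-1} \nabla\phi(\bar x_s^i)} \le t\norm{G^*}$. To transfer this to the stochastic iterates, I would use smoothness $\norm{G_s - \nabla\phi(\bar x_s^i)} \le L_i \norm{x_{\tau p+s}^i - \bar x_s^i}$ together with an inductive estimate on the deviation from the noise-free path. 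The step-size constraint $\gamma \le \frac{1}{L_i(\tau-1)}$ is exactly what keeps the compound factor $(1+\gamma L_i)^t$ of order $O(1)$ uniformly for $t \le \tau$, which is what permits the noise-gradient coupling to contribute only the correction $2(t-1)(t+1)\gamma L_i \sigma_i^2$.

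The main obstacle is obtaining the sharp coefficient $t^2$ (rather than $2t^2$) on $\sqnorm{G^*}$: a naive triangle-inequality split $\sqnorm{z_t} \le 2\sqnorm{\bar z_t} + 2\sqnorm{e_t}$ into noise-free trajectory and stochastic error would double this constant. Avoiding this factor of $2$ requires working directly with the one-step recursion for $\mathbb{E}\sqnorm{z_{t+1}} = \mathbb{E}\sqnorm{z_t - \gamma G_t}^2 + \gamma^2 \mathbb{E}\sqnorm{D_t}$, and absorbing the problematic cross-term via the Bregman cocoercivity identity $\langle G_s - G^*, z_s\rangle \ge \frac{1}{L_i}\sqnorm{G_s - G^*}$ for smooth convex $\phi$, rather than bounding it with Young's inequality at a generic parameter. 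Closing the resulting recursion by induction on $t$ with the stated ansatz then yields the claim.
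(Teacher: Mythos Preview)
Your ingredients largely match the paper's: martingale orthogonality for the noise sum, comparison to the noise-free trajectory with the step-size $\gamma \le \frac{1}{L_i(\tau-1)}$ keeping $(1+\gamma L_i)^t = O(1)$, the almost-nonincreasing gradient norm along SGD, and the one-step recursion to avoid the factor $2$. But the way you propose to assemble them has a gap.

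The plan to bound $\bigl\|\sum_s G_s\bigr\|^2$ and $\bigl\|\sum_s D_s\bigr\|^2$ separately in expectation does not account for the cross-terms $\mathbb{E}\langle D_j, G_s\rangle$ with $j < s$: these are \emph{not} zero, because $G_s = \nabla\phi(x_{\tau p+s}^i)$ depends on the earlier noise $D_j$ through the trajectory. This is precisely the ``noise--gradient coupling'' you mention, but it is a genuine extra term in the expansion, not merely a correction to the gradient-sum bound. Your fallback in the last paragraph --- absorbing a cross-term via the cocoercivity identity $\langle G_s - G^*, z_s\rangle \ge \tfrac{1}{L_i}\|G_s - G^*\|^2$ --- does not close the recursion with the stated constants: after applying it to $\|z_t - \gamma G_t\|^2$ you are left with $-2\gamma\,\mathbb{E}\langle z_t, G^*\rangle = 2\gamma^2 \sum_{s<t}\mathbb{E}\langle G_s, G^*\rangle$, which cocoercivity alone does not control, and a naive Young bound reintroduces an unwanted $(1+\gamma L_i)$ factor at each step.

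The paper separates the two roles you have partially conflated. First, it uses your deterministic-path comparison (your third paragraph) not to bound $\|G_s - \nabla\phi(\bar x_s)\|$ directly, but to bound exactly those cross-terms: $\mathbb{E}\langle D_j, G_s\rangle \le 3(s-j)\gamma L_i \sigma_i^2$. Inducting with this yields the intermediate bound $\mathbb{E}\|z_t\|^2 \le \gamma^2\,\mathbb{E}\bigl\|\sum_{s<t} G_s\bigr\|^2 + \gamma^2 t\sigma_i^2 + (t-1)t(t+1)\gamma^3 L_i\sigma_i^2$, where the sum is over gradients along the \emph{stochastic} path. Second --- and this is where cocoercivity actually enters --- it proves $\mathbb{E}\|G_s\|^2 \le \|G^*\|^2 + 2s\gamma L_i\sigma_i^2$ (Lemma~\ref{lemma:stochastic-gradient-descent-grad-norm-bound}) and combines this with Jensen to get $\mathbb{E}\bigl\|\sum_{s<t} G_s\bigr\|^2 \le t^2\|G^*\|^2 + (t-1)t(t+1)\gamma L_i\sigma_i^2$. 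Adding the two cubic terms gives the factor $2(t-1)(t+1)$ in the statement.
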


Here we sketch the proof of \cref{lemma:local-GDA-iterate-difference-bound-stochastic-case} and clarify the role of Assumption~\hyperref[assumption:convexity]{\textbf{\emph{(CVX)}}}. 
By assuming that each $f_i(\cdot; x_{\tau p}^{-i})$ is convex and $L_i$-smooth, we can prove \cref{lemma:stochastic-gradient-descent-grad-norm-bound}, showing that the expectation of squared gradient norm is ``almost'' nonincreasing along the local SGD steps, except for some additional term due to stochasticity.
Then, we rewrite each summand in~\eqref{eqn:local-errors} as
\begin{align}
\label{eqn:local-errors-summation-form}
    \mathbb{E} \left[ \sqnorm{x_{\tau p}^i - x_k^i} \,\middle|\, \vx_{\tau p} \right] = \mathbb{E} \left[ \gamma^2 \sqnorm{\sum_{j=\tau p}^{k-1} g_j^i} \,\middle|\, \vx_{\tau p} \right] = \mathbb{E} \left[ \gamma^2 \sqnorm{\sum_{j=\tau p}^{k-1} \nabla f_{i,\xi_j^i} (x_j^i; x_{\tau p}^{-i})} \,\middle|\, \vx_{\tau p} \right]
\end{align}
and use \cref{lemma:stochastic-gradient-descent-grad-norm-bound} to bound~\eqref{eqn:local-errors-summation-form}.

\begin{lemma}
\label{lemma:stochastic-gradient-descent-grad-norm-bound}
Under the assumptions of \cref{lemma:local-GDA-iterate-difference-bound-stochastic-case}, for $j=\tau p+1,\dots,\tau(p+1)$,
\begin{align*}
    \mathbb{E} \left[ \sqnorm{\nabla f_i (x_j^i; x_{\tau p}^{-i})} \,\middle|\, \vx_{\tau p} \right] \le \sqnorm{\nabla f_i (x_{\tau p}^i; x_{\tau p}^{-i})} + 2(j-\tau p) \gamma L_i \sigma_i^2  .
\end{align*}
\end{lemma}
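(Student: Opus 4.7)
For a fixed player $i$ and round $p$, the other players' actions $x_{\tau p}^{-i}$ are frozen throughout the round, so the local updates in \abbvname{} are exactly SGD on the single-variable function $h(x) := f_i(x;x_{\tau p}^{-i})$, which is convex and $L_i$-smooth by \hyperref[assumption:convexity]{\textbf{\emph{(CVX)}}} and \hyperref[assumption:smoothness]{\textbf{\emph{(SM)}}}. Writing $y_k := x_k^i$ for $k = \tau p, \dots, \tau(p+1)$, the iteration is $y_{k+1} = y_k - \gamma g_k$ with $g_k = \nabla h(y_k) + e_k$ where $\mathbb{E}[e_k \mid y_k] = 0$ and $\mathbb{E}[\|e_k\|^2 \mid y_k] \le \sigma_i^2$ by \hyperref[assumption:noise]{\textbf{\emph{(BV)}}}. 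The plan is to establish a one-step recursion of the form $\mathbb{E}[\|\nabla h(y_{k+1})\|^2 \mid y_k] \le \|\nabla h(y_k)\|^2 + 2\gamma L_i \sigma_i^2$ and then telescope from $k = \tau p$ up to $j-1$.

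For the one-step bound, I would begin with the identity
$$\|\nabla h(y_{k+1})\|^2 = \|\nabla h(y_k)\|^2 + 2\langle \nabla h(y_k), \Delta_k\rangle + \|\Delta_k\|^2,\qquad \Delta_k := \nabla h(y_{k+1}) - \nabla h(y_k),$$
and invoke the Baillon–Haddad co-coercivity of $\nabla h$ (available because $h$ is convex and $L_i$-smooth): $\langle y_{k+1}-y_k, \Delta_k\rangle \ge (1/L_i)\|\Delta_k\|^2$. Since $y_{k+1}-y_k = -\gamma(\nabla h(y_k) + e_k)$, this rearranges to $\langle \nabla h(y_k), \Delta_k\rangle \le -(1/(\gamma L_i))\|\Delta_k\|^2 - \langle e_k, \Delta_k\rangle$, giving
$$\|\nabla h(y_{k+1})\|^2 \le \|\nabla h(y_k)\|^2 + \Bigl(1 - \tfrac{2}{\gamma L_i}\Bigr)\|\Delta_k\|^2 - 2\langle e_k, \Delta_k\rangle.$$
Next I would handle the noise cross-term with Young's inequality, $-2\langle e_k,\Delta_k\rangle \le \beta\|\Delta_k\|^2 + \beta^{-1}\|e_k\|^2$, and choose $\beta = 2/(\gamma L_i) - 1$ (positive since $\gamma L_i \le 1$) so that the $\|\Delta_k\|^2$ coefficient cancels. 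This produces $\|\nabla h(y_{k+1})\|^2 \le \|\nabla h(y_k)\|^2 + \frac{\gamma L_i}{2-\gamma L_i}\|e_k\|^2$; using $\gamma L_i \le 1$ gives $\frac{\gamma L_i}{2-\gamma L_i} \le \gamma L_i$, and taking conditional expectation yields the desired per-step recursion (with an extra factor of $2$ to spare).

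Telescoping this recursion from $\tau p$ to $j-1$ and invoking the tower property conditioned on $\vx_{\tau p}$ gives $\mathbb{E}[\|\nabla h(y_j)\|^2 \mid \vx_{\tau p}] \le \|\nabla h(y_{\tau p})\|^2 + 2(j-\tau p)\gamma L_i \sigma_i^2$, which is the claim after unfolding $h$. The only delicate point is the choice of the Young parameter $\beta$: a naive split loses the co-coercivity contribution and produces a multiplicative blow-up (like $(1+\gamma L_i)^{j-\tau p}$) in the initial gradient term rather than the additive linear growth we want. Matching $\beta$ to kill the $\|\Delta_k\|^2$ coefficient exactly is what converts co-coercivity into an essentially non-expansive bound and makes the noise enter purely additively.
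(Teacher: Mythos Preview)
Your proof is correct and hinges on the same key tool as the paper's---Baillon--Haddad cocoercivity of $\nabla h$---but the two arguments diverge in how the noise cross-term is controlled. The paper's proof (Lemma~B.1) works in expectation from the start: after using cocoercivity it isolates the term $\mathbb{E}_\xi[\langle \nabla h(y_k) - g_k, \nabla h(y_{k+1})\rangle]$, then \emph{centers} it by subtracting the deterministic quantity $\nabla h(y_k - \gamma\nabla h(y_k))$ (which has zero expectation against the noise), and finally bounds the difference $\|\nabla h(y_{k+1}) - \nabla h(y_k - \gamma\nabla h(y_k))\| \le L_i\gamma\|e_k\|$ by smoothness. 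This yields exactly $2\gamma L_i\sigma_i^2$ per step. Your route instead works pathwise: you keep the $\|\Delta_k\|^2$ term explicit and kill it with the optimized Young parameter $\beta = 2/(\gamma L_i) - 1$, obtaining the sharper per-step increment $\frac{\gamma L_i}{2-\gamma L_i}\sigma_i^2 \le \gamma L_i\sigma_i^2$ (so indeed a factor of $2$ to spare). Your approach is arguably more direct and gives a tighter constant; the paper's centering trick is a bit more delicate but avoids the need to tune a Young parameter, and its step-size requirement $\gamma \le 2/L_i$ is nominally looser than your $\gamma L_i \le 1$ (though the latter is already implied by the hypotheses of \Cref{lemma:local-GDA-iterate-difference-bound-stochastic-case}).
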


\paragraph{Remark.} Given~\eqref{eqn:local-errors-summation-form}, it is tempting to apply Jensen's inequality to the rightmost quantity and then apply \cref{lemma:stochastic-gradient-descent-grad-norm-bound}.
However, this results in a bound that is looser than our \cref{lemma:local-GDA-iterate-difference-bound-stochastic-case}.
We need more careful arguments regarding the expectations, which we detail throughout \cref{section:omitted_proofs_playerwise_local_gradient}.

\begin{proof}[Proof outline for \cref{theorem:stochastic-local-GDA}]

We combine Lemmas~\ref{lemma:proof-outline-basic-bound} and \ref{lemma:local-GDA-iterate-difference-bound-stochastic-case}, and then apply \hyperref[assumption:star-cocoercivity]{\textbf{\emph{(SCO)}}} to eliminate the $\sqnorm{\opF(\vx_{\tau p})}$ terms to obtain
\begin{align}
    \begin{split}
    \mathbb{E}\left[\sqnorm{\vx_{\tau (p+1)} - \vx_\star} \,\middle|\, \vx_{\tau p} \right] & \le (1 + (\tau -1)\alpha\gamma) \sqnorm{\vx_{\tau p} - \vx_\star} + (\text{terms proportional to } \sigma^2 ) \\
    & \quad - \underbrace{\left( 2\gamma \tau - \gamma^2 \tau^2 \ell - \frac{\gamma^3 L_{\mathrm{max}}^2 \tau^2 (\tau-1) \ell}{3\alpha} \right)}_{:=C}  \inprod{\opF(\vx_{\tau p})}{\vx_{\tau p}-\vx_\star} 
    \end{split}
    \label{eqn:proof-outline-bound-with-alpha}
\end{align}
Provided that $C\ge 0$, we can upper bound the second line of~\eqref{eqn:proof-outline-bound-with-alpha} by $-C\mu \sqnorm{\vx_{\tau p}- \vx_\star}$ using \hyperref[assumption:quasi-strong-monotonicity]{\textbf{\emph{(QSM)}}}.
Then we choose $\alpha = \gamma\tau L_{\mathrm{max}} \sqrt{\frac{\ell\mu}{3}}$ which minimizes the resulting coefficient of $\sqnorm{\vx_{\tau p}- \vx_\star}$, and rewrite it in the form $1-\gamma\tau\mu\zeta$.
Finally, take expectation over $\vx_{\tau p}$ in \eqref{eqn:proof-outline-bound-with-alpha} and unroll the recursion.
\end{proof}

The proofs of Lemmas~\ref{lemma:proof-outline-basic-bound}, \ref{lemma:local-GDA-iterate-difference-bound-stochastic-case} and \ref{lemma:stochastic-gradient-descent-grad-norm-bound} and the detailed full proof of \cref{theorem:stochastic-local-GDA} are presented through the following subsections.

\subsection{Proof of \cref{lemma:proof-outline-basic-bound}}

Note that for $k=\tau p+1, \dots, \tau(p+1)$ (iterations between $p$-th and $(p+1)$-th communications), we have 
\begin{align}
    \sqnorm{\vx_k - \vx_\star} & = \sum_{i=1}^n \sqnorm{x_k^i - x_{\star}^i} \nonumber \\
    & = \sum_{i=1}^n \sqnorm{x_{\tau p}^i - x_{\star}^i - \left( x_{\tau p}^i - x_{k}^i \right)} \nonumber \\
    & = \sum_{i=1}^n \left[ \sqnorm{x_{\tau p}^i - x_{\star}^i} - 2\inprod{x_{\tau p}^i - x_{\star}^i}{x_{\tau p}^i - x_{k}^i} + \sqnorm{x_{\tau p}^i - x_{k}^i} \right] \nonumber \\
    & = \sqnorm{\vx_{\tau p} - \vx_\star} - 2\gamma \sum_{i=1}^n \sum_{j=\tau p}^{k-1} \inprod{x_{\tau p}^i - x_\star^i}{g_j^i} + \sum_{i=1}^n \sqnorm{x_{\tau p}^i - x_{k}^i} , \label{eqn:stochastic-rate-proof-first-identity}
\end{align}
where for the last equality, we use
\begin{align*}
    g_j^i = \nabla f_{i,\xi_j^i} (x_j^i; x_{\tau p}^{-i}), \quad j=\tau p, \dots, k-1, \quad i=1,\dots,n
\end{align*}
and
\begin{align*}
    x_{j+1}^i = x_j^i - \gamma g_j^i , \quad j=\tau p, \dots, k-1, \quad i=1,\dots,n
\end{align*}
to rewrite $x_{\tau p}^i - x_k^i = \gamma \sum_{j=\tau p}^{k-1} g_j^i$.
Note that we have 
\begin{align*}
    \mathbb{E} \left[ -\inprod{x_{\tau p}^i - x_\star^i}{g_{\tau p}^i} \,\middle|\, \vx_{\tau p} \right] = -\inprod{x_{\tau p}^i - x_\star^i}{\nabla f_i (x_{\tau p}^i; x_{\tau p}^{-i})} ,    
\end{align*}
while for the other indices
$j=\tau p+1, \dots, k-1$, we have the upper bound
\begin{align*}
    & \mathbb{E} \left[ -\inprod{x_{\tau p}^i - x_\star^i}{g_j^i} \,\middle|\, x_j^i \right] \\ & = -\inprod{x_{\tau p}^i - x_\star^i}{\nabla f_i (x_j^i; x_{\tau p}^{-i})} \\
    & = -\inprod{x_{\tau p}^i - x_\star^i}{\nabla f_i (x_{\tau p}^i; x_{\tau p}^{-i})} + \inprod{x_{\tau p}^i - x_\star^i}{\nabla f_i (x_{\tau p}^i; x_{\tau p}^{-i}) - \nabla f_i (x_{j}^i; x_{\tau p}^{-i})} \\
    & \le -\inprod{x_{\tau p}^i - x_\star^i}{\nabla f_i (x_{\tau p}^i; x_{\tau p}^{-i})} + \frac{\alpha}{2} \sqnorm{x_{\tau p}^i - x_\star^i} + \frac{1}{2\alpha} \sqnorm{\nabla f_i (x_{\tau p}^i; x_{\tau p}^{-i}) - \nabla f_i (x_{j}^i; x_{\tau p}^{-i})} \\
    & \le -\inprod{x_{\tau p}^i - x_\star^i}{\nabla f_i (x_{\tau p}^i; x_{\tau p}^{-i})} + \frac{\alpha}{2} \sqnorm{x_{\tau p}^i - x_\star^i} + \frac{L_i^2}{2\alpha} \sqnorm{x_{\tau p}^i - x_{j}^i}
\end{align*}
where in the fourth line, we use Young's inequality with an arbitrary $\alpha > 0$ that we determine later.
Take expectations of the both sides in~\eqref{eqn:stochastic-rate-proof-first-identity} (conditioned on $\vx_{\tau p}$), and apply the above bound with the tower rule to obtain
\begin{align}
    & \mathbb{E} \left[\sqnorm{\vx_k - \vx_\star} \,\middle|\, \vx_{\tau p} \right] \nonumber \\
    \begin{split}
    & \le \sqnorm{\vx_{\tau p} - \vx_\star} - 2\gamma\sum_{i=1}^n \sum_{j=\tau p}^{k-1} \inprod{x_{\tau p}^i - x_\star^i}{\nabla f_i (x_{\tau p}^i; x_{\tau p}^{-i})} + 2\gamma\sum_{i=1}^n \sum_{j=\tau p+1}^{k-1} \frac{\alpha}{2} \sqnorm{x_{\tau p}^i - x_\star^i} \\
    & \quad + 2\gamma\sum_{i=1}^n \sum_{j=\tau p+1}^{k-1} \mathbb{E} \left[ \frac{L_i^2}{2\alpha} \sqnorm{x_{\tau p}^i - x_{j}^i} \,\middle|\, \vx_{\tau p} \right] + \sum_{i=1}^n \mathbb{E} \left[ \sqnorm{x_{\tau p}^i - x_{k}^i} \,\middle|\, \vx_{\tau p} \right] .
    \end{split}
    \label{eqn:stochastic-rate-basic-bound-expanded}
\end{align}
Now we apply the identities
\begin{gather*}
    \sum_{i=1}^n \inprod{x_{\tau p}^i - x_\star^i}{\nabla f_i (x_{\tau p}^i; x_{\tau p}^{-i})} = \inprod{\vx_{\tau p} - \vx_\star}{\opF(\vx_{\tau p})}, \quad \sum_{i=1}^n \sqnorm{x_{\tau p}^i - x_\star^i} = \sqnorm{\vx_{\tau p} - \vx_\star} \\
    \sum_{i=1}^n \mathbb{E} \left[ \sqnorm{x_{\tau p}^i - x_{k}^i} \,\middle|\, \vx_{\tau p} \right] = \mathbb{E} \left[ \sqnorm{\vx_{\tau p} - \vx_k} \,\middle|\, \vx_{\tau p} \right]
\end{gather*}
and the inequality
\begin{align*}
    \sum_{i=1}^n \sum_{j=\tau p+1}^{k-1} \mathbb{E} \left[ \frac{L_i^2}{2\alpha} \sqnorm{x_{\tau p}^i - x_{j}^i} \,\middle|\, \vx_{\tau p} \right] & \le \frac{L_{\mathrm{max}}^2}{2\alpha}\sum_{j=\tau p+1}^{k-1} 
    \sum_{i=1}^n \mathbb{E} \left[  \sqnorm{x_{\tau p}^i - x_{j}^i} \,\middle|\, \vx_{\tau p} \right] \\
    & = \frac{L_{\mathrm{max}}^2}{2\alpha}\sum_{j=\tau p+1}^{k-1} 
    \mathbb{E} \left[  \sqnorm{\vx_{\tau p} - \vx_{j}} \,\middle|\, \vx_{\tau p} \right]
\end{align*}
to~\eqref{eqn:stochastic-rate-basic-bound-expanded} and plug in $k=\tau(p+1)$, which gives the desired result.

\subsection{General properties and bounds for SGD}

In this section, we present some general properties of stochastic gradient descent (SGD) for an $L$-smooth, convex function $f \colon \reals^m \to \reals$.
Suppose that we have a stochastic oracle $\nabla f_\xi (\cdot)$ for the gradient operator $\nabla f (\cdot)$, satisfying
\begin{align}
\label{eqn:abstractized-BV-assumptions}
    \mathbb{E}_\xi [\nabla f_\xi (x)] = \nabla f (x) , \quad \mathbb{E}_\xi \left[ \sqnorm{\nabla f_\xi (x) - \nabla f(x)} \right] \le \rho^2 , \quad \forall x \in \reals^m 
\end{align}
where $\mathbb{E}_\xi$ denotes the expectation with respect to randomness in $\xi$.
This setup and the subsequent results are the abstractions of intermediate results that we need for the proofs of \cref{lemma:stochastic-gradient-descent-grad-norm-bound} and \cref{lemma:local-GDA-iterate-difference-bound-stochastic-case}.
Specifically, we will later use the results of this section with
\begin{align*}
    f(\cdot) = f_i (\cdot; x_{\tau p}^{-i}), \qquad \rho^2 = \sigma_i^2 ,
\end{align*}
for each $i=1,\dots,n$.
We make this abstraction to simplify notations and to more effectively convey the key intuitions underlying the analyses.

\begin{lemma}
\label{lemma:stochastic-gradient-descent-grad-norm-bound-abstract-version}
Let $f \colon \reals^m \to \reals$ be convex and $L$-smooth. 
Suppose that a stochastic gradient oracle $\nabla f_\xi (\cdot)$ satisfies~\eqref{eqn:abstractized-BV-assumptions}.
Let $y = x - \gamma \nabla f_\xi (x)$, where $0 < \gamma \le \frac{2}{L}$. Then we have
\begin{align*}
    \mathbb{E}_\xi \left[ \sqnorm{\nabla f(y)} \right] & \le \sqnorm{\nabla f(x)} + 2 \gamma L\rho^2 .
\end{align*}
\end{lemma}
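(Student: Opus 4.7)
The plan is to reduce everything to cocoercivity of $\nabla f$ (which follows from convexity and $L$-smoothness) and to handle the stochastic error $e_\xi := \nabla f_\xi(x) - \nabla f(x)$ separately by exploiting its mean-zero property. Write $y = x - \gamma\nabla f_\xi(x)$ so that $y - x = -\gamma(\nabla f(x) + e_\xi)$, and let $\bar y := x - \gamma\nabla f(x)$ denote the noiseless step (so $y = \bar y - \gamma e_\xi$ and $\bar y$ is deterministic given $x$).

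First I would expand
\begin{equation*}
    \sqnorm{\nabla f(y)} = \sqnorm{\nabla f(x)} + 2\inprod{\nabla f(x)}{\nabla f(y) - \nabla f(x)} + \sqnorm{\nabla f(y) - \nabla f(x)} .
\end{equation*}
Substituting $\nabla f(x) = \nabla f_\xi(x) - e_\xi$ in the inner product and using cocoercivity of $\nabla f$, namely $\inprod{\nabla f(y) - \nabla f(x)}{y-x} \ge \frac{1}{L}\sqnorm{\nabla f(y) - \nabla f(x)}$ together with $y-x = -\gamma\nabla f_\xi(x)$, yields
\begin{equation*}
    2\inprod{\nabla f_\xi(x)}{\nabla f(y) - \nabla f(x)} \le -\frac{2}{\gamma L}\sqnorm{\nabla f(y) - \nabla f(x)} .
\end{equation*}
Plugging back in,
\begin{equation*}
    \sqnorm{\nabla f(y)} \le \sqnorm{\nabla f(x)} + \left(1 - \frac{2}{\gamma L}\right)\sqnorm{\nabla f(y) - \nabla f(x)} - 2\inprod{e_\xi}{\nabla f(y) - \nabla f(x)} ,
\end{equation*}
and the middle term is nonpositive precisely because $\gamma \le 2/L$, so it can be discarded.

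The remaining step is to bound the expectation of the cross term $-2\inprod{e_\xi}{\nabla f(y) - \nabla f(x)}$. This is where the main subtlety lies: a naive Young's inequality applied to $-2\inprod{e_\xi}{\nabla f(y) - \nabla f(x)}$ only yields an $\cO(\gamma^2 L^2 \rho^2)$ or divergent bound as $\gamma \uparrow 2/L$, losing a factor of $\gamma L$ relative to the target. The trick is to split $\nabla f(y) - \nabla f(x) = \bigl(\nabla f(y) - \nabla f(\bar y)\bigr) + \bigl(\nabla f(\bar y) - \nabla f(x)\bigr)$. Because $\bar y$ and $x$ are both independent of $\xi$, the inner product $\mathbb{E}_\xi[\inprod{e_\xi}{\nabla f(\bar y) - \nabla f(x)}]$ vanishes by unbiasedness. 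For the remaining piece, Cauchy--Schwarz and $L$-smoothness give $\|\nabla f(y) - \nabla f(\bar y)\| \le L\|y - \bar y\| = L\gamma\|e_\xi\|$, so
\begin{equation*}
    -2\,\mathbb{E}_\xi\bigl[\inprod{e_\xi}{\nabla f(y) - \nabla f(\bar y)}\bigr] \le 2L\gamma\,\mathbb{E}_\xi[\sqnorm{e_\xi}] \le 2L\gamma\rho^2 .
\end{equation*}
Combining everything yields the claim $\mathbb{E}_\xi[\sqnorm{\nabla f(y)}] \le \sqnorm{\nabla f(x)} + 2\gamma L\rho^2$. The main obstacle, as noted, is handling the cross term at the right order; the deterministic decomposition through $\bar y$ is what makes the bound linear in $\gamma L\rho^2$ instead of quadratic.
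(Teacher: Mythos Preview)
Your proof is correct and follows essentially the same approach as the paper. Both arguments use cocoercivity of $\nabla f$ together with $y-x=-\gamma\nabla f_\xi(x)$ to reduce to the key inequality $\mathbb{E}_\xi\bigl[\sqnorm{\nabla f(y)}\bigr]\le\sqnorm{\nabla f(x)}+2\,\mathbb{E}_\xi\bigl[\inprod{\nabla f(x)-\nabla f_\xi(x)}{\nabla f(y)}\bigr]$, and then bound the cross term by subtracting the deterministic anchor $\nabla f(\bar y)=\nabla f(x-\gamma\nabla f(x))$, using unbiasedness plus $L$-smoothness to get $\gamma L\rho^2$; the only difference is cosmetic (you expand $\sqnorm{\nabla f(y)}$ first and discard the nonpositive $(1-2/\gamma L)$ term, whereas the paper starts from the cocoercivity inequality and cancels).
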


\begin{proof}

It is well-known that if $f$ is convex and $L$-smooth, then $\nabla f$ is $\frac{1}{L}$-cocoercive, i.e., for any $x,y \in \reals^m$,
\begin{align*}
    \inprod{x-y}{\nabla f(x)-\nabla f(y)} \ge \frac{1}{L} \sqnorm{\nabla f(x) - \nabla f(y)} .
\end{align*}
By cocoercivity and the step-size condition $\gamma \le \frac{2}{L}$, we have
\begin{align*}
    & \frac{\gamma}{2} \sqnorm{\nabla f(x) - \nabla f(y)} \\
    & \le \frac{1}{L} \sqnorm{\nabla f(x) - \nabla f(y)} \\
    & \le \inprod{x-y}{\nabla f(x)-\nabla f(y)} \\
    & = \inprod{\gamma\nabla f_\xi(x)}{\nabla f(x) - \nabla f(y)} \\
    & = \gamma \left( \inprod{\nabla f_\xi(x)}{\nabla f(x)} - \inprod{\nabla f(x)}{\nabla f(y)} + \inprod{\nabla f(x) - \nabla f_\xi(x)}{\nabla f(y)} \right) .
\end{align*}
Taking expectation of the both sides, we obtain
\begin{align*}
    & \mathbb{E}_\xi \left[ \frac{\gamma}{2} \sqnorm{\nabla f(x) - \nabla f(y)} \right] \\
    & \le \mathbb{E}_\xi \left[ \gamma \inprod{\nabla f_\xi(x)}{\nabla f(x)} - \gamma \inprod{\nabla f(x)}{\nabla f(y)} + \gamma \inprod{\nabla f(x) - \nabla f_\xi(x)}{\nabla f(y)} \right] \\
    & = \gamma \sqnorm{\nabla f(x)} - \gamma \mathbb{E}_\xi \left[ \inprod{\nabla f(x)}{\nabla f(y)} \right] + \gamma \mathbb{E}_\xi \left[ \inprod{\nabla f(x) - \nabla f_\xi(x)}{\nabla f(y)} \right] .
\end{align*}
Cancelling out the terms and dividing both sides by $\frac{\gamma}{2}$, we then have
\begin{align}
\label{eqn:stochastic-descent-lemma-key-equation}
    \mathbb{E}_\xi \left[ \sqnorm{\nabla f(y)} \right] \le \sqnorm{\nabla f(x)} + 2 \mathbb{E}_\xi \left[ \inprod{\nabla f(x) - \nabla f_\xi(x)}{\nabla f(y)} \right] .
\end{align}
Now observe that 
\begin{align*}
    \mathbb{E}_\xi \left[ \inprod{\nabla f(x) - \nabla f_\xi(x)}{\nabla f(y)} \right] = \mathbb{E}_\xi \left[ \inprod{\nabla f(x) - \nabla f_\xi(x)}{\nabla f(y) - \nabla f(x - \gamma\nabla f(x))} \right]
\end{align*}
because $\nabla f(x-\gamma\nabla f(x))$ is a non-random quantity and $\mathbb{E}_\xi [\nabla f(x) - \nabla f_\xi(x)] = 0$.
Then
\begin{align*}
    & \mathbb{E}_\xi \left[ \inprod{\nabla f(x) - \nabla f_\xi(x)}{\nabla f(y) - \nabla f(x - \gamma\nabla f(x))} \right] \\
    & = \mathbb{E}_\xi \left[ \inprod{\nabla f(x) - \nabla f_\xi(x)}{\nabla f(x - \gamma\nabla f_\xi(x)) - \nabla f(x - \gamma\nabla f(x))} \right] \\
    & \le \mathbb{E}_\xi \left[ \norm{\nabla f(x) - \nabla f_\xi(x)} \norm{\nabla f(x - \gamma\nabla f_\xi(x)) - \nabla f(x - \gamma\nabla f(x))} \right] \\
    & \le \mathbb{E}_\xi \left[ \norm{\nabla f(x) - \nabla f_\xi(x)} L \norm{(x - \gamma\nabla f_\xi(x)) - (x - \gamma\nabla f(x))} \right] \\
    & = \gamma L \mathbb{E}_\xi \left[ \sqnorm{\nabla f(x) - \nabla f_\xi(x)} \right] \\
    & = \gamma L \rho^2 ,
\end{align*}
and plugging this into~\eqref{eqn:stochastic-descent-lemma-key-equation} completes the proof.

\end{proof}

\begin{lemma}
\label{lemma:stochastic-gradient-descent-grad-norm-bound-abstract-version-for-skipped-iterates}

Let $f \colon \reals^m \to \reals$ be convex and $L$-smooth and let the stochastic gradient oracle $\nabla f_\xi (\cdot)$ satisfy~\eqref{eqn:abstractized-BV-assumptions}.
Let $x_0 \in \reals^m$ be any initial point, $0 < \gamma \le \frac{2}{L}$,  and $x_1,\dots,x_t$ be a sequence generated by the stochastic gradient descent algorithm
\[
    x_{s+1} = x_s - \gamma \nabla f_{\xi_s} (x_s) 
\]
for $s=0,\dots,t-1$.
Then we have
\[
    \mathbb{E} \left[ \sqnorm{\nabla f(x_s)} \right] \le \sqnorm{\nabla f(x_0)} + 2s \gamma L \rho^2 
\]
for $s=0,\dots,t-1$.

\end{lemma}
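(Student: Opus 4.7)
The plan is to prove this by straightforward induction on $s$, bootstrapping the single-step bound of \Cref{lemma:stochastic-gradient-descent-grad-norm-bound-abstract-version} into a bound over an arbitrary number of SGD iterates.

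First, I would handle the base case $s=0$, where the claim reduces to $\|\nabla f(x_0)\|^2 \le \|\nabla f(x_0)\|^2$ and holds trivially. Then, for the inductive step, assume the bound
\[
    \mathbb{E}\left[\sqnorm{\nabla f(x_s)}\right] \le \sqnorm{\nabla f(x_0)} + 2s\gamma L \rho^2
\]
holds for some $s \ge 0$. The key observation is that conditioned on $x_s$, the iterate $x_{s+1} = x_s - \gamma \nabla f_{\xi_s}(x_s)$ is exactly the one-step SGD update from $x_s$ with the oracle $\nabla f_{\xi_s}$, whose randomness $\xi_s$ is independent of $x_s$. Thus, \Cref{lemma:stochastic-gradient-descent-grad-norm-bound-abstract-version} applies conditionally and yields
\[
    \mathbb{E}\left[\sqnorm{\nabla f(x_{s+1})} \,\middle|\, x_s\right] \le \sqnorm{\nabla f(x_s)} + 2\gamma L\rho^2.
\]

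Finally, I would take total expectation of both sides, invoke the tower property $\mathbb{E}[\mathbb{E}[\cdot \mid x_s]] = \mathbb{E}[\cdot]$, and combine with the inductive hypothesis to obtain
\[
    \mathbb{E}\left[\sqnorm{\nabla f(x_{s+1})}\right] \le \mathbb{E}\left[\sqnorm{\nabla f(x_s)}\right] + 2\gamma L\rho^2 \le \sqnorm{\nabla f(x_0)} + 2(s+1)\gamma L \rho^2,
\]
which closes the induction. I do not anticipate any real obstacle here: once \Cref{lemma:stochastic-gradient-descent-grad-norm-bound-abstract-version} is in hand, this is essentially an unrolling argument, and the only care needed is making sure the step-size constraint $\gamma \le 2/L$ required by the single-step lemma is satisfied at every iterate (which it is, since $\gamma$ is fixed) and that conditional expectations are applied with the right filtration.
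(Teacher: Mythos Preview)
Your proposal is correct and matches the paper's own proof essentially verbatim: the paper simply says to apply \Cref{lemma:stochastic-gradient-descent-grad-norm-bound-abstract-version} recursively and use the tower rule, which is precisely the induction-plus-conditional-expectation argument you wrote out.
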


\begin{proof}

Apply \cref{lemma:stochastic-gradient-descent-grad-norm-bound-abstract-version} recursively and use the tower rule (law of total expectation).

\end{proof}

\begin{lemma}
\label{lemma:stochastic-gradient-descent-deterministic-sequence-difference-bound}
Let $f \colon \reals^m \to \reals$ be $L$-smooth and let $x_0,\dots,x_t$ be a sequence generated by stochastic gradient descent
\[
    x_{s+1} = x_s - \gamma \nabla f_{\xi_s} (x_s) 
\]
where the stochastic gradient oracle satisfies~\eqref{eqn:abstractized-BV-assumptions}.
Let $\hat{x}_0,\dots,\hat{x}_t$ be generated via \emph{deterministic} gradient descent
\[
    \hat{x}_{s+1} = \hat{x}_s - \gamma \nabla f(\hat{x}_s)
\]
where $\hat{x}_0 = x_0$. Then, provided that $0 < \gamma \le \frac{1}{L(t-1)}$, we have
\begin{align*}
    \norm{x_t - \hat{x}_t} \le 3\gamma \sum_{s=0}^{t-1}  \norm{\nabla f_{\xi_{s}} (x_{s}) - \nabla f(x_{s})} .
\end{align*}
\end{lemma}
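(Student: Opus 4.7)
The plan is to proceed by writing out the single-step recursion for the difference $x_{s+1} - \hat x_{s+1}$, which only involves a deterministic gradient descent term (controlled by smoothness) plus an explicit stochastic noise term, and then to unroll this recursion and use the step-size assumption to bound the resulting geometric factor by a constant.

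First, I would rewrite the single-step update as
\begin{align*}
    x_{s+1} - \hat x_{s+1} = (x_s - \hat x_s) - \gamma \bigl(\nabla f(x_s) - \nabla f(\hat x_s)\bigr) - \gamma \bigl(\nabla f_{\xi_s}(x_s) - \nabla f(x_s)\bigr).
\end{align*}
Applying the triangle inequality and the $L$-smoothness of $f$ (so that $\|\nabla f(x_s)-\nabla f(\hat x_s)\| \le L\|x_s - \hat x_s\|$) yields the recursion
\begin{align*}
    \norm{x_{s+1} - \hat x_{s+1}} \le (1 + \gamma L)\norm{x_s - \hat x_s} + \gamma \norm{\nabla f_{\xi_s}(x_s) - \nabla f(x_s)}.
\end{align*}

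Next, since $x_0 = \hat x_0$, I would unroll this inequality to obtain
\begin{align*}
    \norm{x_t - \hat x_t} \le \gamma \sum_{s=0}^{t-1} (1 + \gamma L)^{t-1-s} \norm{\nabla f_{\xi_s}(x_s) - \nabla f(x_s)}.
\end{align*}
Using the step-size assumption $\gamma L \le \tfrac{1}{t-1}$, we have $(1 + \gamma L)^{t-1-s} \le (1+\gamma L)^{t-1} \le \bigl(1 + \tfrac{1}{t-1}\bigr)^{t-1} \le e < 3$, and the conclusion follows.

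I do not expect any genuine obstacle: the argument is a direct one-step expansion, one application of smoothness, and a standard $(1+1/m)^m \le e$ bound. The only tiny point worth noting is the edge case $t=1$, where the step-size condition is vacuous and the bound reduces to $\|x_1-\hat x_1\| = \gamma\|\nabla f_{\xi_0}(x_0)-\nabla f(x_0)\|$, which is trivially within the claimed bound.
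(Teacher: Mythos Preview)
Your proposal is correct and follows essentially the same approach as the paper's proof: the same single-step decomposition, the same application of $L$-smoothness to get the $(1+\gamma L)$ recursion, the same unrolling, and the same bound $(1+\tfrac{1}{t-1})^{t-1}\le e<3$, with the $t=1$ case handled separately.
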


\paragraph{Remark.}
This result only assumes $L$-smoothness of $f$ (which is $L$-Lipschitz continuity of $\nabla f$) and does not require convexity.

\begin{proof}
When $t=1$, we have $\norm{x_t - \hat{x}_t} = \gamma\norm{\nabla f_{\xi_0}(x_0) - \nabla f(x_0)}$ as $x_0 = \hat{x}_0$. 

Now assume $t>1$.
Observe that 
\begin{align*}
    x_t - \hat{x}_t & = (x_{t-1} - \hat{x}_{t-1}) - \gamma \left( \nabla f_{\xi_{t-1}} (x_{t-1}) - \nabla f(\hat{x}_{t-1}) \right) \\
    & = (x_{t-1} - \hat{x}_{t-1}) - \gamma \left( \nabla f_{\xi_{t-1}} (x_{t-1}) - \nabla f(x_{t-1}) \right) - \gamma \left( \nabla f(x_{t-1}) - \nabla f(\hat{x}_{t-1}) \right)
\end{align*}
and therefore,
\begin{align*}
    \norm{x_t - \hat{x}_t} & \le \norm{x_{t-1} - \hat{x}_{t-1}} + \gamma \norm{\nabla f_{\xi_{t-1}} (x_{t-1}) - \nabla f(x_{t-1})} + \gamma \norm{\nabla f(x_{t-1}) - \nabla f(\hat{x}_{t-1})} \\
    & \le (1 + \gamma L) \norm{x_{t-1} - \hat{x}_{t-1}} + \gamma \norm{\nabla f_{\xi_{t-1}} (x_{t-1}) - \nabla f(x_{t-1})} 
\end{align*}
where the last inequality uses the $L$-smoothness assumption.
Now unrolling the recursion and using the fact $\norm{x_0 - \hat{x}_0} = 0$ we obtain
\begin{align*}
    \norm{x_t - \hat{x}_t} & \le \sum_{s=0}^{t-1} \gamma (1+\gamma L)^{t-s-1} \norm{\nabla f_{\xi_{s}} (x_{s}) - \nabla f(x_{s})} \\
    & \le \gamma \left( 1 + \frac{1}{t-1} \right)^{t-1} \sum_{s=0}^{t-1}  \norm{\nabla f_{\xi_{s}} (x_{s}) - \nabla f(x_{s})} \\
    & \le 3\gamma \sum_{s=0}^{t-1}  \norm{\nabla f_{\xi_{s}} (x_{s}) - \nabla f(x_{s})} .
\end{align*}

\end{proof}

\begin{lemma}
\label{lemma:stochastic-gradient-descent-inner-product-bound}
Under the assumptions of \cref{lemma:stochastic-gradient-descent-deterministic-sequence-difference-bound}, we have
\begin{align*}
    \mathbb{E} \left[ \inprod{\nabla f_{\xi_0}(x_0) - \nabla f(x_0)}{\nabla f(x_t)} \right] \le 3t\gamma L \rho^2 .
\end{align*}
\end{lemma}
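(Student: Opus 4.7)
The plan is to exploit the unbiasedness of the stochastic oracle $\nabla f_{\xi_0}(x_0)$ by introducing the deterministic gradient descent sequence $\hat{x}_0 = x_0, \hat{x}_1, \dots, \hat{x}_t$ from the preceding lemma as a reference. Since this reference sequence depends only on $x_0$ and $f$, the vector $\nabla f(\hat{x}_t)$ is non-random with respect to $\xi_0, \dots, \xi_{t-1}$, and unbiasedness gives
\[
    \mathbb{E}\left[\inprod{\nabla f_{\xi_0}(x_0) - \nabla f(x_0)}{\nabla f(\hat{x}_t)}\right] = \inprod{\mathbb{E}\left[\nabla f_{\xi_0}(x_0) - \nabla f(x_0)\right]}{\nabla f(\hat{x}_t)} = 0.
\]
Subtracting this zero quantity from the target expectation introduces the difference $\nabla f(x_t) - \nabla f(\hat{x}_t)$, which is exactly what the previous lemma controls.

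Next I would apply Cauchy--Schwarz together with $L$-smoothness to obtain
\[
    \mathbb{E}\left[\inprod{\nabla f_{\xi_0}(x_0) - \nabla f(x_0)}{\nabla f(x_t) - \nabla f(\hat{x}_t)}\right] \le L\, \mathbb{E}\left[\norm{\nabla f_{\xi_0}(x_0) - \nabla f(x_0)} \cdot \norm{x_t - \hat{x}_t}\right].
\]
Plugging in the bound $\norm{x_t - \hat{x}_t} \le 3\gamma \sum_{s=0}^{t-1} \norm{\nabla f_{\xi_s}(x_s) - \nabla f(x_s)}$ from \Cref{lemma:stochastic-gradient-descent-deterministic-sequence-difference-bound}, the right-hand side becomes
\[
    3\gamma L \sum_{s=0}^{t-1} \mathbb{E}\left[\norm{\nabla f_{\xi_0}(x_0) - \nabla f(x_0)} \cdot \norm{\nabla f_{\xi_s}(x_s) - \nabla f(x_s)}\right].
\]

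To finish, I would bound each summand by $\rho^2$. For $s=0$ the expectation is just $\mathbb{E}[\sqnorm{\nabla f_{\xi_0}(x_0) - \nabla f(x_0)}] \le \rho^2$ by \hyperref[assumption:noise]{(BV)}. For $s \ge 1$, apply the scalar Cauchy--Schwarz in expectation followed by the tower rule: $\mathbb{E}[\sqnorm{\nabla f_{\xi_s}(x_s) - \nabla f(x_s)}] = \mathbb{E}[\mathbb{E}[\sqnorm{\nabla f_{\xi_s}(x_s) - \nabla f(x_s)} \mid x_s]] \le \rho^2$, so the product is again at most $\rho^2$. Summing over $s=0,\dots,t-1$ yields $3t\gamma L \rho^2$.

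The main (mild) obstacle is justifying that the cross-term with $s=0$ can be controlled even though the two factors involve the same randomness $\xi_0$; this is handled cleanly because the worst-case bound $\mathbb{E}[\|\cdot\|^2] \le \rho^2$ is exactly $\rho^2$, matching the Cauchy--Schwarz bound that works for $s\geq 1$. The underlying conceptual point---that the unbiasedness plus comparison to a deterministic trajectory converts a potentially $O(1)$ inner product into one of size $O(\gamma \rho^2)$ per step---is the key mechanism, and everything else is bookkeeping.
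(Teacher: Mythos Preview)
Your proposal is correct and follows essentially the same route as the paper: subtract the zero term $\mathbb{E}[\langle \nabla f_{\xi_0}(x_0)-\nabla f(x_0),\nabla f(\hat{x}_t)\rangle]$, apply Cauchy--Schwarz and $L$-smoothness, and invoke the bound on $\norm{x_t-\hat{x}_t}$ from the preceding lemma. The only cosmetic difference is that the paper bounds each product term uniformly via Young's inequality $ab \le \tfrac{a^2+b^2}{2}$ rather than splitting into the cases $s=0$ and $s\ge 1$, but both routes give $\rho^2$ per summand and hence the same final bound.
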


\begin{proof}
Observe that because $\hat{x}_t$ as defined in \cref{lemma:stochastic-gradient-descent-deterministic-sequence-difference-bound} is a non-random quantity and 
\[
    \mathbb{E}\left[ \nabla f_{\xi_0} (x_0) - \nabla f(x_0) \right] = 0 ,
\]
we have
\[
    \mathbb{E} \left[ \inprod{\nabla f_{\xi_0} (x_0) - \nabla f(x_0)}{\nabla f(\hat{x}_t)} \right] = 0.
\]
Using this, we can proceed as in the following to obtain the desired bound:
\begin{align*}
    & \mathbb{E} \left[ \inprod{\nabla f_{\xi_0}(x_0) - \nabla f(x_0)}{\nabla f(x_t)} \right] \\
    & = \mathbb{E} \left[ \inprod{\nabla f_{\xi_0}(x_0) - \nabla f(x_0)}{\nabla f(x_t) - \nabla f(\hat{x}_t)} \right] \\
    & \le \mathbb{E} \left[ \norm{\nabla f_{\xi_0}(x_0) - \nabla f(x_0)} \norm{\nabla f(x_t) - \nabla f(\hat{x}_t)} \right] \\
    & \le \mathbb{E} \left[ \norm{\nabla f_{\xi_0}(x_0) - \nabla f(x_0)} L \norm{x_t - \hat{x}_t} \right] \\
    & \le 3\gamma L \, \mathbb{E} \left[ \norm{\nabla f_{\xi_0}(x_0) - \nabla f(x_0)} \sum_{s=0}^{t-1}  \norm{\nabla f_{\xi_{s}} (x_{s}) - \nabla f(x_{s})} \right] \\
    & \le 3\gamma L \, \mathbb{E}  \left[ \sum_{s=0}^{t-1} \left( \frac{\sqnorm{\nabla f_{\xi_0}(x_0) - \nabla f(x_0)}}{2} + \frac{\sqnorm{\nabla f_{\xi_{s}} (x_{s}) - \nabla f(x_{s})}}{2} \right) \right] \\
    & \le 3t \gamma L \rho^2 .
\end{align*}

\end{proof}

\begin{lemma}
\label{lemma:stochastic-gradient-descent-dist-to-initial-bound}
Under the assumptions of \cref{lemma:stochastic-gradient-descent-deterministic-sequence-difference-bound}, we have
\begin{align*}
    \mathbb{E} \left[ \sqnorm{x_0 - x_t} \right] 
    & \le \gamma^2 \mathbb{E} \left[ \sqnorm{\sum_{s=0}^{t-1} \nabla f(x_s)} \right] + \gamma^2 t \rho^2 + (t-1)t(t+1) \gamma^3 L \rho^2 .
\end{align*}
\end{lemma}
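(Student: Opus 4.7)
The plan is to expand $x_0 - x_t = \gamma \sum_{s=0}^{t-1} \nabla f_{\xi_s}(x_s)$ by unrolling the SGD recursion, decompose each stochastic gradient as $\nabla f_{\xi_s}(x_s) = \nabla f(x_s) + \delta_s$ with $\delta_s := \nabla f_{\xi_s}(x_s) - \nabla f(x_s)$, and then expand $\sqnorm{x_0 - x_t}$. Taking expectation produces three contributions: the deterministic part $\gamma^2 \mathbb{E}\bigl[\sqnorm{\sum_s \nabla f(x_s)}\bigr]$, which is already the first term of the target bound; a pure-noise part $\gamma^2 \mathbb{E}\bigl[\sqnorm{\sum_s \delta_s}\bigr]$; and a cross term $2\gamma^2 \sum_{s,s'} \mathbb{E}\bigl[\langle \nabla f(x_s), \delta_{s'}\rangle\bigr]$.

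For the pure-noise part I would use the martingale difference structure: conditioning on $\xi_0,\dots,\xi_{s'-1}$ makes $\delta_{s'}$ zero-mean, so all off-diagonal terms in the expansion of $\sqnorm{\sum_s \delta_s}$ vanish by the tower rule, leaving $\sum_s \mathbb{E}[\sqnorm{\delta_s}] \le t \rho^2$ by \eqref{eqn:abstractized-BV-assumptions}. This yields exactly the $\gamma^2 t \rho^2$ term.

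The cross term is the main obstacle and splits into two cases. When $s \le s'$, $\nabla f(x_s)$ is measurable with respect to $\xi_0,\dots,\xi_{s'-1}$, so conditioning and applying unbiasedness gives $\mathbb{E}\bigl[\langle \nabla f(x_s), \delta_{s'}\rangle\bigr] = 0$. When $s > s'$, the quantity $\nabla f(x_s)$ genuinely depends on $\delta_{s'}$, so I would apply \Cref{lemma:stochastic-gradient-descent-inner-product-bound} to the shifted SGD subsequence that starts at $x_{s'}$ and runs for $s - s'$ steps: the hypothesis $\gamma \le \nicefrac{1}{L(t-1)}$ already implies the step-size bound required by that lemma for the shifted sequence, and after conditioning on $\xi_0,\dots,\xi_{s'-1}$ and invoking the tower rule we obtain $\mathbb{E}\bigl[\langle \nabla f(x_s), \delta_{s'}\rangle\bigr] \le 3(s-s')\gamma L \rho^2$.

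Finally, I would sum over $0 \le s' < s \le t-1$ using the combinatorial identity $\sum_{0 \le s' < s \le t-1} (s - s') = \sum_{d=1}^{t-1} d(t-d) = \tfrac{(t-1)t(t+1)}{6}$ (obtained by grouping pairs according to their gap $d = s - s'$), so the cross term contributes at most $2\gamma^2 \cdot 3\gamma L \rho^2 \cdot \tfrac{(t-1)t(t+1)}{6} = (t-1)t(t+1)\gamma^3 L\rho^2$, matching the last term of the claimed bound. The only genuine subtlety is correctly applying \Cref{lemma:stochastic-gradient-descent-inner-product-bound} to a non-initial subsequence via conditional expectation; the rest is expansion and bookkeeping.
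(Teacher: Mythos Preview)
Your proposal is correct. The paper reaches the same bound by induction on $t$: it peels off the last stochastic gradient, writing $\sqnorm{\sum_{s=0}^{t}\nabla f_{\xi_s}(x_s)}$ as $\sqnorm{\sum_{s=0}^{t-1}\nabla f_{\xi_s}(x_s)}$ plus the cross and square terms involving $\nabla f_{\xi_t}(x_t)$, replaces the latter by $\nabla f(x_t)$ via conditioning, and then bounds each $\mathbb{E}[\inprod{\nabla f(x_t)}{\nabla f_{\xi_s}(x_s)-\nabla f(x_s)}]$ for $s<t$ using the same \Cref{lemma:stochastic-gradient-descent-inner-product-bound} before applying the induction hypothesis. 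Your direct expansion decomposes every stochastic gradient simultaneously and handles all the problematic cross terms $\inprod{\nabla f(x_s)}{\delta_{s'}}$ with $s>s'$ in one sweep; the combinatorial identity $\sum_{0\le s'<s\le t-1}(s-s')=\tfrac{(t-1)t(t+1)}{6}$ in your argument is exactly what the paper obtains by summing its per-step increments $\sum_{s=0}^{t-1}6(t-s)=3t(t+1)$ over the induction. Both routes rely on the same key lemma and produce identical constants; your direct approach avoids the inductive bookkeeping, while the paper's inductive organization makes the recursive structure of the bound more visible.
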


\begin{proof}
In the case $t=1$, we have 
\[
    \mathbb{E} \left[ \sqnorm{x_0 - x_1} \right] = \gamma^2 \mathbb{E}_{\xi_0} \left[ \sqnorm{\nabla f_{\xi_0} (x_0)} \right] \le \gamma^2 \rho^2 + \gamma^2 \sqnorm{\nabla f(x_0)} ,
\]
which is the desired statement.
Now we use induction on $t$.
Suppose that the result holds for any initial point and $t$ steps of SGD.
Consider a sequence $x_0, \dots, x_{t+1}$ generated via SGD with initial point $x_0$ and step-size $\gamma > 0$.
Observe that
\begin{align}
    & \mathbb{E} \left[ \sqnorm{x_0 - x_{t+1}} \right] \nonumber \\
    & = \gamma^2 \mathbb{E} \left[ \sqnorm{\sum_{s=0}^{t} \nabla f_{\xi_s} (x_s)} \right] \nonumber \\
    & = \gamma^2 \mathbb{E} \left[ \sqnorm{\sum_{s=0}^{t-1} \nabla f_{\xi_s} (x_s)} + \mathbb{E}_{\xi_t} \left[ 2\inprod{\nabla f_{\xi_t}(x_t)}{\sum_{s=0}^{t-1} \nabla f_{\xi_s} (x_s)} +\sqnorm{\nabla f_{\xi_t}(x_t)}   \,\middle|\, x_t \right] \right] \nonumber \\
    & \le \mathbb{E} \left[ \sqnorm{x_0 - x_t} \right]  + \gamma^2 \mathbb{E} \left[  2\inprod{\nabla f(x_t)}{\sum_{s=0}^{t-1} \nabla f_{\xi_s} (x_s)} + \sqnorm{\nabla f(x_t)} + \rho^2 \right] \label{eqn:dist-to-initial-bound-recursion}
\end{align}
where the third line uses the tower rule.
Now observe that for $s=0,\dots,t-1$,
\begin{align*}
    \mathbb{E} \left[ \inprod{\nabla f(x_t)}{\nabla f_{\xi_s}(x_s)} \right] & = \mathbb{E} \left[ \inprod{\nabla f(x_t)}{\nabla f(x_s)} \right] + \mathbb{E} \left[ \inprod{\nabla f(x_t)}{\nabla f_{\xi_s}(x_s) - \nabla f(x_s)} \right] \\
    & = \mathbb{E} \left[ \inprod{\nabla f(x_t)}{\nabla f(x_s)} \right] + \mathbb{E} \left[ \mathbb{E} \left[ \inprod{\nabla f_{\xi_s}(x_s) - \nabla f(x_s)}{\nabla f(x_t)} \,\middle|\, x_s \right] \right] \\
    & \le \mathbb{E} \left[ \inprod{\nabla f(x_t)}{\nabla f(x_s)} \right] + 3(t-s) \gamma L \rho^2
\end{align*}
where the last inequality uses \cref{lemma:stochastic-gradient-descent-inner-product-bound} (with $x_s$ regarded as initial point of the stochastic gradient descent).
Now we apply this inequality and the induction hypothesis to \eqref{eqn:dist-to-initial-bound-recursion}:
\begin{align*}
    & \mathbb{E}\left[ \sqnorm{x_0 - x_{t+1}} \right] \\ & \le \gamma^2 \mathbb{E} \Bigg[ \sqnorm{\sum_{s=0}^{t-1} \nabla f(x_s)} + t \rho^2 + (t-1)t(t+1) \gamma L \rho^2 \\
    & \qquad \quad + \sum_{s=0}^{t-1} \left( 2\inprod{\nabla f(x_t)}{\nabla f(x_s)} + 6(t-s) \gamma L \rho^2 \right) + \sqnorm{\nabla f(x_t)} + \rho^2 \Bigg] \\
    & = \gamma^2 \left( t \rho^2 + (t-1)t(t+1) \gamma L \rho^2 + 3t(t+1) \gamma L \rho^2 + \rho^2 \right) \\
    & \quad + \gamma^2 \mathbb{E} \left[ \sqnorm{\sum_{s=0}^{t-1} \nabla f(x_s)} + 2\inprod{\sum_{s=0}^{t-1} \nabla f(x_s)}{\nabla f(x_t)} + \sqnorm{\nabla f(x_t)} \right] \\
    & = \gamma^2 (t+1) \rho^2 + t(t+1)(t+2) \gamma^3 L \rho^2 + \gamma^2 \mathbb{E} \left[ \sqnorm{\sum_{s=0}^{t} \nabla f(x_s)} \right] 
\end{align*}
where for the first equality we use $\sum_{s=0}^{t-1} 6(t-s) = 3t(t+1)$.
This completes the induction.

\end{proof}

\begin{lemma}
\label{lemma:stochastic-gradient-descent-iterate-difference-bound-abstract-version}
Let $f \colon \reals^m \to \reals$ be convex and $L$-smooth, and let $x_0 \in \reals^m$ be any initial point.
Let $x_1,\dots,x_t$ be generated by stochastic gradient descent 
\[
    x_{s+1} = x_s - \gamma \nabla f_{\xi_s} (x_s) 
\]
with $0 < \gamma \le \frac{1}{L} \min\left\{1, \frac{1}{t-1}\right\}$.
Then
\begin{align*}
    \mathbb{E} \left[ \sqnorm{x_0 - x_t} \right] \le \gamma^2 t^2 \sqnorm{\nabla f(x_0)} + \gamma^2 t (1 + 2(t-1)(t+1) \gamma L) \rho^2 .
\end{align*}
\end{lemma}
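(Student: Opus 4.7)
The idea is to combine the two preceding auxiliary lemmas: \Cref{lemma:stochastic-gradient-descent-dist-to-initial-bound} already reduces $\mathbb{E}[\sqnorm{x_0 - x_t}]$ to an expression involving $\mathbb{E}\bigl[\sqnorm{\sum_{s=0}^{t-1} \nabla f(x_s)}\bigr]$ plus explicit noise terms, while \Cref{lemma:stochastic-gradient-descent-grad-norm-bound-abstract-version-for-skipped-iterates} controls each per-iterate true-gradient norm via $\mathbb{E}[\sqnorm{\nabla f(x_s)}] \le \sqnorm{\nabla f(x_0)} + 2s\gamma L \rho^2$. So the remaining task is to bound the sum-of-true-gradients quantity and package everything together.

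First I would check that the step-size condition $\gamma \le \frac{1}{L}\min\{1,\tfrac{1}{t-1}\}$ suffices to invoke both preliminaries: $\gamma \le 1/L \le 2/L$ covers \Cref{lemma:stochastic-gradient-descent-grad-norm-bound-abstract-version-for-skipped-iterates}, and $\gamma \le \frac{1}{L(t-1)}$ is exactly what \Cref{lemma:stochastic-gradient-descent-dist-to-initial-bound} requires. Next, I would apply Jensen's inequality (equivalently Cauchy--Schwarz) in the form $\sqnorm{\sum_{s=0}^{t-1} \nabla f(x_s)} \le t\sum_{s=0}^{t-1} \sqnorm{\nabla f(x_s)}$, take expectations, and plug in the per-iterate bound; using $\sum_{s=0}^{t-1} s = t(t-1)/2$ yields
\[
    \mathbb{E}\!\left[\sqnorm{\sum_{s=0}^{t-1} \nabla f(x_s)}\right] \le t^2 \sqnorm{\nabla f(x_0)} + t^2(t-1)\gamma L \rho^2 .
\]

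Substituting this into \Cref{lemma:stochastic-gradient-descent-dist-to-initial-bound} and consolidating the two $\gamma^3 L \rho^2$ contributions via the identity $t^2(t-1) + (t-1)t(t+1) = (t-1)t(2t+1)$ gives a bound of the form $\gamma^2 t^2 \sqnorm{\nabla f(x_0)} + \gamma^2 t \rho^2 + \gamma^3 L \rho^2 (t-1)t(2t+1)$. The final step is the trivial inequality $2t+1 \le 2(t+1)$, which absorbs the last term into $2\gamma^3 L \rho^2 t(t-1)(t+1)$ and matches the stated right-hand side exactly. Since both main technical lemmas are already in hand, I do not foresee any real obstacle; the one mild subtlety is that I apply Jensen only to the \emph{true} gradients $\nabla f(x_s)$ rather than to the stochastic increments $\nabla f_{\xi_s}(x_s)$, which is legitimate because \Cref{lemma:stochastic-gradient-descent-dist-to-initial-bound} has already carried out the careful bookkeeping that isolates the stochastic noise into the explicit $\rho^2$ terms, thereby side-stepping the looseness warned about in the remark after \Cref{lemma:stochastic-gradient-descent-grad-norm-bound}.
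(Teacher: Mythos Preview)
Your proposal is correct and follows essentially the same approach as the paper: apply \Cref{lemma:stochastic-gradient-descent-dist-to-initial-bound}, then bound $\mathbb{E}\bigl[\sqnorm{\sum_{s=0}^{t-1}\nabla f(x_s)}\bigr]$ via Jensen and \Cref{lemma:stochastic-gradient-descent-grad-norm-bound-abstract-version-for-skipped-iterates}, and combine. The only cosmetic difference is that the paper relaxes $\sum_{s=0}^{t-1} 2s = t(t-1) \le (t-1)(t+1)$ immediately (so the two $\gamma^3 L\rho^2$ contributions each equal $(t-1)t(t+1)$ and simply double), whereas you keep the exact $t^2(t-1)$ and apply $2t+1 \le 2(t+1)$ at the end; both routes land on the same constant.
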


\begin{proof}
\Cref{lemma:stochastic-gradient-descent-dist-to-initial-bound} gives
\begin{align}
\label{eqn:abstract-SGD-iterate-difference-bound-before-Jensen}
    \mathbb{E} \left[ \sqnorm{x_0 - x_t} \right] \le \gamma^2 \mathbb{E} \left[ \sqnorm{\sum_{s=0}^{t-1} \nabla f(x_s)} \right] + \gamma^2 t \rho^2 + (t-1)t(t+1) \gamma^3 L \rho^2 .
\end{align}
Next, by Jensen's inequality and \cref{lemma:stochastic-gradient-descent-grad-norm-bound-abstract-version-for-skipped-iterates},
\begin{align*}
    \mathbb{E} \left[ \sqnorm{\sum_{s=0}^{t-1} \nabla f(x_s)} \right] & \le t \sum_{s=0}^{t-1} \mathbb{E} \left[ \sqnorm{\nabla f(x_s)} \right] \\
    & \le t \sum_{s=0}^{t-1} \left( \sqnorm{\nabla f(x_0)} + 2s \gamma L \rho^2 \right) \\
    & \le t^2 \sqnorm{\nabla f(x_0)} + (t-1)t(t+1) \gamma L \rho^2
\end{align*}
where the last inequality uses $\sum_{s=0}^{t-1} 2s = t(t-1) \le (t-1)(t+1)$.
Applying the above inequality to~\eqref{eqn:abstract-SGD-iterate-difference-bound-before-Jensen} we obtain the desired result.
\end{proof}

\subsection{Proofs of Lemmas~\ref{lemma:stochastic-gradient-descent-grad-norm-bound} and \ref{lemma:local-GDA-iterate-difference-bound-stochastic-case}}

\begin{proof}[Proof of \cref{lemma:stochastic-gradient-descent-grad-norm-bound}]

Observe that given $\vx_{\tau p}$, the sequence $x_{\tau p}^i, \dots, x_{\tau (p+1)}^i$ is a sequence generated via stochastic gradient descent
\[
    x_{j+1}^i = x_j^i - \gamma \nabla f_{i,\xi_j^i} (x_j^i; x_{\tau p}^{-i})
\]
for the $L_i$-smooth convex function $f_i(\cdot; x_{\tau p}^{-i})$, with $x_{\tau p}^i$ as initial point, using the stochastic oracle $\nabla f_{i,\xi^i}(\cdot; x_{\tau p}^{-i})$ satisfying~\hyperref[assumption:noise]{\textbf{\emph{(BV)}}} (unbiased estimator of $\nabla f_i(\cdot; x_{\tau p}^{-i})$ with variance at most $\sigma_i^2$).
Therefore, we can apply \cref{lemma:stochastic-gradient-descent-grad-norm-bound-abstract-version-for-skipped-iterates} with 
\begin{align*}
    f(\cdot) = f_i (\cdot; x_{\tau p}^{-i}), \qquad \rho^2 = \sigma_i^2 , \qquad x_0 = x_{\tau p}^i , \qquad x_s = x_j^i , \qquad L = L_i
\end{align*}
and this immediately proves the desired statement.
(Note that $s$ is replaced with $j-\tau p$ because $x_j^i$ is obtained by $j-\tau p$ steps of SGD from $x_{\tau p}^i$.)

\end{proof}

\begin{proof}[Proof of \cref{lemma:local-GDA-iterate-difference-bound-stochastic-case}]

This is a direct consequence of \cref{lemma:stochastic-gradient-descent-iterate-difference-bound-abstract-version} with same choice of $f, \rho^2, x_0$ and $L$ as in the proof of \cref{lemma:stochastic-gradient-descent-grad-norm-bound} and $x_s = x_{\tau p + t}^i$.

\end{proof}

\subsection{Remaining details in proof of \cref{theorem:stochastic-local-GDA}}

Note that the step-size condition of \cref{lemma:local-GDA-iterate-difference-bound-stochastic-case} is satisfied by our step-size selection, as $\gamma < \frac{2}{\ell\tau + 2(\tau - 1) L_{\mathrm{max}} \sqrt{\kappa}} \le \frac{1}{L_{\mathrm{max}}(\tau-1)}$ (because $\kappa \ge 1$).
Now combine Lemmas~\ref{lemma:proof-outline-basic-bound} and \ref{lemma:local-GDA-iterate-difference-bound-stochastic-case} to obtain 
\begin{align}
    & \mathbb{E} \left[\sqnorm{\vx_{\tau (p+1)} - \vx_\star} \,\middle|\, \vx_{\tau p} \right] \nonumber \\
    & \le (1 + \alpha\gamma (\tau - 1)) \sqnorm{\vx_{\tau p} - \vx_\star} - 2\gamma \tau \inprod{\vx_{\tau p} - \vx_\star}{\opF(\vx_{\tau p})} 
    \nonumber \\
    & \quad + \sum_{j=\tau p+1}^{\tau(p+1)-1} \sum_{i=1}^n \frac{\gamma L_i^2}{\alpha} \left( \gamma^2 (j-\tau p)^2 \sqnorm{\nabla f(x_{\tau p}^i; x_{\tau p}^{-i})} + \gamma^2 (j-\tau p) \left(1 + 2(j-\tau p-1)(j-\tau p+1)\gamma L_i \right) \sigma_i^2 \right) \nonumber \\
    & \quad + \sum_{i=1}^n \left( \gamma^2 \tau^2 \sqnorm{\nabla f(x_{\tau p}^i; x_{\tau p}^{-i})} + \gamma^2 \tau \left(1 + 2(\tau-1)(\tau+1)\gamma L_i \right) \sigma_i^2 \right) \nonumber \\
    \begin{split}
    & \le (1 + \alpha\gamma (\tau-1)) \sqnorm{\vx_{\tau p} - \vx_\star} - 2\gamma \tau \inprod{\vx_{\tau p} - \vx_\star}{\opF(\vx_{\tau p})} + \left( \gamma^2 \tau^2 + \frac{\gamma^3 L_{\mathrm{max}}^2 \tau^2 (\tau-1)}{3\alpha} \right) \sqnorm{\opF(\vx_{\tau p})} \\
    & \quad + \gamma^2 
    \tau \left( 1 + (\tau-1) \gamma L_\mathrm{max} \left( 2(\tau+1) + \frac{L_\mathrm{max}}{2\alpha} + \frac{\gamma L_\mathrm{max}^2}{2\alpha} (\tau+1)^2 \right) \right) \sigma^2 
    \end{split}
    \label{eqn:stochastic-rate-proof-bound-raw-form}
\end{align}
where for the last inequality, we replace all occurrences of $L_i$'s by $L_{\mathrm{max}} = \max\{L_1, \dots, L_n\}$ and use the identities
\begin{align*}
    \sigma^2 = \sum_{i=1}^n \sigma_i^2 , \quad \sqnorm{\opF(\vx_{\tau p})} = \sum_{i=1}^n \sqnorm{\nabla f_i(x_{\tau p}^i; x_{\tau p}^{-i})}
\end{align*}
to eliminate the summations $\sum_{i=1}^n$ and use the following elementary summation results:
\begin{align*}
    \sum_{j=\tau p+1}^{\tau(p+1)-1} (j-\tau p)^2 & = \frac{(\tau-1) \tau (2\tau-1)}{6} \le \frac{(\tau-1) \tau^2}{3} \\
    \sum_{j=\tau p+1}^{\tau(p+1)-1} (j-\tau p) & = \frac{(\tau-1)\tau}{2} 
\end{align*}
and
\begin{align*}
    \sum_{j=\tau p+1}^{\tau(p+1)-1} (j-\tau p-1)(j-\tau p)(j-\tau p+1) & = \frac{(\tau-2)(\tau-1)\tau(\tau+1)}{2} \le \frac{(\tau-1)\tau(\tau+1)^2}{2} .
\end{align*}
Now in~\eqref{eqn:stochastic-rate-proof-bound-raw-form}, we use the assumption~\hyperref[assumption:star-cocoercivity]{\textbf{\emph{(SCO)}}} to bound
\begin{align}
    & -2\gamma \tau \inprod{\vx_{\tau p} - \vx_\star}{\opF(\vx_{\tau p})} + \left( \gamma^2 \tau^2 + \frac{\gamma^3 L_{\mathrm{max}}^2 \tau^2 (\tau-1)}{3\alpha} \right) \sqnorm{\opF(\vx_{\tau p})} \nonumber \\
    & \le - \left( 2\gamma \tau - \ell \left( \gamma^2 \tau^2 + \frac{\gamma^3 L_{\mathrm{max}}^2 \tau^2 (\tau-1)}{3\alpha} \right) \right) \inprod{\vx_{\tau p} - \vx_\star}{\opF(\vx_{\tau p})} \nonumber \\
    & = -\gamma\tau \left( 2 - \gamma\ell \tau - \frac{\gamma^2 \ell L_{\mathrm{max}}^2 \tau(\tau-1)}{3\alpha} \right) \inprod{\vx_{\tau p} - \vx_\star}{\opF(\vx_{\tau p})} .
    \label{eqn:stochastic-rate-proof-inner-product-bound}
\end{align}
Provided that
\begin{align}
    2 - \gamma\ell \tau - \frac{\gamma^2 \ell L_{\mathrm{max}}^2 \tau(\tau-1)}{3\alpha} \ge 0 ,
    \label{eqn:stochastic-rate-proof-inner-product-factor-positivity}
\end{align}
we can again upper bound~\eqref{eqn:stochastic-rate-proof-inner-product-bound} using the assumption~\hyperref[assumption:quasi-strong-monotonicity]{\textbf{\emph{(QSM)}}}:
\begin{align*}
    & -\gamma\tau \left( 2 - \gamma\ell \tau - \frac{\gamma^2 \ell L_{\mathrm{max}}^2 \tau(\tau-1)}{3\alpha} \right) \inprod{\vx_{\tau p} - \vx_\star}{\opF(\vx_{\tau p})} \\
    & \le -\gamma \tau \left( 2 - \gamma\ell \tau - \frac{\gamma^2 \ell L_{\mathrm{max}}^2 \tau(\tau-1)}{3\alpha} \right) \mu \sqnorm{\vx_{\tau p} - \vx_\star} .
\end{align*}
We plug this into~\eqref{eqn:stochastic-rate-proof-bound-raw-form} and rearrange the terms to obtain
\begin{align}
    & \mathbb{E} \left[\sqnorm{\vx_{\tau (p+1)} - \vx_\star} \,\middle|\, \vx_{\tau p} \right] \nonumber \\
    \begin{split}
    & \le \left( 1 + \alpha\gamma (\tau - 1) - \gamma\tau \left( 2 - \gamma\tau\ell - \frac{\gamma^2 \ell L_{\mathrm{max}}^2 \tau(\tau-1)}{3\alpha} \right) \mu \right) \sqnorm{\vx_{\tau p} - \vx_\star}  \\
    & \quad + \gamma^2 \tau \left( 1 + (\tau-1) \gamma L_\mathrm{max} \left( 2(\tau+1) + \frac{L_\mathrm{max}}{2\alpha} + \frac{\gamma L_\mathrm{max}^2}{2\alpha} (\tau+1)^2 \right) \right) \sigma^2 . 
    \end{split}
    \label{eqn:stochastic-rate-proof-final-bound-with-alpha}
\end{align}
Now, we optimize the coefficient of the $\sqnorm{\vx_{\tau p} - \vx_\star}$ term in~\eqref{eqn:stochastic-rate-proof-final-bound-with-alpha} by taking
\begin{align*}
    \alpha = \argmin_{\alpha>0} \alpha \gamma(\tau-1) + \frac{\gamma^3 \ell L_{\mathrm{max}}^2 \tau^2 (\tau-1) \mu}{3\alpha} = \gamma \tau L_{\mathrm{max}} \sqrt{\frac{\ell\mu}{3}} .
\end{align*}
With this choice of $\alpha$, the bound~\eqref{eqn:stochastic-rate-proof-final-bound-with-alpha} becomes
\begin{align}
    & \mathbb{E} \left[\sqnorm{\vx_{\tau (p+1)} - \vx_\star} \,\middle|\, \vx_{\tau p} \right] \nonumber \\
    & \le \left( 1 - \gamma\tau\mu \left( 2-\gamma\ell\tau - 2 (\tau - 1) \gamma L_{\mathrm{max}} \sqrt{\frac{\ell}{3\mu}} \right)  \right) \sqnorm{\vx_{\tau p} - \vx_\star} \nonumber \\
    & \quad + \gamma^2 \tau \left( 1 + (\tau-1) \gamma L_\mathrm{max} \left( 2(\tau+1) + \frac{1}{2\gamma\tau\sqrt{\ell\mu/3}} + \frac{L_\mathrm{max} (\tau+1)^2}{2\tau\sqrt{\ell\mu/3}} \right) \right) \sigma^2 \nonumber \\
    & \le \left( 1 - \gamma\tau\mu \zeta \right) \sqnorm{\vx_{\tau p} - \vx_\star} + \gamma^2 \tau \sigma^2 \left( 1 + (\tau - 1) \left( 4\gamma\tau L_{\mathrm{max}} + \frac{L_{\mathrm{max}}}{2\tau\sqrt{\ell\mu/3}} + \frac{\gamma\tau L_{\mathrm{max}}^2}{\sqrt{\ell\mu/3}} \right) \right) \label{eqn:stochastic-rate-proof-final-recursion}
\end{align}
where for the last inequality, we use $\tau + 1 \le 2\tau$ and make the substitution
\begin{align*}
    \zeta = 2 - \gamma\ell\tau - 2 (\tau - 1) \gamma L_{\mathrm{max}} \sqrt{\frac{\ell}{3\mu}} = 2 - \gamma\ell\tau - 2 (\tau - 1) \gamma L_{\mathrm{max}} \sqrt{\kappa/3} .
\end{align*}
Note that with our choice $\alpha = \gamma\tau L_{\mathrm{max}} \sqrt{\frac{\ell\mu}{3}}$ and $0 < \gamma < \frac{2}{\ell\tau + 2(\tau - 1) L_{\mathrm{max}} \sqrt{\kappa}}$, the condition~\eqref{eqn:stochastic-rate-proof-inner-product-factor-positivity} is satisfied because
\begin{align*}
    2 - \gamma\ell \tau - \frac{\gamma^2 \ell L_{\mathrm{max}}^2 \tau(\tau-1)}{3\alpha} & \ge 2 - \gamma\ell\tau - \frac{\gamma^2 \ell L_{\mathrm{max}}^2 \tau(\tau-1)}{3\alpha} \\
    & = 2 - \gamma\ell\tau - (\tau-1) \gamma L_{\mathrm{max}} \sqrt{\frac{\ell}{3\mu}} \\
    & \ge 2 - \gamma \left( \ell\tau + (\tau-1) L_{\mathrm{max}} \sqrt{\kappa} \right) > 0 .
\end{align*}
Finally, unrolling the recursion~\eqref{eqn:stochastic-rate-proof-final-recursion} using the following simple lemma, with $a_p = \mathbb{E}\left[ \sqnorm{\vx_{\tau p} - \vx_\star} \right]$, $A = \tau\mu\zeta$ and 
\[
    B = \tau \sigma^2 \left( 1 + (\tau - 1) \left( 4\gamma\tau L_{\mathrm{max}} + \frac{L_{\mathrm{max}}}{2\tau\sqrt{\ell\mu/3}} + \frac{\gamma\tau L_{\mathrm{max}}^2}{\sqrt{\ell\mu/3}} \right) \right)
\]
gives the desired rate.
(Note that $\gamma A = \gamma\tau\mu\zeta \le \gamma\tau\mu (2 - \gamma\ell\tau) \le \gamma\ell\tau (2 - \gamma\ell\tau) \le 1$.)

\begin{lemma}
\label{lemma:recursion-unrolling}
Let $\gamma, A, B > 0$ with $\gamma A \le 1$. If a sequence $a_0, \dots, a_R \in \reals$ satisfies 
\[
    a_{p+1} \le (1-\gamma A) a_p + \gamma^2 B 
\]
for $p=0,\dots,R-1$, then $a_R \le (1-\gamma A)^R a_0 + \frac{\gamma B}{A}$.
\end{lemma}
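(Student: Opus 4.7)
The plan is a direct unrolling of the given one-step recursion followed by a geometric-series bound. First, I would iterate the inequality $a_{p+1}\le (1-\gamma A)a_p+\gamma^2 B$ from $p=R-1$ down to $p=0$ (an immediate induction on $R$), producing
\[
a_R\le (1-\gamma A)^R a_0+\gamma^2 B\sum_{k=0}^{R-1}(1-\gamma A)^k.
\]
This is the core step and requires nothing beyond substituting the bound into itself $R$ times and collecting the geometric coefficients $1,(1-\gamma A),(1-\gamma A)^2,\dots,(1-\gamma A)^{R-1}$ in front of $\gamma^2 B$.

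Next, I would control the finite geometric sum using the hypothesis $0<\gamma A\le 1$, which ensures $0\le 1-\gamma A<1$ (the edge case $\gamma A=1$ is handled separately, where the sum equals $1$ and $1/(\gamma A)=1$, so the bound below is still tight). For $0\le 1-\gamma A<1$ we have
\[
\sum_{k=0}^{R-1}(1-\gamma A)^k \;\le\; \sum_{k=0}^{\infty}(1-\gamma A)^k \;=\; \frac{1}{\gamma A}.
\]
Plugging this into the previous display gives
\[
a_R\le (1-\gamma A)^R a_0+\gamma^2 B\cdot\frac{1}{\gamma A}=(1-\gamma A)^R a_0+\frac{\gamma B}{A},
\]
which is exactly the claimed bound.

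There is essentially no obstacle here; the only subtlety is the boundary case $\gamma A=1$, which I would mention in one line to confirm that the geometric-series bound $\sum_{k=0}^{R-1}(1-\gamma A)^k \le 1/(\gamma A)$ remains valid (both sides evaluate to $1$). Everything else is a routine telescoping argument, and no additional assumptions on the sign of $a_p$ are needed because the recursion propagates forward in a purely algebraic way.
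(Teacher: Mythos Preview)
Your proof is correct and follows essentially the same approach as the paper: unroll the recursion to get $a_R\le (1-\gamma A)^R a_0+\gamma^2 B\sum_{k=0}^{R-1}(1-\gamma A)^k$, then bound the finite geometric sum by the infinite series $1/(\gamma A)$, with the boundary case $\gamma A=1$ treated separately. The paper's proof is identical in structure and brevity.
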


\begin{proof}[Proof of \cref{lemma:recursion-unrolling}]
As there is nothing to prove if $\gamma A = 1$, suppose $\gamma A < 1$.
Recursively applying the given inequality we have
\begin{align*}
    a_R \le (1-\gamma A) a_{R-1} + \gamma^2 B \le \cdots \le (1-\gamma A)^R a_0 + \gamma^2 B \sum_{p=0}^{R-1} (1-\gamma A)^p .
\end{align*}
Now apply the bound $\sum_{p=0}^{R-1} (1-\gamma A)^p \le \sum_{p=0}^\infty (1-\gamma A)^p = \frac{1}{1 - (1 - \gamma A)} = \frac{1}{\gamma A}$ to the above inequality.
\end{proof}

\subsection{Proof of \cref{corollary:stochastic-plog-T-bound}}

First, because $\eta > \kappa\tau$, we have
\begin{align*}
    \gamma < \frac{1}{\mu \kappa \tau \left( 1 + \frac{2L_{\mathrm{max}}}{\sqrt{\ell\mu}} \right)} = \frac{1}{\ell\tau \left( 1 + \frac{2L_{\mathrm{max}}}{\sqrt{\ell\mu}} \right)} \le \frac{1}{\ell\tau + 2(\tau - 1) L_{\mathrm{max}} \sqrt{\frac{\ell}{\mu}}} = \frac{1}{\ell\tau + 2(\tau - 1) L_{\mathrm{max}} \sqrt{\kappa}} .
\end{align*}
Hence we can apply \cref{theorem:stochastic-local-GDA}.
Now observe that $\zeta > 2 - \gamma \left( \ell\tau + 2(\tau - 1) L_{\mathrm{max}} \sqrt{\kappa} \right) > 1$, and $(1-u)^R \le e^{-uR}$ for $u < 1$, so
\begin{align*}
    (1 - \gamma\tau\mu\zeta)^R \le e^{-\gamma\mu\zeta\tau R} \le e^{-\gamma\mu T} = e ^{-2\log\eta} = \frac{1}{\eta^2} = \frac{4 (\log\eta)^2 (1+2q)^2}{T^2} = \Tilde{\cO} \left( \frac{(1+q)^2}{T^2} \right) 
\end{align*}
where we use $T = 2(1+2q)\eta \log\eta$
and remove the factor $\log \eta < \log T$ within the $\Tilde{\cO}$ notation.
Next, for the terms proportional to $\sigma^2$, we have
\begin{align*}
    & \left( 1 + (\tau - 1) \left( 4\gamma\tau L_{\mathrm{max}} + \frac{L_{\mathrm{max}}}{2\tau\sqrt{\ell\mu/3}} + \frac{\gamma\tau L_{\mathrm{max}}^2}{\sqrt{\ell\mu/3}} \right) \right) \frac{\gamma\sigma^2}{\mu\zeta} \\
    & \le \frac{\gamma\sigma^2}{\mu} \left( 1 + \tau \left( 4\gamma\tau L_{\mathrm{max}} + \frac{\sqrt{3} q}{2\tau} + \sqrt{3} \gamma\tau L_{\mathrm{max}} q \right) \right) \\
    & \le \frac{\gamma\sigma^2}{\mu} \left( 1 + \frac{\sqrt{3}q}{2} \right) + \frac{\gamma^2 \tau^2 L_{\mathrm{max}} \sigma^2}{\mu} (4 + \sqrt{3}q) \\
    & = \frac{\sigma^2 (1+\sqrt{3}q/2)}{\mu^2 \eta (1+2q)} + \frac{\tau^2 L_{\mathrm{max}} \sigma^2 (4+\sqrt{3}q)}{\mu^3 \eta^2 (1+2q)^2} \\
    & = \Tilde{\cO} \left( \frac{(1+q) \sigma^2}{\mu^2 T} + \frac{(1+q) \tau^2 L_{\mathrm{max}} \sigma^2}{\mu^3 T^2} \right) .
\end{align*}
Combining these with \cref{theorem:stochastic-local-GDA} we arrive at the desired conclusion.

\subsection{Proof of \cref{theorem:stochastic-plog-diminishing-stepsize}}

Note that we use constant step-size $\gamma_k \equiv \gamma_{\tau p}$ within each communication round $p$, i.e., for $\tau p \le k \le \tau(p+1) - 1$, so we can apply the bound~\eqref{eqn:stochastic-rate-proof-final-recursion} from the proof of \cref{theorem:stochastic-local-GDA}, provided that
\[
    \gamma_{\tau p} \le \frac{1}{\ell\tau + 2(\tau-1) L_{\mathrm{max}} \sqrt{\kappa}} .
\]
This clearly holds true when $p < 2(1+2q)\kappa-1$, and when $p \ge 2(1+2q)\kappa-1$ then
\begin{align*}
    \gamma_{\tau p} = \frac{1}{\tau\mu} \frac{2p+1}{(p+1)^2} < \frac{1}{\tau\mu} \frac{2}{p+1} \le \frac{1}{\tau\mu} \frac{1}{(1+2q)\kappa} = \frac{1}{\ell\tau + 2\tau L_{\mathrm{max}} \sqrt{\kappa}}
\end{align*}
so we see that the step-size condition is satisfied.
Furthermore we have
\begin{align*}
    \zeta_{\tau p} = 2 - \gamma_{\tau p} \ell\tau - 2(\tau - 1)\gamma_{\tau p} L_{\mathrm{max}} \sqrt{\kappa/3} > 1 ,
\end{align*}
so \eqref{eqn:stochastic-rate-proof-final-recursion}, with $q=\frac{L_{\mathrm{max}}}{\sqrt{\ell\mu}}$ and taking expectation with respect to $\vx_{\tau p}$, gives
\begin{align}
    \EE \left[ \sqnorm{\vx_{\tau(p+1)} - \vx_\star} \right] & \le (1 - \gamma_{\tau p} \tau \mu \zeta_{\tau p}) \EE \left[ \sqnorm{\vx_{\tau p} - \vx_\star} \right] \nonumber \\
    & \quad + \gamma_{\tau p}^2 \tau \sigma^2 \left( 1+ (\tau - 1) \left( \gamma_{\tau p} \tau L_{\mathrm{max}} (4 + \sqrt{3} q) + \frac{\sqrt{3}}{2\tau q}  \right) \right) \nonumber \\
    & \le (1 - \gamma_{\tau p} \tau \mu) \EE \left[ \sqnorm{\vx_{\tau p} - \vx_\star} \right] + (1+q) \gamma_{\tau p}^2 \tau \sigma^2 + 4(1+q) \gamma_{\tau p}^3 \tau^2 (\tau-1) L_{\mathrm{max}} \sigma^2 . \label{eqn:stochastic-bound-recursion-with-q}
\end{align}
For $p \ge 2(1+2q)\kappa - 1$, plugging in $\gamma_{\tau p} = \frac{1}{\tau\mu} \frac{2p+1}{(p+1)^2}$ we obtain
\begin{align*}
    \mathbb{E} \left[\sqnorm{\vx_{\tau (p+1)} - \vx_\star}\right] \le \frac{p^2}{(p+1)^2} \EE \left[ \sqnorm{\vx_{\tau p} - \vx_\star} \right] + \frac{(2p+1)^2 \sigma^2 (1+q)}{\tau \mu^2 (p+1)^4} \left( 1 + \frac{4(\tau-1) L_{\mathrm{max}} (2p+1)}{\mu (p+1)^2} \right) .
\end{align*}
Multiplying $\tau^2 (p+1)^2$ to both sides and upper-bounding $\frac{2p+1}{p+1} \le 2$, we obtain
\begin{align*}
    & (\tau(p+1))^2 \mathbb{E} \left[\sqnorm{\vx_{\tau (p+1)} - \vx_\star} \right] \le (\tau p)^2 \mathbb{E} 
    \left[ \sqnorm{\vx_{\tau p} - \vx_\star} \right] + \frac{4(1+q) \tau\sigma^2}{\mu^2} \left( 1 + \frac{8(\tau-1) L_{\mathrm{max}}}{\mu (p+1)} \right) .
\end{align*}
Let $p_0 = \lceil 2(1+2q) \kappa - 1 \rceil$.
Chaining the above inequality for $p = p_0, \dots, R-1$ gives
\begin{align*}
    & (\tau R)^2 \mathbb{E} 
    \left[ \sqnorm{\vx_{\tau R} - \vx_\star} \right] \\
    & \le (\tau p_0)^2 \mathbb{E} 
    \left[ \sqnorm{\vx_{\tau p_0} - \vx_\star} \right] + \frac{4(1+q) \tau (R - p_0) \sigma^2}{\mu^2} + \frac{32(1+q) \tau(\tau-1) L_{\mathrm{max}} \sigma^2}{\mu^3} \sum_{p=p_0}^{R-1} \frac{1}{p+1} \\
    & \le (\tau p_0)^2 \mathbb{E} 
    \left[ \sqnorm{\vx_{\tau p_0} - \vx_\star} \right] + \frac{4(1+q) \tau (R - p_0) \sigma^2}{\mu^2} + 
    \frac{32(1+q) \tau^2 L_{\mathrm{max}} \sigma^2 \log (R/p_0)}{\mu^3}
\end{align*}
where we use $\sum_{p=p_0}^{R-1} \frac{1}{p+1} \le \int_{p_0}^R \frac{dp}{p} = \log \frac{R}{p_0}$.
Now substitute $T = \tau R$ using the upper bounds $\tau(R - p_0) \le \tau R = T$ and $\log (R/p_0) \le \log T$, we can write
\begin{align}
\label{eqn:diminishing-stepsize-bound-with-p0}
    T^2 \mathbb{E} 
    \left[ \sqnorm{\vx_{T} - \vx_\star} \right] & \le (\tau p_0)^2 \mathbb{E} 
    \left[ \sqnorm{\vx_{\tau p_0} - \vx_\star} \right] + \frac{4(1+q)T\sigma^2}{\mu^2} + \frac{32(1+q) \tau^2 L_{\mathrm{max}} \sigma^2 \log T}{\mu^3} .
\end{align}
As $\gamma_k$ is constantly $\gamma_0 = \frac{1}{\ell\tau (1+2q)}$ over rounds $p=0,\dots,p_0-1$, we can directly apply \cref{theorem:stochastic-local-GDA} with $R=p_0$ and similar simplification of the $\sigma^2$-terms as in~\eqref{eqn:stochastic-bound-recursion-with-q} to bound
\begin{align*}
    \mathbb{E} 
    \left[ \sqnorm{\vx_{\tau p_0} - \vx_\star} \right] & \le \left( 1-\frac{\mu}{\ell(1+2q)} \right)^{p_0} \sqnorm{\vx_0 - \vx_\star} + \frac{(1+q) \gamma_0 \sigma^2}{\mu} \left( 1 + 4\gamma_0\tau(\tau-1) L_{\mathrm{max}} \right) \\
    & \le \left( 1-\frac{1}{\kappa(1+2q)} \right)^{\kappa(1+2q)} \sqnorm{\vx_0 - \vx_\star} + \frac{\sigma^2}{\ell\mu\tau} \left( 1 + \frac{4(\tau-1) L_{\mathrm{max}}}{\ell(1+2q)} \right) \\
    & \le \frac{\sqnorm{\vx_0 - \vx_\star}}{e} + \frac{\sigma^2}{\ell\mu\tau} \left( 1 + \frac{2\tau}{\sqrt{\kappa}} \right) ,
\end{align*}
where the second line uses $p_0 \ge 2(1+2q)\kappa - 1 \ge \kappa(1+2q)$, and the third line uses the bound $\left(1-\frac{1}{t}\right)^t \le \frac{1}{e}$ for $t>1$ and $\frac{4(\tau-1) L_{\mathrm{max}}}{\ell(1+2q)} \le \frac{4q\tau\sqrt{\ell\mu}}{\ell(1+2q)} \le 2\tau\sqrt{\frac{\mu}{\ell}} = \frac{2\tau}{\sqrt{\kappa}}$.
Now plugging this into~\eqref{eqn:diminishing-stepsize-bound-with-p0} and dividing both sides by $T^2$ we obtain
\begin{align*}
    & \mathbb{E} 
    \left[ \sqnorm{\vx_{T} - \vx_\star} \right] \\
    & \le \frac{p_0^2 \tau^2 \sqnorm{\vx_0 - \vx_\star}}{e T^2} + \frac{\tau p_0^2 \sigma^2}{\ell\mu T^2} \left( 1 + \frac{2\tau}{\sqrt{\kappa}} \right) + \frac{4(1+q)\sigma^2}{\mu^2 T} + \frac{32(1+q) \tau^2 L_{\mathrm{max}} \sigma^2 \log T}{\mu^3 T^2}  \\
    & \le \frac{4(1+2q)^2 \kappa^2 \tau^2 \sqnorm{\vx_0 - \vx_\star}}{e T^2} + \frac{4(1+q)\sigma^2}{\mu^2 T} + \frac{4(1+2q)^2 \kappa\tau\sigma^2}{\mu^2 T^2} \left( 1 + \frac{2\tau}{\sqrt{\kappa}} \right) + \frac{32(1+q) \tau^2 L_{\mathrm{max}} \sigma^2 \log T}{\mu^3 T^2} .
\end{align*}
which is the desired result.

\newpage

\section{Details of Numerical Experiments}
\label{section:details-of-experiments}

Experiments were conducted using a personal MacBook with an Apple M3 chip and 16GB RAM. 

\subsection{Quadratic $n$-player game} \label{sec:nplayer_game}

We set $n=5$, $d = 10$ and $M = 100$. 
The matrices $\mathbf{A}_{i,m}$ are generated randomly with their eigenvalues in the range $\left[\mu_{\mathbf{A}}, L_{\mathbf{A}}\right]$ ($0 < \mu_{\mathbf{A}} < L_{\mathbf{A}}$). 
Similarly, for $1 \leq i < j \leq n$, we generate the matrices $\mathbf{B}_{i, j, m}$ randomly with their eigenvalues in $\left[0, L_{\mathbf{B}}\right]$. 
Notably, we set $\mathbf{B}_{j,i,m} = -\mathbf{B}_{i,j,m}^\intercal$ for $1 \leq j < i \leq n$.
With this condition, we can ensure that the $n$-player game \eqref{eqn:n_player_objective} satisfies the \hyperref[assumption:quasi-strong-monotonicity]{\textbf{\emph{(QSM)}}} assumption, regardless of the values of $\mu_\vA, L_\vA$ and $L_\vB$.
We show below why this is the case.

Recall that we have
\begin{align*}
    f_i (x^1, \dots, x^n) = \frac{1}{2} \inprod{x^i}{\vA_i x^i} + \inprod{a_i}{x^i} + \sum_{j\ne i} \inprod{x^i}{\vB_{i,j} x^j} 
\end{align*}
for $i=1,\dots,n$. 
Differentiating $f_i$ with respect to $x^i$, we get
\begin{align*}
    \nabla f_i(x^i; x^{-i}) = \vA_i x^i + a_i + \sum_{j\ne i} \vB_{i,j} x^j 
\end{align*}
and thus
\begin{align*}
    \nabla f_i(x^i; x^{-i}) - \nabla f_i(x_\star^i; x_\star^{-i}) & = \left( \vA_i x^i + a_i + \sum_{j\ne i} \vB_{i,j} x^j \right) - \left( \vA_i x_\star^i + a_i + \sum_{j\ne i} \vB_{i,j} x_\star^j \right) \\
    & = \vA_i (x^i - x_\star^i) + \sum_{j\ne i} \vB_{i,j} (x^j - x_\star^j) 
\end{align*}
and 
\begin{align*}
    \inprod{\opF(\vx) - \opF(\vx_\star)}{\vx - \vx_\star} & = \sum_{i=1}^n \inprod{\nabla f_i(x^i; x^{-i}) - \nabla f_i(x_\star^i; x_\star^{-i})}{x^i - x_\star^i} \\
    & = \sum_{i=1}^n \inprod{x^i - x_\star^i}{\vA_i (x^i - x_\star^i)} + \sum_{i=1}^n \sum_{j\ne i} \inprod{x^i - x_\star^i}{\vB_{i,j} (x^j - x_\star^j)} .
\end{align*}
Now, the double summation term vanishes because for any $i\ne j$, 
\begin{align*}
    \inprod{x^i - x_\star^i}{\vB_{i,j} (x^j - x_\star^j)} + \inprod{x^j - x_\star^j}{\vB_{j,i} (x^i - x_\star^i)} = 0 
\end{align*}
due to the condition $\vB_{j,i} = -\vB_{i,j}^\intercal$.
Therefore, provided that each $\vA_i \succeq \mu I$ we see that $\opF$ satisfies \hyperref[assumption:quasi-strong-monotonicity]{\textbf{\emph{(QSM)}}} 
(in fact, the same argument with arbitrary $\vy$ in place of $\vx_\star$ shows that $\opF$ is $\mu$-strongly monotone).

\subsection{Distributed mobile robot control}

We follow the same choice of parameter values $a_i, b_i, x^i_\mathrm{anc}, h_{ij}$ within \eqref{eqn:mobile-robot-objectives} from \citep{KalyvaPsillakis2024_distributed}: $n=5$, $d=1$, $a_i = 10 + i/6$, $b_i = i/6$, 
\begin{align*}
    \left( x^1_\mathrm{anc}, x^2_\mathrm{anc}, x^3_\mathrm{anc}, x^4_\mathrm{anc}, x^5_\mathrm{anc} \right) = (1, -4, 8, -9, 13)
\end{align*}
and
\begin{align*}
    \left( h_{ij} \right)_{\substack{1\le i\le 5 \\ 1 \le j\le 5}} = \begin{bmatrix}
        0 & 5 & -7 & 9 & -8 \\
        -5 & 0 & -6 & 2 & -9 \\
        7 & 6 & 0 & 7 & -4 \\
        -9 & -2 & -7 & 0 & -2 \\
        8 & 9 & 4 & 2 & 0
    \end{bmatrix} .
\end{align*}
We add Gaussian noise with $\sigma^2 = 100$ to the gradients to simulate stochasticity. 
In this setup, all our theoretical assumptions are satisfied \citep{KalyvaPsillakis2024_distributed}. 

\newpage

\section{Additional Experiments}
\label{section:additional-experiments}

\subsection{Quadratic $n$-player game with step-size tuning}
\label{subsection:n-player-tuning-experiment}

In this experiment, we simulate the scenario where we do not know the precise theoretical parameters in advance.
For each $\tau \in \{1, 2, 4, 5, 8, 20\}$, we tune $\gamma$ by running \abbvname with each $\gamma \in \{10^{-1}, 10^{-2}, \dots, 10^{-6}\}$, and plot the best relative error $\nicefrac{\|\vx_{\tau p} - \vx_\star\|^2}{\|\vx_0 - \vx_\star\|^2}$ ($y$-axis) versus the communication round index $p$ ($x$-axis). 
Figure~\ref{fig:nplayer_det_tuned_gamma_tau} presents results from Deterministic \abbvname, and Figure~\ref{fig:nplayer_stoch_tuned_gamma_tau} presents results under stochasticity, imposed by mini-batching from the finite sum.
The results demonstrate that in practice, we can use $(\tau, \gamma)$ as tunable hyperparameters to achieve the best communication complexity.

\begin{figure}[H]
\centering
\begin{subfigure}[b]{0.36\textwidth}
    \centering
    \includegraphics[width=\textwidth]{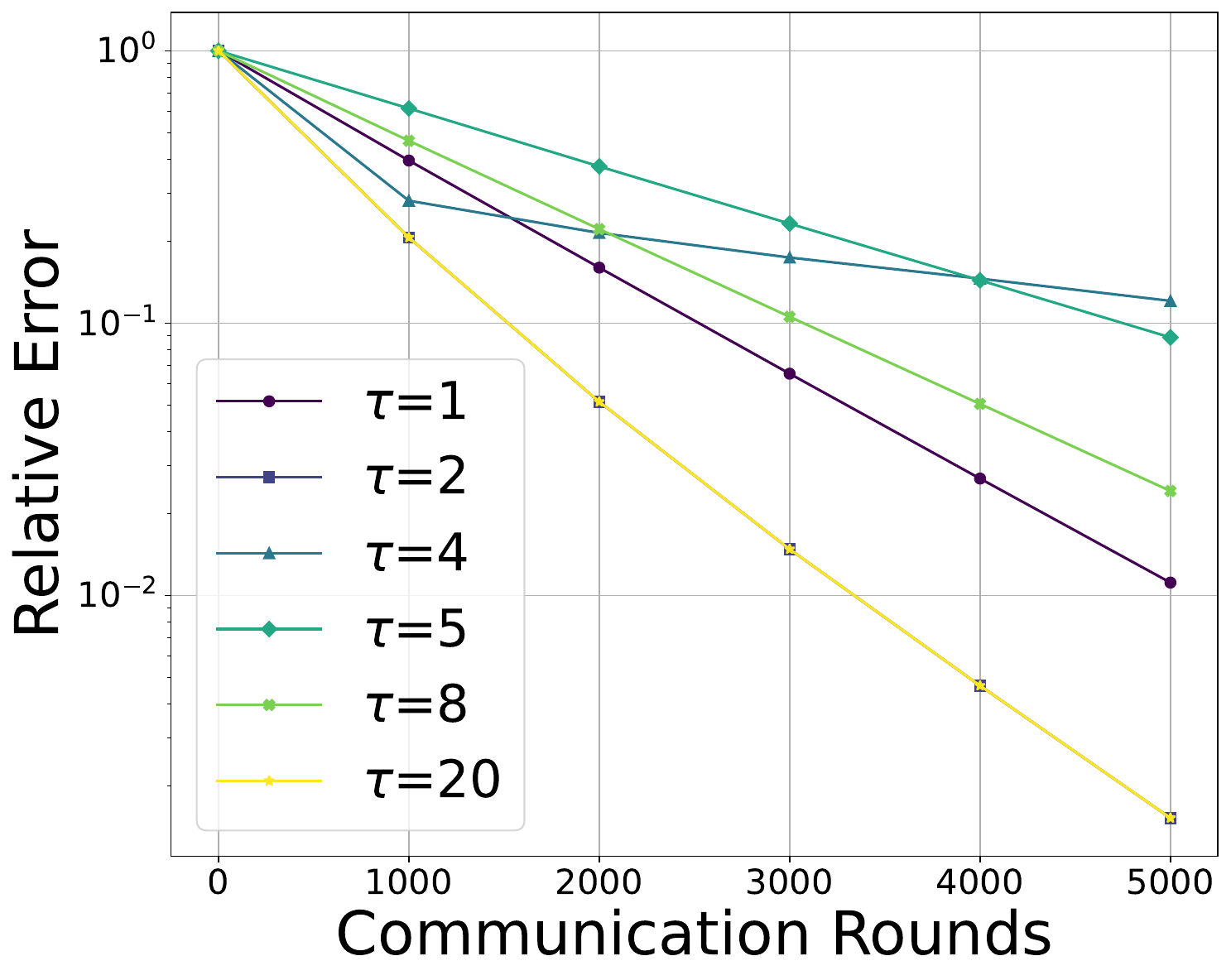}
    \caption{Deterministic setup}\label{fig:nplayer_det_tuned_gamma_tau}
\end{subfigure}
\hspace{10mm}
\begin{subfigure}[b]{0.36\textwidth}
    \centering
    \includegraphics[width=\textwidth]{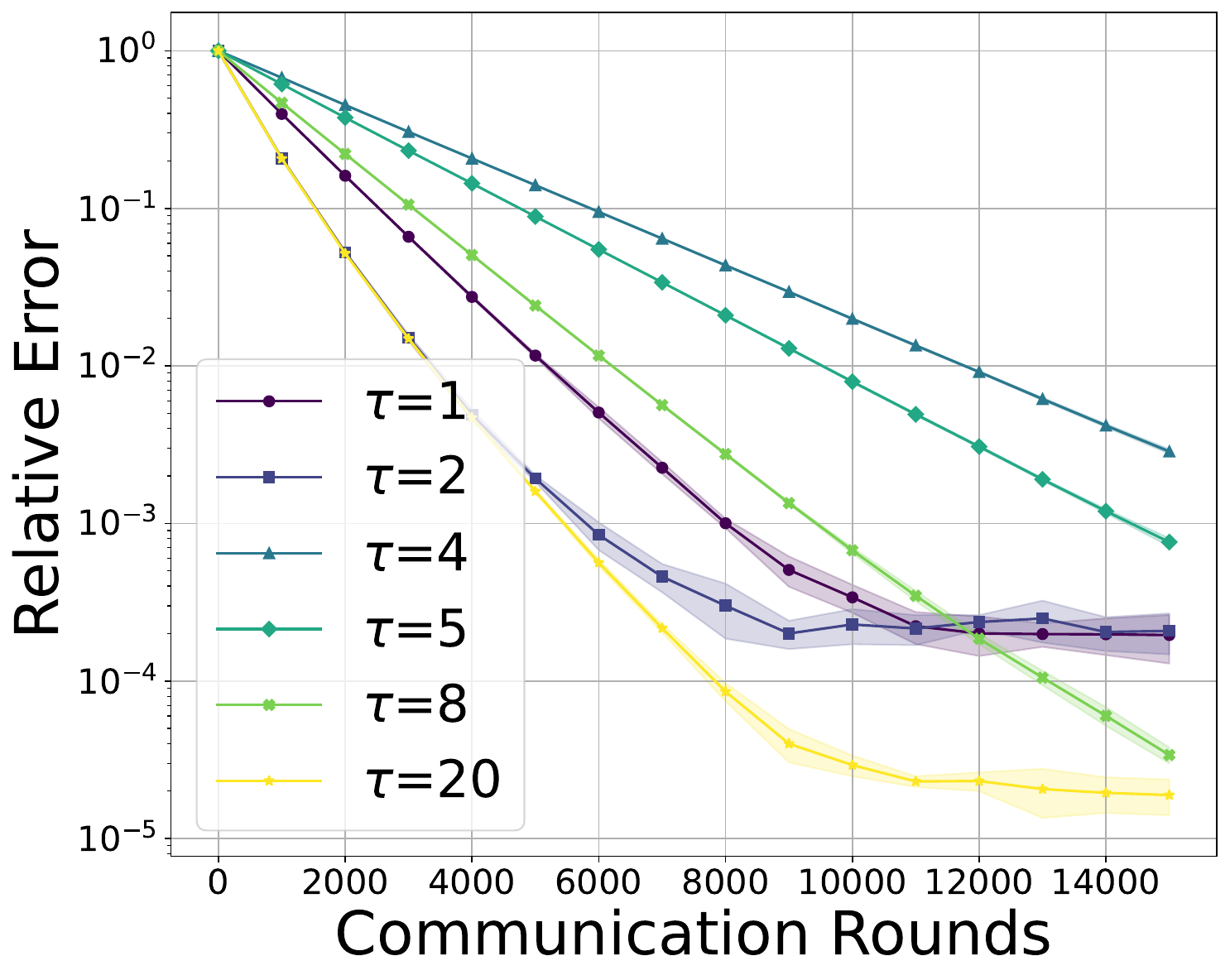}
    \caption{Stochastic setup}\label{fig:nplayer_stoch_tuned_gamma_tau}
\end{subfigure}
    \caption{Performance plots for \abbvname on the $n$-player game~\eqref{eqn:n_player_objective} with different values of $\tau$. 
    For each $\tau$, we use the empirically tuned step-size $\gamma \in \{10^{-1}, 10^{-2}, \dots, 10^{-6}\}$ for the best relative error $\frac{\|\vx_{\tau p} - \vx_\star\|^2}{\|\vx_0 - \vx_\star\|^2}$.
    Figure~\ref{fig:nplayer_det_tuned_gamma_tau} shows the result from deterministic setup and \ref{fig:nplayer_stoch_tuned_gamma_tau} shows the stochastic setup.}
\label{fig:nplayer_tuned}
\end{figure}

\subsection{Distributed mobile robot control}

\begin{wrapfigure}{r}{0.43\textwidth}
    \centering
    \vspace{-12mm}
    \includegraphics[width=0.3\textwidth]{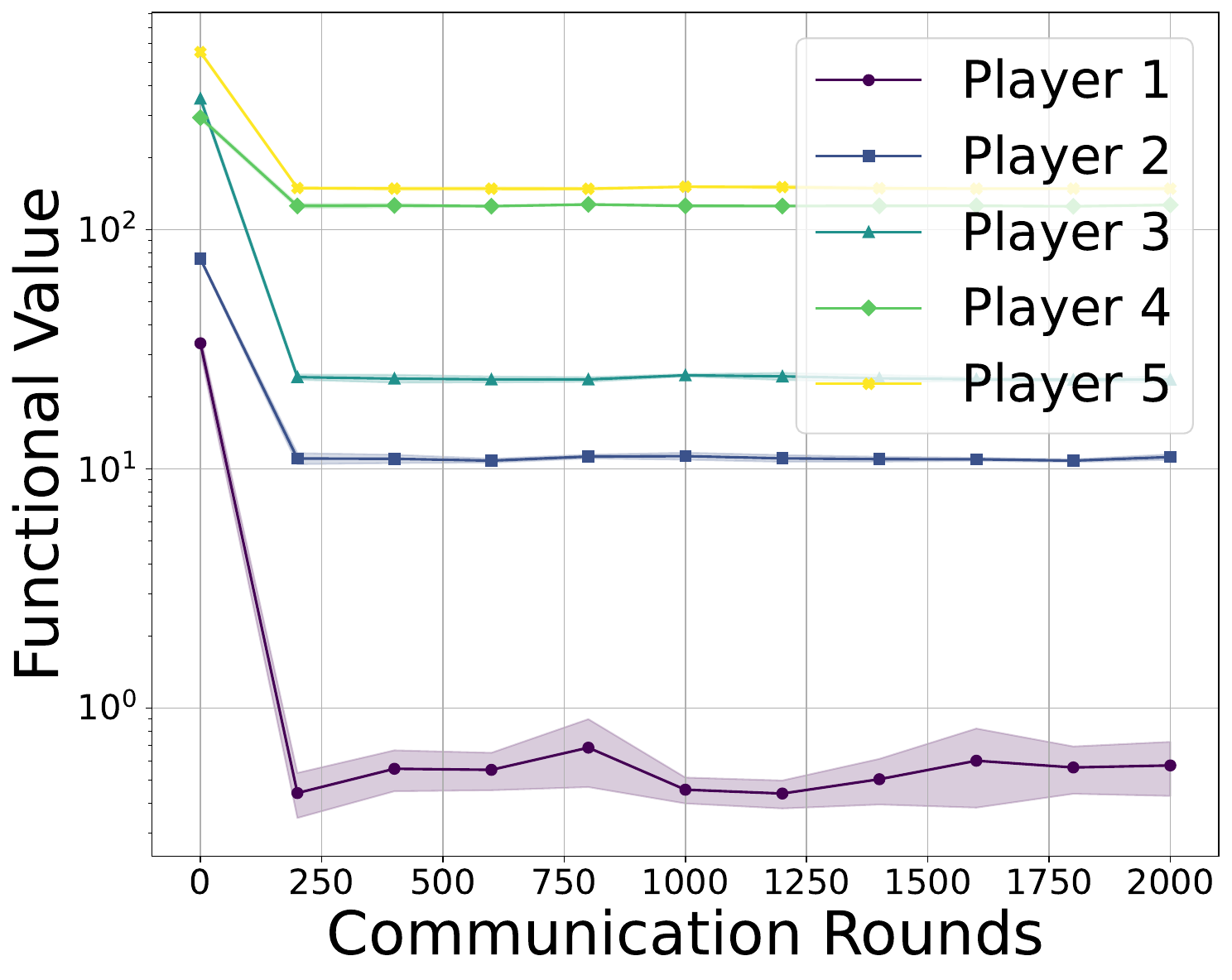}
    \caption{Local objective $f_i$ in mobile robot control setup.}\label{subfig:robot-local-losses}
    \label{fig:robot-local-losses}
    \vspace{-5mm}
\end{wrapfigure}

In \cref{fig:robot-local-losses}, we plot the local objective values $f_i$ in the setup \eqref{eqn:mobile-robot-objectives} obtained from \abbvname for each robot (player) $i=1,\dots,5$ in the case $\tau=5$.
Generally in games, the objectives $f_i$ can have both cooperative and competitive components.
After the cooperative components in each $f_i$ are sufficiently reduced, $f_i$'s can oscillate due to the competing interests of players until an equilibrium is found, and then finally stabilize around a certain level.

\newpage

\section{Discussion on Theoretical Assumptions}
\label{section:assumption-discussion}

\subsection{Possible simplification of assumptions: Assuming cocoercivity of $\opF$}

In fact, the convergence of \algname{\abbvname} can still be proved even if the three assumptions  \hyperref[assumption:convexity]{\textbf{\emph{(CVX)}}}, \hyperref[assumption:smoothness]{\textbf{\emph{(SM)}}} and \hyperref[assumption:star-cocoercivity]{\textbf{\emph{(SCO)}}} are replaced with the single assumption that $\opF\colon \reals^D \to \reals^D$ is $\frac{1}{\ell}$-cocoercive, i.e., 
\begin{align}
    \inprod{\opF(\vx) - \opF(\vy)}{\vx - \vy} \ge \frac{1}{\ell} \sqnorm{\vx - \vy} , \quad \forall \vx, \vy \in \reals^D .
    \label{eqn:cocoercivity}
    \tag{\textbf{\textit{COCO}}}
\end{align}
In the subsequent paragraphs, we explain in detail why this is the case.
However, we emphasize here that if we derived all convergence theory using \eqref{eqn:cocoercivity} in place of \hyperref[assumption:convexity]{\textbf{\emph{(CVX)}}}, \hyperref[assumption:smoothness]{\textbf{\emph{(SM)}}} and \hyperref[assumption:star-cocoercivity]{\textbf{\emph{(SCO)}}} and did not distinguish the role of $L_i$'s (the local Lipschitzness parameters from \hyperref[assumption:smoothness]{\textbf{\emph{(SM)}}}) from that of $\ell$, then the resulting convergence rates would have become much more pessimistic (worse) in many cases.
Therefore, in our work, we choose to use the current set of assumptions.
It allows us to more clearly present the tight dependency of convergence rates to $L_i$'s.
Also note that assuming \hyperref[assumption:convexity]{\textbf{\emph{(CVX)}}}, \hyperref[assumption:smoothness]{\textbf{\emph{(SM)}}} and \hyperref[assumption:star-cocoercivity]{\textbf{\emph{(SCO)}}} is strictly more general than assuming \eqref{eqn:cocoercivity}, as we illustrate in \cref{section:non-cocoercive-example}.

\paragraph{\eqref{eqn:cocoercivity} implies \hyperref[assumption:convexity]{\textbf{\emph{(CVX)}}}, \hyperref[assumption:smoothness]{\textbf{\emph{(SM)}}} and \hyperref[assumption:star-cocoercivity]{\textbf{\emph{(SCO)}}}.}
Trivially, \eqref{eqn:cocoercivity} implies \hyperref[assumption:star-cocoercivity]{\textbf{\emph{(SCO)}}}.
Furthermore, if $\opF$ is $\frac{1}{\ell}$-cocoercive, then $\opF$ is monotone:
\begin{align}
    \inprod{\opF(\vx) - \opF(\vy)}{\vx - \vy} \ge 0, \quad \forall \vx, \vy \in \reals^D ,
    \label{eqn:F-monotone}
\end{align}
and $\ell$-Lipschitz continuous:
\begin{align}
    \norm{\opF(\vx) - \opF(\vy)} \le \ell \norm{\vx - \vy}, \quad \forall \vx, \vy \in \reals^D .
    \label{eqn:F-smooth}
\end{align}
In particular, for each $i=1,\dots,n$, we can take
\begin{align}
    \vx = (x^1,\dots,x^{i-1},x^i,x^{i+1},\dots,x^n) , \quad \vy = (x^1,\dots,x^{i-1},y^i,x^{i+1},\dots,x^n)
    \label{eqn:appendix-vx-vy-choice}
\end{align}
in \eqref{eqn:F-monotone}, which gives
\begin{align*}
    \inprod{\nabla f_i (x^i; x^{-i}) - \nabla f_i (y^i; x^{-i})}{x^i - y^i} \ge 0
\end{align*}
for any $x^i, y^i \in \reals^{d_i}$ and $x^{-i} \in \reals^{D - d_i}$.
That is, the gradient of $f_i(\cdot; x^{-i})\colon \reals^{d_i} \to \reals$ is a monotone operator on $\reals^{d_i}$, and this implies that $f_i(\cdot; x^{-i})$ is convex, i.e., \hyperref[assumption:convexity]{\textbf{\emph{(CVX)}}} holds.
Similarly, plugging the choice \eqref{eqn:appendix-vx-vy-choice} into \eqref{eqn:F-smooth} we obtain
\begin{align*}
    \norm{\nabla f_i (x^i; x^{-i}) - \nabla f_i (y^i; x^{-i})} \le \ell \norm{x^i - y^i} ,
\end{align*}
showing that \hyperref[assumption:smoothness]{\textbf{\emph{(SM)}}} holds, with $L_i = \ell$.
Therefore, all theorems from the main paper hold under the assumptions 
\hyperref[assumption:quasi-strong-monotonicity]{\textbf{\emph{(QSM)}}}, \eqref{eqn:cocoercivity}, and \hyperref[assumption:noise]{\textbf{\emph{(BV)}}}, with $\ell$ in place of $L_{\mathrm{max}}$ in step-size restrictions and convergence rates.

\paragraph{What do we lose by replacing $L_{\mathrm{max}}$ with $\ell$?}
The previous discussion shows that we can assume \eqref{eqn:cocoercivity} and replace all occurrences of $L_{\mathrm{max}}$ with $\ell$ within the theory.
In this case, however, the step-size conditions in Theorems~\ref{theorem:deterministic-local-GDA} and \ref{theorem:stochastic-local-GDA} become
\begin{align}
    \gamma \le \frac{1}{\ell (\tau + 2(\tau - 1)\sqrt{\kappa})} = \cO \left( \frac{1}{\ell\tau\sqrt{\kappa}} \right) ,
\label{eqn:appendix-step-size-bad-case}
\end{align}
and the $\sqrt{\kappa}$ factor in the denominator is undesirable as it significantly restricts the range of step-size one can use if $\kappa$ is large.
Furthermore, in \cref{corollary:stochastic-plog-T-bound} and \cref{theorem:stochastic-plog-diminishing-stepsize}, the factor $q$ becomes $\sqrt{\frac{\ell}{\mu}} = \sqrt{\kappa}$, causing the constant factors in the convergence bounds to potentially become large.

However, there are many cases where $L_{\mathrm{max}} \ll \ell$, showing why it is beneficial to keep the dependency on $L_{\mathrm{max}}$ tight as we do.
As an abstract example, when $\opF$ is a generic $\mu$-strongly monotone and $M$-Lipschitz continuous operator, the tight (smallest) cocoercivity parameter one can guarantee on $\opF$ is $\ell = M^2/\mu$ \citep{Facchinei2003FiniteDimensionalVI} (tightness can be shown using, e.g., the scaled relative graph theory in \citep{ryu2022scaled}, \citep[Chapter~13]{ryu2022large}).
On the other hand, we have
\[
    L_{\mathrm{max}} \le \underset{i=1,\dots,n}{\max} \, \underset{\substack{\vx = (x^i,x^{-i}), \vy = (y^i,x^{-i}) \\ x^i \ne y^i}}{\sup}  \frac{\norm{\opF(\vx)-\opF(\vy)}}{\norm{\vx-\vy}} \le \underset{\vx \ne \vy}{\sup} \frac{\norm{\opF(\vx)-\opF(\vy)}}{\norm{\vx-\vy}} = M ,
\]
i.e., $M$ is an upper bound on $L_{\mathrm{max}}$ (better than $\ell$).
Therefore, $\ell$ is at least $\frac{\ell}{M} = \frac{\ell}{\sqrt{\ell\mu}} = \sqrt{\kappa}$ times larger than $L_{\mathrm{max}}$, and the largest step-size allowed in Theorems~\ref{theorem:deterministic-local-GDA} and \ref{theorem:stochastic-local-GDA} is
\begin{align*}
    \frac{1}{\ell\tau + 2(\tau - 1) L_{\mathrm{max}} \sqrt{\kappa}} = \Omega \left( \frac{1}{\ell\tau} \right)
\end{align*}
which is in contrast with \eqref{eqn:appendix-step-size-bad-case} where we used $\ell$ in place of $L_{\mathrm{max}}$ and obtained $\sqrt{\kappa}$ times smaller step-size range.
Additionally, note that in this case $q = \frac{L_{\mathrm{max}}}{\sqrt{\ell\mu}} = \frac{L_{\mathrm{max}}}{M} \le 1$ in \cref{corollary:stochastic-plog-T-bound} and \cref{theorem:stochastic-plog-diminishing-stepsize}, so we can avoid the $\kappa$-dependent factors appearing in the convergence results.

We demonstrate another problem class for which $L_{\mathrm{max}} \ll \ell$.
Consider a two-player matrix game, regularized by adding strongly convex (resp.\ strongly concave) quadratic terms in $x$ (resp.\ $y$):
\begin{align}
\label{eqn:quadratic-minimax-game}
    \underset{u \in \reals^m}{\textrm{minimize}} \,\, \underset{v \in \reals^m}{\textrm{maximize}} \,\, \cL(u,v) = \frac{\mu}{2} \sqnorm{u} + g^\intercal u + u^\intercal \vB v - h^\intercal v - \frac{\mu}{2} \sqnorm{v}
\end{align}
where $\vB \in \reals^{m\times m}, g,h \in \reals^m$.
In our $n$-player game notation, the first and second players respectively use the objective function $f_1(x^1;x^2) = \cL(x^1,x^2)$ and $f_2(x^2;x^1) = -\cL(x^1,x^2)$.
In this case, the operator $\opF$ is $\mu$-strongly monotone with $\mu$ and $M$-Lipschitz continuous with parameter $M \ge \sqrt{\norm{\vB}_2^2 + \mu^2} \ge \norm{\vB}_2$.
Note that the cocoercivity parameter $\ell$ is at least $M$ (and at most $M^2/\mu$).
On the other hand, 
\[
    \nabla f_1 (x^1;x^2) = \mu x^1 + g + \vB x^2, \quad \nabla f_2 (x^2;x^1) = \mu x^2 + h - \vB^\intercal x^1 ,
\]
so the Lipschitz constant for $\nabla f_1$ with $x^2$ fixed (resp.\ $\nabla f_2$ with $x^1$ fixed) is $\mu$, i.e., $L_{\mathrm{max}} = \mu$.
Therefore, we have $L_{\mathrm{max}} \ll \ell$ in this scenario, as strength of regularization $\mu$ is usually small compared to the smoothness parameter $M$.
The same principle applies to the $n$-player analogue of this setup we use in \cref{sec:n-player-experiment}, where each player has the objective function
\begin{align*}
    f_i (x^i; x^{-i}) = \frac{1}{2} \inprod{x^i}{\vA_i x^i} +  \inprod{a_i}{x^i} + \sum_{\substack{1\le j \le n \\ j \ne i}} \inprod{x^i}{\vB_{i,j} x^j} 
\end{align*}
with $\vB_{j,i} = -\vB_{i,j}^\intercal$.
If the quadratic terms are the small regularization terms introduced to induce convergence, so that $\vA_i = \mu\vI$ with $\mu \ll \norm{\vB_{i,j}}_2$, then we have $L_{\mathrm{max}} = \mu \ll \max_{i\ne j} \norm{\vB_{i,j}}_2 \le \ell$.

\subsection{Example of non-cocoercive $\opF$ satisfying \hyperref[assumption:convexity]{\textbf{\emph{(CVX)}}}, \hyperref[assumption:smoothness]{\textbf{\emph{(SM)}}}, \hyperref[assumption:quasi-strong-monotonicity]{\textbf{\emph{(QSM)}}} and \hyperref[assumption:star-cocoercivity]{\textbf{\emph{(SCO)}}}}
\label{section:non-cocoercive-example}

Consider the two-player game where two players have the objectives
\begin{align*}
    f_1 (u;v) & = \frac{u^2}{2} \varphi(v) \\
    f_2 (v;u) & = \frac{v^2}{2} \varphi(u)
\end{align*}
where $\varphi\colon \reals \to \reals$ is defined by
\begin{align*}
    \varphi(t) = \left( \mu + (\ell - \mu) \sin^2 t \right) .
\end{align*}
Here $0 < \mu < \ell$, and we use the notation $\vx = (u,v) \in \reals \times \reals$ instead of $\vx = (x^1, x^2)$ for better readability.
Note that because $\varphi$ satisfies
\[
    0 < \mu \le \varphi(t) \le \ell , \quad \forall t \in \reals ,
\]
$f_1(\cdot, v) \colon \reals \to \reals$ is convex (quadratic) for any $v \in \reals$, and so is $f_2(u, \cdot)$ for any $u \in \reals$.
Therefore, this game satisfies \hyperref[assumption:convexity]{\textbf{\emph{(CVX)}}}.
For any $\vx = (u,v)$, we have
\begin{align*}
    \opF(\vx) = \left( \nabla_u f_1(u;v) , \nabla_v f_2(v;u) \right) = \left( u\varphi(v) , v\varphi(u) \right) .
\end{align*}
Therefore, the unique equilibrium of the game is $\vx_\star = (u_\star, v_\star) = (0, 0)$.
Additionally, observe that
\begin{align*}
    \nabla_{uu} f_1(u;v) = \varphi (v) \in [\mu, \ell] , \quad \nabla_{vv} f_2(v;u) = \varphi (u) \in [\mu, \ell] .
\end{align*}
In particular, the both second derivatives are bounded, so \hyperref[assumption:smoothness]{\textbf{\emph{(SM)}}} is satisfied.
Next, we have
\begin{align*}
    \inprod{\opF(\vx)}{\vx - \vx_\star} = u^2 \varphi(v) + v^2 \varphi(u) \ge \mu (u^2 + v^2) = \mu \sqnorm{\vx - \vx_\star} ,
\end{align*}
i.e., $\opF$ satisfies \hyperref[assumption:quasi-strong-monotonicity]{\textbf{\emph{(QSM)}}}.
Finally, we have
\begin{align*}
    \sqnorm{\opF(\vx)} = u^2 \varphi(v)^2 + v^2 \varphi(u)^2 \le \max\{ \varphi(v), \varphi(u) \} \left( u^2 \varphi(v) + v^2 \varphi(u) \right)\le \ell \inprod{\opF(\vx)}{\vx - \vx_\star} ,
\end{align*}
showing that $\opF$ satisfies \hyperref[assumption:star-cocoercivity]{\textbf{\emph{(SCO)}}}.

On the other hand, $\opF$ is not cocoercive with respect to any parameter; in fact, it is not even Lipschitz continuous nor monotone.
Observe that the cross-derivatives
\begin{align*}
    \nabla_{uv} f_1 (u;v) = (\ell-\mu) u \sin (2v), \quad \nabla_{vu} f_2 (u;v) = (\ell-\mu) v \sin (2u)
\end{align*}
are unbounded over $\reals \times \reals$, so $\opF$ is cannot be Lipschitz continuous with any fixed parameter.
Furthermore, we have
\begin{align*}
    \left( D\opF + D\opF^\intercal \right) (u,v) =
    \begin{bmatrix}
        2\varphi(v) & (\ell-\mu) (u \sin (2v) + v \sin (2u)) \\
        (\ell-\mu) (u \sin (2v) + v \sin (2u)) & 2\varphi(u)
    \end{bmatrix}
\end{align*}
so with $u = v = \left(2N + \frac{1}{2}\right)\pi$, we have
\begin{align*}
    \det \left( D\opF + D\opF^\intercal \right) (u,v) & = 
    4 \varphi^2 \left( \left( 2N + \frac{1}{2} \right) \pi \right) - 4(\ell-\mu)^2 \left(2N + \frac{1}{2}\right)^2 \pi^2 \\
    & = 4\ell^2 - 4(\ell-\mu)^2 \left(2N + \frac{1}{2}\right)^2 \pi^2 \\
    & < 0
\end{align*}
provided that $N$ is sufficiently large.
As a differentiable operator $\opF$ is monotone if and only if $D\opF + D\opF^\intercal \succeq 0$ everywhere \citep{ryu2022large}, this shows that $\opF$ is not monotone.

Note that while we provided a two-player example for simplicity, one can easily use the essentially same ideas to construct a non-cocoercive $n$-player game satisfying our assumptions with any $n>2$.
For example, we can choose $f_i(x^i; x^{-i}) = \frac{(x^i)^2}{2} \varphi(x^{i+1})$ where we identify $x^{n+1} = x^1$.

\end{document}